\documentclass[10pt,journal,compsoc]{IEEEtran}
% If IEEEtran.cls has not been installed into the LaTeX system files,
% manually specify the path to it like:
% \documentclass[10pt,journal,compsoc]{../sty/IEEEtran}

% For Computer Society journals, IEEEtran defaults to the use of 
% Palatino/Palladio as is done in IEEE Computer Society journals.
% To go back to Times Roman, you can use this code:
%\renewcommand{\rmdefault}{ptm}\selectfont

% Some very useful LaTeX packages include:
% (uncomment the ones you want to load)

% *** MISC UTILITY PACKAGES ***
%
%\usepackage{ifpdf}
% Heiko Oberdiek's ifpdf.sty is very useful if you need conditional
% compilation based on whether the output is pdf or dvi.
% usage:
% \ifpdf
%   % pdf code
% \else
%   % dvi code
% \fi
% The latest version of ifpdf.sty can be obtained from:
% http://www.ctan.org/pkg/ifpdf
% Also, note that IEEEtran.cls V1.7 and later provides a builtin
% \ifCLASSINFOpdf conditional that works the same way.
% When switching from latex to pdflatex and vice-versa, the compiler may
% have to be run twice to clear warning/error messages.

% *** CITATION PACKAGES ***
%
\ifCLASSOPTIONcompsoc
  % The IEEE Computer Society needs nocompress option
  % requires cite.sty v4.0 or later (November 2003)
  \usepackage[nocompress]{cite}
\else
  % normal IEEE
  \usepackage{cite}
\fi
% cite.sty was written by Donald Arseneau
% V1.6 and later of IEEEtran pre-defines the format of the cite.sty package
% \cite{} output to follow that of the IEEE. Loading the cite package will
% result in citation numbers being automatically sorted and properly
% "compressed/ranged". e.g., [1], [9], [2], [7], [5], [6] without using
% cite.sty will become [1], [2], [5]--[7], [9] using cite.sty. cite.sty's
% \cite will automatically add leading space, if needed. Use cite.sty's
% noadjust option (cite.sty V3.8 and later) if you want to turn this off
% such as if a citation ever needs to be enclosed in parenthesis.
% cite.sty is already installed on most LaTeX systems. Be sure and use
% version 5.0 (2009-03-20) and later if using hyperref.sty.
% The latest version can be obtained at:
% http://www.ctan.org/pkg/cite
% The documentation is contained in the cite.sty file itself.
%
% Note that some packages require special options to format as the Computer
% Society requires. In particular, Computer Society  papers do not use
% compressed citation ranges as is done in typical IEEE papers
% (e.g., [1]-[4]). Instead, they list every citation separately in order
% (e.g., [1], [2], [3], [4]). To get the latter we need to load the cite
% package with the nocompress option which is supported by cite.sty v4.0
% and later.

% *** GRAPHICS RELATED PACKAGES ***
%
\ifCLASSINFOpdf
  % \usepackage[pdftex]{graphicx}
  % declare the path(s) where your graphic files are
  % \graphicspath{{../pdf/}{../jpeg/}}
  % and their extensions so you won't have to specify these with
  % every instance of \includegraphics
  % \DeclareGraphicsExtensions{.pdf,.jpeg,.png}
\else
  % or other class option (dvipsone, dvipdf, if not using dvips). graphicx
  % will default to the driver specified in the system graphics.cfg if no
  % driver is specified.
  % \usepackage[dvips]{graphicx}
  % declare the path(s) where your graphic files are
  % \graphicspath{{../eps/}}
  % and their extensions so you won't have to specify these with
  % every instance of \includegraphics
  % \DeclareGraphicsExtensions{.eps}
\fi
\usepackage{array}
\usepackage{url}
\usepackage{amsmath}
\usepackage{amsthm}
\usepackage{bm}
\usepackage{mathtools}
\usepackage{amssymb}
\usepackage{graphicx}
\usepackage{algorithm}
\usepackage{algpseudocode}
\usepackage{varwidth}
\usepackage{tabularx}
\usepackage{subcaption}
\usepackage{siunitx}

\usepackage{mathtools}
\DeclarePairedDelimiter{\ceil}{\lceil}{\rceil}

% Here, you can define your own macros. Some examples are given below.

% \newcommand{\R}[0]{\mathds{R}} % real numbers
% \newcommand{\Z}[0]{\mathds{Z}} % integers
% \newcommand{\N}[0]{\mathds{N}} % natural numbers
% \newcommand{\C}[0]{\mathds{C}} % complex numbers
% \renewcommand{\vec}[1]{{\boldsymbol{{#1}}}} % vector
% \newcommand{\mat}[1]{{\boldsymbol{{#1}}}} % matrix

% -------- THEOREM ENVIRONMENTS --------

\newtheoremstyle{break}
  {\topsep}{\topsep}%
  {\itshape}{}%
  {\bfseries}{}%
  {\newline}{}%

\theoremstyle{plain}
\newtheorem{thm}{Theorem}[section] % reset theorem numbering for each chapter

\newtheorem{pstion}{Proposition}[section]

\theoremstyle{definition}
\newtheorem{defn}[thm]{Definition} % definition numbers are dependent on theorem numbers
 % same for example numbers

\theoremstyle{break}
 % Definition with line break after header

% ---------------------------------------

% Custom citation

%%%%%%%%%%%%%%%%%%%%%%%%%%%%%%%%%%%%%%%%%
% Typography for vectors, matrices and tensors
%%%%%%%%%%%%%%%%%%%%%%%%%%%%%%%%%%%%%%%%%

\newcommand*\mat[1]{\mathbf{#1}}
\newcommand*\vv[1]{\mathbf{#1}}
\newcommand*\ten[1]{\bm{\mathcal{#1}}}

%%%%%%%%%%%%%%%%%%%%%%%%%%%%%%%%%%%%%%%%%
% Commands for operators
%%%%%%%%%%%%%%%%%%%%%%%%%%%%%%%%%%%%%%%%%

\newcommand{\grad}{\nabla}
\newcommand{\shrink}{\mathcal{S}}
\newcommand{\vvec}{\mathrm{vec}}

\newcommand*\inner[2]{\langle #1, \, #2 \rangle}

\newcommand*\trp[1]{ #1^{\mathsf{T}} }

\newcommand*\sprox[3]{\mathrm{prox}_{#1 #2}(#3)}
\newcommand*\prox[2]{\mathrm{prox}_{#1}(#2)}

\newcommand*\rk[1]{\mathrm{rank}(#1)}
\newcommand*\spn[1]{\mathrm{Span}(#1)}

% Vidal
\def\Image{\textnormal{Im}}

\DeclareMathOperator*{\argmin}{argmin}
\DeclareMathOperator*{\diag}{diag}
%%%%%%%%%%%%%%%%%%%%%%%%%%%%%%%%%%%%%%%%
% Notational shortcuts
%%%%%%%%%%%%%%%%%%%%%%%%%%%%%%%%%%%%%%%%%

\newcommand{\st}{\mathrm{s.t}}

\newcommand{\RR}{\mathbb{R}}
\newcommand{\NN}{\mathbb{N}}

\newcommand{\bSigma}{\bm{\Sigma}}

% \newcommand{\ie}{\textit{i.e,}~}
% \newcommand{\eg}{\textit{e.g,}~}

%%%%%%%%%%%%%%%%%%%%%%%%%%%%%%%%%%%%%%%%
% Norms
%%%%%%%%%%%%%%%%%%%%%%%%%%%%%%%%%%%%%%%%%

\newcommand*\Fro[1]{ || #1 ||_{\mathrm{F}} }
\newcommand*\Nuc[1]{ || #1 ||_{*} }
\newcommand*\One[1]{ || #1 ||_{1} }
\newcommand*\Two[1]{ || #1 ||_{2} }
\newcommand*\Twosq[1]{ || #1 ||_{2}^2 }

\newcommand*\Frosq[1]{ || #1 ||_{\mathrm{F}}^2 }

%%%%%%%%%%%%%%%%%%%%%%%%%%%%%%%%%%%%%%%%
% Matrices and elements for the Bayesian model
%%%%%%%%%%%%%%%%%%%%%%%%%%%%%%%%%%%%%%%%%

\newcommand{\A}{\mat{A}}
\newcommand{\At}{\trp{\A}}
\newcommand{\B}{\mat{B}}
\newcommand{\Bt}{\trp{\B}}
\newcommand{\C}{\mat{C}}
\newcommand{\D}{\mat{D}}
\newcommand{\R}{\mat{R}}
\newcommand{\G}{\mat{G}}
\newcommand{\mH}{\mat{H}}

\newcommand{\X}{\mat{X}}
\newcommand{\Y}{\mat{Y}}
\newcommand{\Z}{\mat{Z}}
\newcommand{\T}{\mat{T}}
\newcommand{\E}{\mat{E}}
\newcommand{\LL}{\mat{\Lambda}}
\newcommand{\II}{\mat{I}}
\newcommand{\K}{\mat{K}}
\newcommand{\U}{\mat{U}}
\newcommand{\V}{\mat{V}}

\newcommand{\mS}{\mat{S}}

\newcommand{\Vt}{\trp{\mat{V}}}

\newcommand{\Ri}{\mat{R}_i}

\newcommand{\Ei}{\mat{E}_i}

\newcommand{\XXi}{\mat{X}_i}

\newcommand{\Yi}{\mat{Y}_i}

\newcommand{\LLi}{\mat{\Lambda}_i}
\newcommand{\Ki}{\mat{K}_i}

%%%%%%%%%%%%%%%%%%%%%%%%%%%%%%%%%%%%%%%%%
%%          TENSOR NOTATIONS           %%
%%%%%%%%%%%%%%%%%%%%%%%%%%%%%%%%%%%%%%%%%
\newcommand{\tX}{\ten{X}}

\newcommand{\tE}{\ten{E}}
\newcommand{\tL}{\ten{L}}

\newcommand{\tG}{\ten{G}}

\newcommand{\tY}{\ten{Y}}

\newcommand{\tK}{\ten{K}}
\newcommand{\tR}{\ten{R}}

\newcommand*\matr[2]{\mathbf{#1}_{[#2]}}

%%%%%%%%%%%%%%%

\newcommand\Tstrut{\rule{0pt}{2.6ex}}         % = `top' strut
\newcommand\Bstrut{\rule[-0.9ex]{0pt}{0pt}}   % = `bottom' strut

\hyphenation{op-tical net-works semi-conduc-tor}

\begin{document}
%
% paper title
% Titles are generally capitalized except for words such as a, an, and, as,
% at, but, by, for, in, nor, of, on, or, the, to and up, which are usually
% not capitalized unless they are the first or last word of the title.
% Linebreaks \\ can be used within to get better formatting as desired.
% Do not put math or special symbols in the title.
\title{Robust Kronecker Component Analysis}
%
%
% author names and IEEE memberships
% note positions of commas and nonbreaking spaces ( ~ ) LaTeX will not break
% a structure at a ~ so this keeps an author's name from being broken across
% two lines.
% use \thanks{} to gain access to the first footnote area
% a separate \thanks must be used for each paragraph as LaTeX2e's \thanks
% was not built to handle multiple paragraphs
%
%
%\IEEEcompsocitemizethanks is a special \thanks that produces the bulleted
% lists the Computer Society journals use for "first footnote" author
% affiliations. Use \IEEEcompsocthanksitem which works much like \item
% for each affiliation group. When not in compsoc mode,
% \IEEEcompsocitemizethanks becomes like \thanks and
% \IEEEcompsocthanksitem becomes a line break with idention. This
% facilitates dual compilation, although admittedly the differences in the
% desired content of \author between the different types of papers makes a
% one-size-fits-all approach a daunting prospect. For instance, compsoc 
% journal papers have the author affiliations above the "Manuscript
% received ..."  text while in non-compsoc journals this is reversed. Sigh.

\author{Mehdi~Bahri,~\IEEEmembership{Student Member,~IEEE,}
        Yannis~Panagakis,
        and~Stefanos~Zafeiriou,~\IEEEmembership{Member,~IEEE}% <-this % stops a space
\IEEEcompsocitemizethanks{\IEEEcompsocthanksitem The authors are with the Department
of Computing, Imperial College London, London,
UK, SW7 2RH.\protect\\
% note need leading \protect in front of \\ to get a newline within \thanks as
% \\ is fragile and will error, could use \hfil\break instead.
E-mail: mehdi.bahri15@imperial.ac.uk
\IEEEcompsocthanksitem Y. Panagakis is also with Middlesex University London, UK.% <-this % stops a space
\IEEEcompsocthanksitem S. Zafeiriou is also with the University of Oulu, Finland.}% <-this % stops a space
\thanks{Manuscript received April 19, 2005; revised August 26, 2015.}}

% note the % following the last \IEEEmembership and also \thanks - 
% these prevent an unwanted space from occurring between the last author name
% and the end of the author line. i.e., if you had this:
% 
% \author{....lastname \thanks{...} \thanks{...} }
%                     ^------------^------------^----Do not want these spaces!
%
% a space would be appended to the last name and could cause every name on that
% line to be shifted left slightly. This is one of those "LaTeX things". For
% instance, "\textbf{A} \textbf{B}" will typeset as "A B" not "AB". To get
% "AB" then you have to do: "\textbf{A}\textbf{B}"
% \thanks is no different in this regard, so shield the last } of each \thanks
% that ends a line with a % and do not let a space in before the next \thanks.
% Spaces after \IEEEmembership other than the last one are OK (and needed) as
% you are supposed to have spaces between the names. For what it is worth,
% this is a minor point as most people would not even notice if the said evil
% space somehow managed to creep in.

% The paper headers
\markboth{Journal of \LaTeX\ Class Files,~Vol.~14, No.~8, August~2015}%
{Shell \MakeLowercase{\textit{et al.}}: Bare Advanced Demo of IEEEtran.cls for IEEE Computer Society Journals}
\IEEEtitleabstractindextext{%
\begin{abstract}
Dictionary learning and component analysis models are fundamental for learning compact representations that are relevant to a given task (feature extraction, dimensionality reduction, denoising, etc.). The model complexity is encoded by means of specific structure, such as sparsity, low-rankness, or nonnegativity.
Unfortunately, approaches like K-SVD - that learn dictionaries for sparse coding via Singular Value Decomposition (SVD) - are hard to scale to high-volume and high-dimensional visual data, and fragile in the presence of outliers. Conversely, robust component analysis methods such as the Robust Principal Component Analysis (RPCA) are able to recover low-complexity (e.g., low-rank) representations from data corrupted with noise of unknown magnitude and support, but do not provide a dictionary that respects the structure of the data (e.g., images), and also involve expensive computations. In this paper, we propose a novel Kronecker-decomposable component analysis model, coined as Robust Kronecker Component Analysis 
(RKCA), that combines ideas from sparse dictionary learning and robust component analysis. RKCA has several appealing properties, including robustness to gross corruption; it can be used for low-rank modeling, and leverages separability to solve significantly smaller problems. We design an efficient learning algorithm by drawing links with a restricted form of tensor factorization, and analyze its optimality and low-rankness properties. The effectiveness of the proposed approach is demonstrated on real-world applications, namely  background subtraction and image denoising and completion, by performing  a thorough comparison with the current state of the art.
\end{abstract}

% Note that keywords are not normally used for peerreview papers.
\begin{IEEEkeywords}
Component Analysis, Dictionary Learning, Separable Dictionaries, Low-rank, Sparsity, Global Optimality.
\end{IEEEkeywords}}

% make the title area
\maketitle

% To allow for easy dual compilation without having to reenter the
% abstract/keywords data, the \IEEEtitleabstractindextext text will
% not be used in maketitle, but will appear (i.e., to be "transported")
% here as \IEEEdisplaynontitleabstractindextext when compsoc mode
% is not selected <OR> if conference mode is selected - because compsoc
% conference papers position the abstract like regular (non-compsoc)
% papers do!
\IEEEdisplaynontitleabstractindextext
% \IEEEdisplaynontitleabstractindextext has no effect when using
% compsoc under a non-conference mode.

% For peer review papers, you can put extra information on the cover
% page as needed:
% \ifCLASSOPTIONpeerreview
% \begin{center} \bfseries EDICS Category: 3-BBND \end{center}
% \fi
%
% For peerreview papers, this IEEEtran command inserts a page break and
% creates the second title. It will be ignored for other modes.
\IEEEpeerreviewmaketitle

% Introduction from the ICCV paper - TO BE MODIFIED
%%%%%%%%% BODY TEXT
\ifCLASSOPTIONcompsoc
\IEEEraisesectionheading{\section{Introduction}\label{sec:introduction}}
\else
\section{Introduction}
\label{sec:introduction}
\fi

\IEEEPARstart{C}{omponent} analysis models and representation learning methods are flexible data-driven alternatives to analytical dictionaries (e.g., Fourier analysis, or wavelets) for signal and data representation. The underlying principle is to solve an optimization problem that encodes the desired properties of the representations and of the bases, and describes what task(s) the representation should help solving. Starting from Principal Component Analysis \cite{pearson_lines_1901,hotelling_analysis_1933}, a rich set of algorithms has been developed for feature extraction, dimensionality reduction, clustering, classification, or denoising - to name but a few. The importance of learned components and representations cannot be overstated, and neither can their effectiveness in dramatically improving machine perception. Prominent examples are Convolutional Neural Networks \cite{y._lecun_backpropagation_1989,y._lecun_gradient-based_1998}, which through hierarchical feature extraction build ad-hoc representations enabling state of the art performance on a wide range of problems \cite{lecun_deep_2015}.

In this work, we propose the Robust Kronecker Component Analysis (RKCA) family of algorithms for the unsupervised learning of compact representations of tensor data. Our method offers to bridge (multilinear) Robust PCA \cite{n._xue_robust_2017,goldfarb_robust_2014} and Sparse Dictionary Learning \cite{r._rubinstein_dictionaries_2010,olshausen_sparse_1997} from the perspective of a robust low-rank tensor factorization. Although our method is generic enough to be presented for arbitrary tensors, we focus on 3-dimensional tensors, and especially those obtained by concatenation of 2-dimensional tensor observations (i.e., matrices). We present a framework for jointly learning a (Kronecker) separable dictionary and sparse representations in the presence of outliers, such that the learned dictionary is low-rank, and the outliers are separated from the data. The double perspective adopted allows us to draw on recent work from the tensor factorization literature to provide some theoretical optimality guarantees, discussed in Section \ref{sec:kdrsdl_degree_three}.

\subsection{Robust PCA and sparse dictionary learning}

Assuming a set of $N$ data samples  $\mathbf{x}_1,\ldots,\mathbf{x}_n \in \mathbb{R}^{m}$ represented as the columns of a matrix $\mathbf{X}$, structured matrix factorizations seek to decompose $\X$ into meaningful components of a given structure, by solving a regularization problem of the form:
\begin{equation}\label{eq:structured_mf_generic}
\min_\Z l(\X, \Z) + \lambda\:g(\Z),
\end{equation}
where $\Z$ is an approximation of the data with respect to a loss $l(\cdot)$, $g(\cdot)$ is a possibly non-smooth regularizer that encourages the desired structure, and $\lambda \geq 0$ is a regularization parameter balancing the two terms. Popular instances of (\ref{eq:structured_mf_generic}) include
Principal Component Analysis (PCA) \cite{pearson_lines_1901,hotelling_analysis_1933} and its variants, e.g., Sparse PCA \cite{zou_sparse_2006}, Robust PCA (RPCA) \cite{candes_robust_2011}, as well as 
sparse dictionary learning \cite{r._rubinstein_dictionaries_2010, j._wright_sparse_2010, olshausen_sparse_1997}.

Concretely, when $\Z$ is taken to be factorized as $\Z = \D\R$ we obtain a range of different models depending on the choice of the regularization and of the properties of $\D$. 
For instance, assuming $\Z= \D\R$ is a low-rank approximation of $\X$ and $\lambda=0$, (\ref{eq:structured_mf_generic}) yields PCA, while by imposing sparsity on $\D$, Sparse PCA \cite{zou_sparse_2006} is obtained.
To handle data corrupted by sparse noise of large magnitude, RPCA \cite{candes_robust_2011} assumes that the observation matrix, $\mathbf{X}$, is the sum of a low-rank matrix $\mathbf{A}$ and of a sparse matrix $\mathbf{E}$ that collects the gross errors, or outliers. This model is actually a special instance of (\ref{eq:structured_mf_generic}) when 
$\mathbf{Z} = \mathbf{A} + \mathbf{E}$, $g(\Z) = ||\mathbf{A}||_* + \lambda ||\mathbf{E}||_1$, and $l(\cdot)$ is the Frobenius norm. Here, $||\cdot||_*$ denotes the low-rank promoting nuclear norm and $||\cdot||_1$ denotes the $\ell_1$ norm that enforces sparsity. Matrix RPCA has been extended to tensors in multiple ways, relying on varying definitions of the tensor rank. We refer to \cite{n._xue_robust_2017,goldfarb_robust_2014} for an overview of CP-based and Tucker-based tensor RPCA models, and to Section \ref{sec:experiments} for specifics on the models compared in this paper.

Assuming $\D$ is over-complete and requiring $\R =[\mathbf{r}_1,\ldots,\mathbf{r}_n]$ to be sparse, (\ref{eq:structured_mf_generic}) leads to sparse dictionary learning by solving the non-convex optimization problem:
\begin{equation}\label{eq:matching_pursuit_problem}
    \min_{\mathbf{R},\mathbf{D}} ||\mathbf{X}-\mathbf{D}\mathbf{R}||_\mathrm{F}^2 + \lambda \sum_{i=1}^n||\mathbf{r}_i||_0,
\end{equation}
where $||\cdot||_\mathrm{F}$ is the  Frobenius norm and $||\cdot||_0$ is the $\ell_0$ pseudo-norm, counting the number of non-zero elements. In K-SVD and its variants, problem (\ref{eq:matching_pursuit_problem}) is solved in an iterative manner that alternates between sparse coding of the data samples on the current dictionary, and a process of updating the dictionary atoms to better fit the data using the Singular Value Decomposition (SVD) \cite{m._aharon_k-svd:_2006,j._mairal_sparse_2008}. 

\subsection{Limits of the classical approach}

While the Robust PCA and the sparse dictionary learning paradigms have been immensely successful in practice, they suffer from limitations that can limit their applicability.

The recovery guarantees of Robust PCA and of similar compressed-sensing approaches have been derived under strong incoherence assumptions, that may not be satisfied even if the data strictly follows the low-rank assumption. In fact, whenever the data lies on a low-dimensional linear subspace but isn't uniformly distributed, the \textit{extra structure} can induce significant coherence. \cite{liu_low-rank_2016} studies this problem in details, and in \cite{liu_low-rank_2016, liu_blessing_2017}, the authors show the adverse effects of coherence can be mitigated in existing low-rank modeling methods, such as \cite{g._liu_robust_2013}, by choosing the representation dictionary to be low-rank.

% a mitigating condition for existing low-rank modeling methods such as \cite{g._liu_robust_2013} is that the representation dictionary be low-rank.

Modern representation learning methods have to deal with increasingly large amounts of increasingly high-dimensional data. Classical low-rank modeling and dictionary learning models tend to be too expensive for that setting: typically, algorithms of the K-SVD family suffer from a high computational burden, preventing their applicability to high-dimensional and large scale data.

To overcome the issue of scalability in dictionary learning, a separable structure on the dictionary can be enforced. For instance, Separable Dictionary Learning (SeDiL) \cite{hawe_separable_2013} considers a set of samples in matrix form, namely, $\mathcal{X} = (\XXi)_i$, admitting sparse representations on a pair of bases
$\A, \B$, of which the Kronecker product constructs the dictionary. The corresponding objective is:
\begin{equation}\label{eq:sedil}
\min_{\A, \B, \tR} \frac{1}{2} \sum_i \Frosq{\XXi - \A \Ri \Bt} + \lambda g(\tR) + \kappa r(\A) + \kappa r(\B),
\end{equation}
where the regularizers $g(\cdot)$ and $r(\cdot)$ promote sparsity in the representations, and low mutual-coherence of the dictionary $\D = \B \otimes \A$, respectively. Here, $\D$ is constrained to have orthogonal columns, i.e., the pair $\A, \B$ shall lie on the product manifold of two product of sphere manifolds. A different approach is taken in \cite{s._hsieh_2d_2014}: a separable 2D dictionary is learned in a two-step strategy similar to that of K-SVD. Each matrix observation $\XXi$ is represented as $\A \Ri \Bt$. In the first step, the sparse representations $\Ri$ are found by 2D Orthogonal Matching Pursuit (OMP) \cite{fang_2d_2012}. In the second step, a CANDECOMP/PARAFAC (CP) \cite{hitchcock_expression_1927,harshman_foundations_1970,carroll_analysis_1970} decomposition is performed on a tensor of residuals via Regularized Alternating Least Squares to solve $\min_{\A, \B, \tR} \Fro{\tX - \tR \times_1 \A \times_2 \B}$\footnote{cf. Definition \ref{def:mode_n_prod} for the product $\times_n$}. However, those models learn over-complete dictionaries on a multitude of small patches extracted from the images. For large images or large number of images, the number of patches can easily become prohibitively large, undermining the scalability benefits of learning separable dictionaries. Moreover, none of these models is robust to gross corruption in the dataset.

Beyond scalability, separable dictionaries have been shown to be theoretically appealing when dealing with tensor-valued observations. According to \cite{shakeri_identification_2017}, the necessary number of samples for accurate (up to a given error) reconstruction of a Kronecker-structured dictionary within a local neighborhood scales with the sum of the product of the dimensions of the constituting dictionaries when learning on tensor data (see \cite{shakeri_minimax_2016} for 2-dimensional data, and \cite{shakeri_minimax_2018, shakeri_sample_2017} for $N$-order tensor data), compared to the product for vectorized observations. This suggests better performance is achievable compared to classical methods on tensor observations.

\subsection{Outline and contributions}

Here, we propose novel methods for separable dictionary learning based on robust tensor factorisations that learn simultaneously the dictionary and the sparse representations. We do not seek overcompleteness, but rather promote low-rankness in a pair of dictionaries, and sparsity in the codes to learn a low-rank representation of the input tensor. In this regard, our methods combine ideas from both Sparse Dictionary Learning and Robust PCA, as well as tensor factorizations. Our solvers are based on the Alternating Direction of Multipliers Method (ADMM) \cite{boyd_distributed_2011}.

A preliminary version of this work has been presented in \cite{m._bahri_robust_2017}. This paper offers the following novelties:
\begin{itemize}
\item We generalize the results of \cite{m._bahri_robust_2017} and propose new regularizers that yield stronger optimality properties, we call the resulting framework Robust Kronecker Component Analysis (RKCA).
\item Unlike previous models that rely solely on regularization to impose low-rankness, our work presents surrogates of specific definitions of the tensor rank that impose low-rankness through both regularisation and structure. Specifically, our method can be seen as modeling a low-rank tensor by tensor sparse coding with additional regularization on the factors.
\item We show that RKCA with these well-chosen regularizers can be reformulated in the framework of \cite{haeffele_structured_2014,haeffele_global_2015} allowing us to provide global optimality guarantees. It is worth mentioning that our proof applies to any Tucker factorization problem.
\item We demonstrate that RKCA can perform tensor completion in the presence of gross corruption.
\item We derive a Linearized ADMM (LADMM) algorithm for RKCA to improve scalability.
\item The experimental evaluation has been enriched to include the Low-Rank Representation method \cite{g._liu_robust_2013}, as well as comparisons with the recent Deep Image Prior \cite{ulyanov_deep_2018}.
\item Finally, we offer new perspectives on the low-rank promoting properties of RKCA.
\end{itemize}

To the best of our knowledge, our method is the first to leverage dictionaries that are both low-rank and separable for robust representation learning.

The rest of the manuscript is organized as follows. Section \ref{sec:preliminaries} reviews fundamental definitions and results of tensor algebra and of the Kronecker product. Section \ref{sec:model_derivation} is dedicated to deriving the RKCA model, relating RKCA to separable dictionary learning, and deriving RKCA with missing values. In Section \ref{sec:kdrsdl_degree_three}, we discuss optimality guarantees by formulating RKCA as an equivalent CP factorization with duplicated factors. Perspectives on the low-rank promoting properties can be found in Section \ref{sec:low-rankness}, and a discussion on the computational cost and implementation details of the methods in Section \ref{seq:implem_details}. Finally, we present in Section \ref{sec:experiments} experimental evidence of the effectiveness of RKCA on synthetic and real-world data.

\section{Preliminaries}
\label{sec:preliminaries}

In this section we review fundamental properties of tensor algebra and of the Kronecker product, and present the notations and conventions followed in the paper.

\subsection{Tensor algebra}

We refer to multidimensional arrays of real numbers of dimension $I_1 \times I_2 \times \ldots \times I_N$ as $N$-dimensional, or $N$-way \textit{real tensors}. The \textit{order} of the tensor is the number of indices required to address its elements. Consequently, each element of an $N^{th}$-order
tensor $\bm{\mathcal{X}}$ is addressed by $N$ indices, i.e.,
$(\bm{\mathcal{X}})_{i_{1}, i_{2}, \ldots, i_{N}} \doteq x_{i_{1}, i_{2}, \ldots, i_{N}}$. We denote tensors by bold calligraphic letters, e.g., $\tX$.

The sets of real and integer numbers are denoted by $\mathbb{R}$ and $\mathbb{Z}$, respectively.    An $N^{th}$-order  real-valued tensor $\bm{\mathcal{X}}$ is  defined over the
tensor space $\mathbb{R}^{I_{1} \times I_{2} \times \cdots \times
I_{N}}$, where $I_{n} \in \mathbb{Z}$ for $n=1,2,\ldots,N$. 

Matrices (vectors) are second order (first order) tensors and are denoted by uppercase (lowercase) bold letters, e.g., $\X$ ($\mathbf{x}$). The $i^{th}$ \textit{column} of a matrix $\X$ will be written $\mathbf{x}_i$ for convenience.

\subsubsection{Elementary definitions}
Tensor fibers and tensor slices are special subsets of the elements of a tensor.

\begin{figure}
    \centering
    \includegraphics[width=.25\columnwidth]{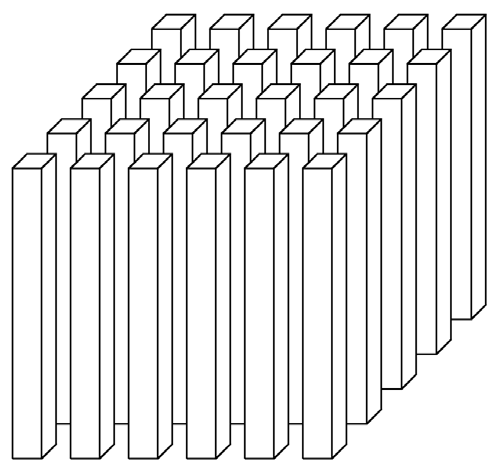}
    \hspace{3em}
    \includegraphics[width=.25\columnwidth]{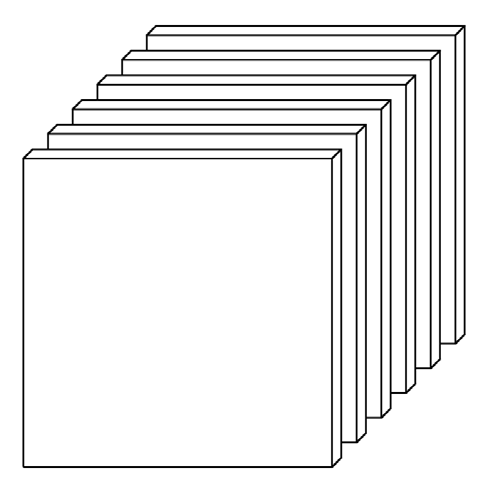}
    \caption{Mode-1 fibers (tubes) and mode-3 slices of a 3-way tensor. Adapted from \cite{kolda_tensor_2009}.}
    \label{fig:slices_fibres}
\end{figure}

\begin{defn}[Tensor slices and fibers]
We define the mode-$n$ tensor fibers as all subsets of the tensor elements obtained by fixing all but the $n^{th}$ index to given values.
Similarly, tensor slices are obtained by fixing all but two indices.
\end{defn}
Figure \ref{fig:slices_fibres} illustrates the concepts of fibers and slices for a third order tensor. For a third order tensor, the slices are effectively matrices obtained by "slicing" the tensor along a mode.

% In this paper, we will use particular rearrangements of the tensor elements known as tensor \textit{vectorization} and tensor \textit{matricizations} (or \textit{unfoldings}).

A tensor's elements can be re-arranged to form a new tensor of different dimension. We define the tensor \textit{vectorisation} and tensor \textit{matricisations} (or \textit{unfoldings}) as re-arrangements of particular interest.

\begin{defn}[Tensor vectorisation]
Let $\ten{X}\in\RR^{I_1\times I_2\times\ldots\times I_N}$ be an $N^\text{th}$-order tensor. %The vectorised tensor, denoted as $\vvec(X)$, is a vector formed by the tensor entries, such that tensor entry $\left(i_1,i_2,\ldots,i_N\right)$ is mapped to vector entry $j$, where
The \textit{tensor vectorisation, or flattening, operator} $\tX \mapsto \vvec(\tX)$ maps each element $x_{i_1, i_2, \ldots, i_N}$ of $\tX$ to a unique element $x_j$ of a real-valued vector $\vvec(\tX)$ of dimension $\prod_{m=1}^{N}{I_m}$ with the following bijection:
\begin{equation}
j = 1 + \sum_{k=1}^{N}{\left(i_k - 1\right)J_k}\quad\text{and}\quad
J_k = \prod_{m=1}^{k-1}{I_m}.
\end{equation}
\end{defn}

\begin{defn}[Mode-$n$ unfolding and tensorisation]
Let $\ten{X}\in\RR^{I_1\times I_2\times\ldots\times I_N}$ be an $N^\text{th}$-order tensor. The mode-$n$ tensor \textit{matricisation} - or \textit{unfolding}, with $n\in\{1,2,\ldots,N\}$, is the matrix $\matr{X}{n}$ of dimensions $(I_n, \prod_{k\neq n}{I_k})$ such that tensor entry $x_{i_1, i_2, \ldots, i_N}$ is mapped to a unique element $x_{i_n, j}$, with the following bijection:
\begin{equation}\label{matricisation_bijection}
j = 1 + \sum_{\substack{k=1\\k\neq n}}^{N}{\left(i_k - 1\right)J_k}\quad\text{and}\quad
J_k = \prod_{\substack{m=1\\m\neq n}}^{k-1}{I_m}.
\end{equation}

The inverse operator of the mode-$n$ unfolding is the mode-$n$ tensorisation and is denoted $\mathrm{fold}_n(\matr{X}{n}) = \tX$.
\end{defn}

In other words, the mode-$n$ unfolding is the matrix whose columns are the mode-$n$ fibers obtained by first varying $I_1$, then $I_2$, up until $I_N$ with the exception of $I_n$.

% We also need to define the tensor mode-$n$ product.

Having defined both tensor fibers, slices, and unfoldings; we can now define a higher-order analogue for matrix-vector multiplication as the \textit{tensor mode-$n$ product}.

\begin{defn}[Mode-$n$ product]\label{def:mode_n_prod}
The mode-$n$ product of $\tX \in \RR^{I_1 \times I_2 \times \cdots \times I_N}$ with $\U \in \RR^{J \times I_n}$ is the tensor $\tX \times_n \U \in \RR^{I_1 \times \cdots \times I_{n-1} \times J \times I_{n+1} \times \cdots \times I_N}$ such that
\begin{equation}
    (\tX \times_n \U)_{i_1 \cdots i_{n-1} j i_{n+1} i_N} = \sum_{ i_{n=1} }^{I_n} x_{i_1 i_2 \cdots i_N}u_{j{i_n}}.
\end{equation}
Effectively, each mode-$n$ fiber is multiplied by $\U$.
\end{defn}

% Finally, Proposition \ref{prop:matri_mode_n} states a fundamental result... 

The mode-$n$ product can be expressed as a matrix product as per Proposition \ref{prop:matri_mode_n}.

% We remind the reader of the following property of tensors and their matricizations:
\begin{pstion}
\label{prop:matri_mode_n}
With the notations of Definition \ref{def:mode_n_prod}:
\begin{equation}
(\tX \times_{i=1}^N \U_i)_{[n]} = \U_n \X_{[n]}\trp{(\otimes_{i=N, i \neq n}^1 \U_i) },
\end{equation}
\end{pstion}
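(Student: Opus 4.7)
The plan is to reduce the general identity to two building blocks: a single-mode unfolding formula and the commutativity of mode products along distinct modes. First, I would verify directly from Definition \ref{def:mode_n_prod} that $(\tX \times_n \U_n)_{[n]} = \U_n \X_{[n]}$: the mode-$n$ product replaces each mode-$n$ fiber of $\tX$ by its image under $\U_n$, while the mode-$n$ unfolding arranges these fibers as the columns of $\X_{[n]}$, so left multiplication by $\U_n$ acts column-by-column in the desired way. Second, mode products along distinct modes commute, i.e., $\tX \times_m \U_m \times_p \U_p = \tX \times_p \U_p \times_m \U_m$ for $m \ne p$, which follows by exchanging the two independent summations in the index-level definition. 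Combining these observations yields
\begin{equation*}
(\tX \times_{i=1}^N \U_i)_{[n]} = \U_n \bigl(\tX \times_{i \ne n} \U_i\bigr)_{[n]},
\end{equation*}
so it remains to show that $\bigl(\tX \times_{i \ne n} \U_i\bigr)_{[n]} = \X_{[n]} \trp{(\otimes_{i=N, i\ne n}^1 \U_i)}$.

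I would prove this remaining identity by induction on the number of factors $\U_i$ with $i \ne n$, relying on the auxiliary lemma that, for any $m \ne n$, applying $\times_m \U_m$ corresponds on the mode-$n$ unfolding to right-multiplication by the transpose of a Kronecker product with $\U_m$ placed at the $m$-th slot and identity matrices of appropriate sizes in the remaining slots. This lemma follows from direct index manipulation using the bijection \eqref{matricisation_bijection}: the column index $j$ linearly encodes the tuple $(i_k)_{k \ne n}$ in a prescribed ordering, and the definition of the Kronecker product, $(\A \otimes \B)_{(j_1,j_2),(k_1,k_2)} = a_{j_1 k_1} b_{j_2 k_2}$, translates the action of $\U_m$ on coordinate $i_m$ into right-multiplication by a transposed Kronecker product with the appropriate identity placements.

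The main obstacle is bookkeeping: aligning the ordering convention of the Kronecker product in the statement, which reads the modes from $N$ down to $1$ while skipping $n$, with the lexicographic convention on $(i_k)_{k \ne n}$ induced by \eqref{matricisation_bijection}. Once this correspondence is fixed, each inductive step simply substitutes one identity slot of the running Kronecker product by the corresponding $\U_m$, so after $N-1$ steps all identities have been absorbed, yielding $\trp{(\otimes_{i=N, i \ne n}^1 \U_i)}$ as the desired right factor and completing the proof.
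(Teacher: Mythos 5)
Your argument is correct and complete in outline: the single-mode identity $(\tX \times_n \U_n)_{[n]} = \U_n \X_{[n]}$, commutation of mode products along distinct modes, and the inductive absorption of each $\U_m$ ($m \neq n$) into one slot of a Kronecker product of identities is exactly the standard way this identity is established. Note, however, that the paper itself offers no proof to compare against: Proposition \ref{prop:matri_mode_n} is stated as a known property of tensor algebra (of the kind found in the survey of Kolda and Kolda--Bader-type references the paper cites), so your write-up supplies an argument the paper simply defers. Two small points to make explicit if you flesh it out: the inductive step of replacing one identity slot by $\U_m$ tacitly uses the mixed-product property $(\A_1 \otimes \B_1)(\A_2 \otimes \B_2) = \A_1\A_2 \otimes \B_1\B_2$ to merge successive right factors into the single product $\trp{(\otimes_{i=N, i \neq n}^1 \U_i)}$, and the identity matrices in later steps must have the updated dimensions $J_i$ rather than $I_i$ for modes already transformed; both are routine but should be stated so the bookkeeping with the column-index bijection of the unfolding is airtight.
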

where $\otimes$ is the Kronecker product of Definition \ref{def:kronprod}. 

\subsubsection{Tensor rank}

Contrary to matrices, the tensor rank is not uniquely defined. In this paper, we are interested in the tensor \textit{Tucker rank}, which we define as the vector of the ranks of a tensor's mode-$n$ unfoldings (i.e., its mode-$n$ ranks), and the tensor \textit{multi-rank} \cite{z._zhang_novel_2014,c._lu_tensor_2016} defined as the vector of the ranks of its frontal slices. 

More details about tensors, such as the definitions of the tensor CP-rank and information about common decompositions can be found in \cite{kolda_tensor_2009}, for example.

\subsection{Properties of the Kronecker product}

We now remind the reader of the definition and of important properties of the Kronecker product.

\begin{defn}[Kronecker product]\label{def:kronprod}
Let $\A \in \RR^{m \times n}$ and $\B \in \RR^{p \times q}$ then
\begin{equation}\label{eq:kron}
    \A \otimes \B = \begin{bmatrix}
    a_{11}\B & a_{12}\B & \ldots & a_{1n}\B\\
    \vdots & \vdots & \vdots & \vdots\\
    a_{n1}\B & a_{n2}\B & \ldots & a_{nn}\B\\
    \end{bmatrix}.
\end{equation}

% \begin{equation}
% \label{eq:Phi_r_def}
% \Phi_r(\tX^1,\ldots,\tX^K) = \sum_{i=1}^r \phi(\tX^1_i,\ldots,\tX^K_i).
% \end{equation}
\end{defn}

Proposition \ref{prop:vec_kronecker} is a fundamental result relating the Kronecker product and tensor vectorisation.
\begin{pstion}\label{prop:vec_kronecker}
Let $\tY = \tX \times_1 \U_1 \times_2 \U_2 \ldots \times_N \U_N$ then
\begin{equation}
\vvec (\tY) = \left( \otimes_{i=N}^1 \U_i \right) \vvec (\tX).
\end{equation}
In particular, if $\Y = \A \X \Bt$ then 
\begin{equation}
\vvec (\Y) = (\B \otimes \A) \vvec (\X).
\end{equation}
\end{pstion}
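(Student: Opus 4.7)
The plan is to establish the matrix identity first, and then bootstrap it to the general tensor case using Proposition \ref{prop:matri_mode_n}, exploiting the fact that the vectorisation of a tensor coincides with the vectorisation of its mode-$1$ unfolding under the column-major convention adopted in the paper.

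First I would prove the matrix identity $\vvec(\A\X\Bt) = (\B \otimes \A)\vvec(\X)$ directly from the definitions. Writing $\Y = \A\X\Bt$ element-wise, we have $y_{ij} = \sum_{k,\ell} a_{ik}x_{k\ell}b_{j\ell}$. Treating the column $\mathbf{y}_j$ of $\Y$ as $\A(\X\mathbf{b}_j) = \sum_\ell b_{j\ell}\A\mathbf{x}_\ell$, stacking columns $j=1,\ldots,q$ gives a block matrix whose $(j,\ell)$ block is $b_{j\ell}\A$; this is exactly the definition of $\B \otimes \A$ applied to $\vvec(\X)$. This is essentially a calculation rather than a deep step, so I would present it compactly.

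Next, I would argue that $\vvec(\tX) = \vvec(\X_{[1]})$ by inspecting the two bijections: the tensor vectorisation uses $J_k = \prod_{m=1}^{k-1} I_m$, which varies $i_1$ fastest, then $i_2$, etc.; the mode-$1$ unfolding stores mode-$1$ fibres as columns with inner indexing $J_k = \prod_{m \neq 1, m < k} I_m$, so column-major vectorisation of $\X_{[1]}$ traverses indices in precisely the same order. The same identification holds for $\tY$ and $\Y_{[1]}$.

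Now I would specialise Proposition \ref{prop:matri_mode_n} to $n=1$, yielding
\begin{equation*}
\tY_{[1]} = \U_1 \X_{[1]} \trp{(\U_N \otimes \U_{N-1} \otimes \cdots \otimes \U_2)}.
\end{equation*}
Applying the matrix identity from step one with $\A = \U_1$ and $\B = \U_N \otimes \cdots \otimes \U_2$ gives
\begin{equation*}
\vvec(\tY_{[1]}) = \bigl((\U_N \otimes \cdots \otimes \U_2) \otimes \U_1\bigr)\vvec(\X_{[1]}),
\end{equation*}
and by associativity of the Kronecker product the left-hand factor equals $\otimes_{i=N}^1 \U_i$. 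Combining with the vectorisation identification from step two concludes the proof, and the particular case $\Y = \A\X\Bt$ is simply step one.

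The only real subtlety is the index-ordering bookkeeping in step two; the algebra in steps one and three is mechanical once the conventions are fixed. I expect the main obstacle to be writing the matching of the two bijections cleanly enough that associativity of $\otimes$ then slots the pieces together without ambiguity about whether one means $\U_N \otimes \cdots \otimes \U_1$ or its reversal.
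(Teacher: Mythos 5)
Your argument is correct. Note, however, that the paper itself offers no proof of Proposition \ref{prop:vec_kronecker}: it is stated as a standard fact and the reader is referred to the linear algebra literature, so there is no in-paper proof to compare against. Your three steps are sound: the block-column computation in step one is the standard derivation of $\vvec(\A\X\Bt)=(\B\otimes\A)\vvec(\X)$; the bijection check in step two correctly establishes $\vvec(\tX)=\vvec(\matr{X}{1})$ under the paper's column-major conventions (position $i_1+\sum_{k\geq 2}(i_k-1)\prod_{m=1}^{k-1}I_m$ in both cases); and step three is a clean specialisation of Proposition \ref{prop:matri_mode_n} at $n=1$ followed by associativity of $\otimes$. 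The one caveat worth flagging is that you lean on Proposition \ref{prop:matri_mode_n}, which the paper likewise states without proof and which in many textbook treatments is itself derived from (or proved alongside) the vec--Kronecker identity; if you want the argument to be genuinely self-contained rather than conditional on that proposition, you would either verify the $n=1$ case of it by the same index bookkeeping you already do in step two, or prove the general identity directly by an elementwise computation of $(\otimes_{i=N}^1\U_i)\vvec(\tX)$. As written, your proof is a valid reduction of the tensor statement to the matrix case plus the mode-$1$ unfolding identity.
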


% We can also note the following important property of compatibility with the singular value decomposition:
The Kronecker product is \textit{compatible} with most common matrix decomposition. For the Singular Value Decomposition, we have Proposition \ref{prop:compat_kron_svd}.
\begin{pstion}\label{prop:compat_kron_svd}
Let $\Y = \U \mS \Vt$ and $\X = \mH \T \trp{\G}$ then
\begin{equation}
    \Y \otimes \X = (\U \otimes \mH)(\mS \otimes \T)\trp{(\V \otimes \G)}.
\end{equation}
Where we used the identity $\trp{(\A \otimes \B)} = \trp{\A} \otimes \trp{\B}$.
\end{pstion}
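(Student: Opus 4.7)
The plan is to reduce the statement to the \emph{mixed-product property} of the Kronecker product, namely the identity $(\A \otimes \B)(\C \otimes \D) = (\A\C) \otimes (\B\D)$ whenever the ordinary matrix products $\A\C$ and $\B\D$ are well-defined. Together with the transpose identity $\trp{(\V \otimes \G)} = \Vt \otimes \trp{\G}$ that the statement already invokes, this reduces the proposition to a purely algebraic manipulation.

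First, I would establish (or simply cite, since it is standard) the mixed-product property. The cleanest route is directly from Definition \ref{def:kronprod}: writing $\A \otimes \B$ as the block matrix with $(i,j)$-block $a_{ij}\B$, the product $(\A \otimes \B)(\C \otimes \D)$ has $(i,k)$-block $\sum_j a_{ij} c_{jk}\,\B\D = (\A\C)_{ik}\,\B\D$, which is precisely the $(i,k)$-block of $(\A\C) \otimes (\B\D)$. This takes only a few lines and requires no machinery beyond the block definition.

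Next, I would apply this identity twice to the right-hand side of the claim. Grouping as $\bigl[(\U \otimes \mH)(\mS \otimes \T)\bigr]\trp{(\V \otimes \G)}$, the first application gives $(\U\mS) \otimes (\mH\T)$. Using $\trp{(\V \otimes \G)} = \Vt \otimes \trp{\G}$, a second application of the mixed-product property yields $(\U\mS\Vt) \otimes (\mH\T\trp{\G})$, which by hypothesis equals $\Y \otimes \X$. This closes the proof.

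I do not anticipate a real obstacle: the only nontrivial ingredient is the mixed-product property, and the compatibility of dimensions ($\mS, \T$ acting between the same pairs of spaces as $\Y, \X$) is automatic from the SVD factorizations. The proposition is essentially a corollary of the mixed-product property, and the SVD structure plays no role beyond providing the three-factor grouping; the same argument would show that any factorization of $\Y$ and $\X$ into three compatible matrices carries over in the same way to $\Y \otimes \X$.
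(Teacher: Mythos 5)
Your proof is correct. The paper itself does not prove this proposition: where it restates the result in the appendix on Schatten norms, the proof is simply a citation to a standard matrix-analysis reference, so your mixed-product argument is effectively the self-contained version of what that reference contains. Your two ingredients are exactly the standard ones: the mixed-product property $(\A \otimes \B)(\C \otimes \D) = (\A\C) \otimes (\B\D)$, verified blockwise from the definition, and the transpose identity $\trp{(\V \otimes \G)} = \Vt \otimes \trp{\G}$, applied twice to collapse $(\U \otimes \mH)(\mS \otimes \T)\trp{(\V \otimes \G)}$ into $(\U\mS\Vt) \otimes (\mH\T\trp{\G}) = \Y \otimes \X$. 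Your closing remark is also a fair observation: the equality holds for any compatible three-factor factorizations, and the SVD structure only matters if one additionally wants to conclude that the right-hand side is itself an SVD of $\Y \otimes \X$ (orthogonality of $\U \otimes \mH$ and $\V \otimes \G$, nonnegativity of $\mS \otimes \T$), which the statement as written does not claim and which the paper's later use (singular values of a Kronecker product are the pairwise products) would require as a small extra step.
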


We refer the reader to one of the many linear algebra references for further information about the Kronecker product and its properties.

\section{Model derivation}
\label{sec:model_derivation}

In this section we start by describing RKCA as a structured tensor factorization. We then show its equivalence to a dictionary learning problem, discuss how to extend the model to handle missing values, and finally derive a first algorithm that will serve as the basis for the rest of the discussion.

\subsection{The tensor factorization perspective}

Consider a set of $N$ two dimensional observations (i.e., matrices) $\X_i \in \RR^{m \times n}, \, i = 1, \ldots, N$ stacked as the frontal slices of a tensor $\tX$. We study \textit{Tensor Robust PCA} problems:

\begin{align}\label{eq:constrained_pb_general}
    \begin{matrix*}[l]
    \min_{\tL, \tE} & f(\tL) + g(\tE),\\
    \st & \tX = \tL + \tE
    \end{matrix*}
\end{align}
where $\tL$ and $\tE$ are respectively the low-rank and sparse components of $\tX$. We will define $f(\cdot)$ and $g(\cdot)$ to be regularization functions, possibly non-smooth and non-convex, meant to promote structural properties of $\tL$ and $\tE$ - in our case, low-rankness of $\tL$, and sparsity in $\tE$.

The Robust Kronecker Component Analysis (RKCA) is obtained by assuming $\tL$ factorizes in a restricted form of Tucker factorization, and defining $f(\cdot)$ as a combination of penalties on the factors. More specifically, we assume:
\begin{equation}
\tL = \tR \times_1 \A \times_2 \B.
\end{equation}
Figure \ref{fig:decomposition} illustrates the decomposition.

\begin{figure}[h]
\centering
\includegraphics[width=.9\columnwidth]{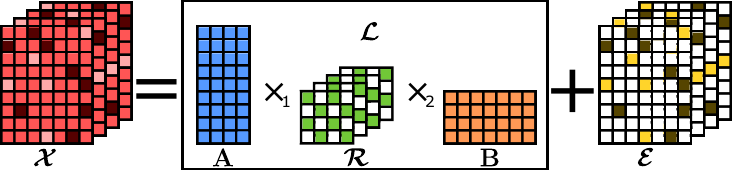}
\caption{Illustration of the decomposition.}
\label{fig:decomposition}
\end{figure}

We construct $f$ to promote low-rankness in the first two modes of $\tL$ and therefore in each of its frontal slices. In this work, we will discuss three different choices of $f$ depending on the \textit{positive-homogeneity} degree of the full regularization. 

\begin{defn}[{Def. 3 of \cite[page 6]{haeffele_global_2015}}]
A function $\theta : \RR^{D^1} \times \ldots \times \RR^{D^N} \rightarrow \RR^D$ is \textbf{positively homogeneous with degree p} if $\theta(\alpha x^1,\ldots,\alpha x^N) = \alpha^p \theta(x^1,\ldots,x^N), \  \forall \alpha \geq 0$. Notably $\theta(0) = 0$ holds.
\end{defn}

The following choice of $f$ is the \textit{degree 2 regularizer} and recovers the model presented in \cite{m._bahri_robust_2017}:
\begin{equation}\label{eq:degree_two_f}
f(\tL) = \alpha \One{\tR} + \Fro{\B \otimes \A},
\end{equation}
while choice
\begin{equation}\label{eq:degree_three_f_fro}
f(\tL) = \alpha \One{\tR}\Fro{\B \otimes \A},
\end{equation}
and
\begin{equation}\label{eq:degree_three_f_nuc}
f(\tL) = \alpha \One{\tR}\Nuc{\B \otimes \A},
\end{equation}
are \textit{degree 3 regularizers}.

In all three cases, the regularizer comprises a norm on the Kronecker product of the Tucker factors, hence the proposed method is coined as \textit{RKCA}. The interpretation of this is made clear in Section \ref{sec:dl_interpretation}.

\subsection{Robust separable dictionary learning}
\label{sec:dl_interpretation}

We now establish how to interpret RKCA as a robust separable dictionary learning scheme for tensor-valued observations. We remind the reader that we do not seek overcompleteness, but rather choose to learn low-rank dictionaries as a mean of modeling low-rankness in the observations. We impose separability to preserve the spatial correlation within the tensor inputs, and for scalability (c.f., Section \ref{sec:kdrsdl}). These choices are further motivated by the work of \cite{liu_low-rank_2016} (for low-rankness), and \cite{shakeri_identification_2017} (for separability).

Consider the following Sparse Dictionary Learning problem with Frobenius-norm regularization on the dictionary $\D$, where we decompose $N$ observations $\vv{x}_i \in \RR^{mn}$ on $\D \in \RR^{mn \times r_1 r_2}$ with representations $\vv{r}_i \in \RR^{r_1 r_2}$:
\begin{align}\label{eq:sparse_dictionary_learning}
    \min_{\D, \mat{R}}{ \sum_i \Twosq{\vv{x}_i - \D \vv{r}_i } } + \lambda \sum_i \One{\vv{r}_i} + \Fro{\D}.
\end{align}
We assume a Kronecker-decomposable dictionary $\D = \B \otimes \A$ with $\A \in \RR^{m \times r_1}, \B \in \RR^{n \times r_2}$. To model the presence of outliers, we introduce a set of vectors $\vv{e}_i \in \RR^{mn}$ and, with $d = r_1 r_2 + mn$, define the block vectors and matrices:

\begin{equation}
\vv{y}_i = \begin{bmatrix}\vv{r}_i\\ \vv{e}_i\end{bmatrix}  \in \RR^{d} \quad \C = \begin{bmatrix} \B \otimes \A & \II \end{bmatrix} \in \RR^{mn \times d}.
\end{equation}

We obtain a two-level structured dictionary $\mat{C}$ and the associated sparse encodings $\vv{y}_i$. Breaking-down the variables to reduce dimensionality and discarding $\Fro{\II}$:

%\begin{align}\label{eq:two_level_structured_sparse_dictionary_learning_explicit}
%    \min_{\A, \B, \mat{R}, \mat{E}} & \sum_i \Twosq{\vv{x}_i - (\B \otimes \A) \vv{r}_i - \vv{e}_i }  + \lambda \sum_i \One{\vv{r}_i} +\\ \nonumber
%    & \lambda \sum_i \One{\vv{e}_i} + \Fro{\B \otimes \A}
%\end{align}
\begin{equation}
\begin{split}\label{eq:two_level_structured_sparse_dictionary_learning_explicit}
\min_{\A, \B, \mat{R}, \mat{E}} \sum_i \Twosq{\vv{x}_i - (\B \otimes \A) \vv{r}_i - \vv{e}_i } \\ + \lambda \sum_i \One{\vv{r}_i} + \lambda \sum_i \One{\vv{e}_i} + \Fro{\B \otimes \A}.
\end{split}
\end{equation}
Suppose now that the observations $\vv{x}_i$ were obtained by vectorizing two-dimensional data such as images, i.e., $\vv{x}_i = \vvec({\XXi}), \XXi \in \RR^{m \times n}$. Without loss of generality, we choose $r_1 = r_2 = r$ and $\vv{r}_i = \vvec(\Ri), \Ri \in \RR^{r \times r}$, and recast the problem as:
%\begin{align}\label{eq:matrix_2d_rpca}
%    & \min_{\A, \B, \ten{R}, \ten{E}} \sum_i \Frosq{\XXi - \A \Ri \Bt - \Ei }  + \\ \nonumber
%    & \lambda \sum_i \One{\Ri} + \lambda \sum_i \One{\Ei} + \Fro{\B \otimes \A}
%\end{align}
\begin{equation}\label{eq:matrix_2d_rpca}
\begin{split}
\min_{\A, \B, \ten{R}, \ten{E}} \sum_i \Frosq{\XXi - \A \Ri \Bt - \Ei }  \\ 
    + \lambda \sum_i \One{\Ri} + \lambda \sum_i \One{\Ei} + \Fro{\B \otimes \A}.
\end{split}
\end{equation}
Equivalently, enforcing the equality constraints and concatenating the matrices $\XXi, \Ri$, and $\Ei$ as the frontal slices of 3-way tensors, we obtain problem (\ref{eq:constrained_pb}).

\begin{align}\label{eq:constrained_pb}
    \begin{matrix*}[l]
    \min_{\A, \B, \ten{R}, \ten{E}} & \alpha \One{\ten{R}} + \lambda \One{\tE} + \Fro{\B \otimes \A},\\
    \st & \tX = \ten{R} \times_1 \A \times_2 \B + \tE
    %\st & \forall i, \; \XXi = \A \Ri \Bt + \Ei
    \end{matrix*}
\end{align}
where $\tR$ is a tensor whose $i^{th}$ frontal slice encodes the representation of $\X_i$ on the column basis $\A$ and on the row basis $\B$. Problem (\ref{eq:constrained_pb}) corresponds to choosing $f$ to be (\ref{eq:degree_two_f}) and $g$ to be the element-wise $\ell_1$ norm in (\ref{eq:constrained_pb_general}).

We verify $r \in \NN, r \leq \min(m, n)$ is a natural \textit{upper bound} on the rank of each frontal slice of $\tL$ and on its mode-1 and mode-2 ranks:
\begin{equation}\label{eq:frontal_slice}
\mat{L}_i = \A \R_i \Bt.
\end{equation}
% Hence, $\A \in \RR^{m \times r}$, $\B \in \RR^{n \times r}$, and $\tR \in \RR^{r \times r \times N}$.

We have thus shown how the two perspectives are equivalent.

\subsection{Tensor completion with RKCA}
\label{sec:tensor_comp}

Handling outliers is important for real-world applications, but data corruption is polymorphic, and missing values are common. We now present an extension to the original problem where we assume we are only given incomplete and possibly grossly-corrupted observations, and wish to perform simultaneous removal of the outliers and imputation of the missing values.

Given a set $\Omega \subset \NN^{I_1 \times I_2 \times \ldots I_N}$ we define the sampling operator $\pi_\Omega : \RR^{I_1 \times I_2 \times \ldots I_N} \rightarrow \RR^{I_1 \times I_2 \times \ldots I_N}$ as the projection on the space of $N$-way tensors whose only non-zero entries are indexed by $N$-tuples in $\Omega$, i.e., $\forall \tX \in \RR^{I_1 \times I_2 \times \ldots I_N}$:
\begin{equation*}
\pi_\Omega(x_{i_1, i_2, \ldots, i_N}) = \begin{cases} 
      1 & (i_1, i_2, \ldots, i_N) \in \Omega \\
      0 & (i_1, i_2, \ldots, i_N) \notin \Omega
   \end{cases}.
\end{equation*}

It is simple to show $\pi_\Omega$ is an orthogonal projection. Clearly, $\pi_\Omega$ is linear and idempotent ($\pi_\Omega \circ \pi_\Omega = \pi_\Omega$). A tensor $\tX$ is in the null space of $\pi_\Omega$ if and only if none of its non-zero entries are indexed by $N$-tuples of $\Omega$. Denoting by $\bar{\Omega}$ the complement of $\Omega$, we have $\forall \tX, \langle \pi_\Omega(\tX), \pi_{\bar{\Omega}}(\tX) \rangle = 0$.

With partial observations, we solve:
\begin{align}\label{eq:constrained_pb_mv_preliminary}
    \begin{matrix*}[l]
    \min_{\tL, \tE} & f(\tL) + \One{\tE},\\
    \st & \pi_\Omega(\tX) = \pi_\Omega(\tL) + \pi_\Omega(\tE).\\
    \end{matrix*}
\end{align}

By the orthogonality of $\pi_\Omega$, $\One{\tE} = \One{\pi_\Omega(\tE)} + \One{\pi_{\bar{\Omega}}(\tE)}$ . Without loss of generality we follow \cite{goldfarb_robust_2014,shang_robust_2014} and assume $\pi_{\bar{\Omega}}(\tX) = 0$ such that $\pi_{\bar{\Omega}}(\tX) = \pi_{\bar{\Omega}}(\tL) + \pi_{\bar{\Omega}}(\tE)$. This implies that we do not seek to recover possible corruption on the missing values but directly the missing element. Problem (\ref{eq:constrained_pb_mv_preliminary}) is therefore equivalent to:
\begin{align}\label{eq:constrained_pb_mv}
    \begin{matrix*}[l]
    \min_{\tL, \tE} & f(\tL) + \One{\pi_\Omega(\tE)},\\
    \st & \tX = \tL + \tE.\\
    \end{matrix*}
\end{align}

% To obtain RKCA with missing values, we adopt the rank minimization perspective. Let $f(\tL) = \alpha \One{\ten{R}} + \Fro{\B \otimes \A}$ a composite penalty on the rank of $\tL$ that leverages the factorization. We start from the following preliminary problem:
% \begin{align}\label{eq:constrained_pb_mv_preliminary}
%     \begin{matrix*}[l]
%     \min_{\A, \B, \ten{R}} & f(\tL)\\
%     \st & \tX = \tL + \tE\\
%     \st & \pi_\Omega(\tX) = \pi_\Omega(\tL)
%     %\st & \forall i, \; \XXi = \A \Ri \Bt + \Ei
%     \end{matrix*}
% \end{align}

% From the linearity of $\pi_\Omega$ we have $\pi_\Omega(\tX) - \pi_\Omega(\tL) = \pi_\Omega(\tE) = \mat{0}$. Relaxing the constraint as $\One{\pi_\Omega(\tE)} \leq \epsilon$:
% \begin{align}\label{eq:constrained_pb_mv_preliminary_rel}
%     \begin{matrix*}[l]
%     \min_{\A, \B, \ten{R}} & f(\tL)\\
%     \st & \tX = \tL + \tE\\
%     \st & \One{\pi_\Omega(\tE)} \leq \epsilon
%     \end{matrix*}
% \end{align}
% Or equivalently:
% \begin{align}\label{eq:constrained_pb_mv}
%     \begin{matrix*}[l]
%     \min_{\A, \B, \ten{R}} & f(\tL) + \lambda \One{\pi_\Omega(\tE)}\\
%     \st & \tX = \tL + \tE
%     \end{matrix*}
% \end{align}

To solve (\ref{eq:constrained_pb_mv}) we need to compute the proximal operator of $\One{\pi_\Omega(\tE)}$, we show (proof in Appendix \ref{appendix:selective_shrinkage}) that the corresponding operator is the selective shrinkage $\shrink^\Omega_\lambda$:
\begin{equation}\label{eq:select_shrink}
\forall \tX \in \RR^{I_1 \times I_2 \times \ldots I_N}, \shrink^\Omega_\lambda(\tX) = \shrink_\lambda(\pi_\Omega(\tX)) + \pi_{\bar{\Omega}}(\tX).
\end{equation}
% We provide in Algorithm \ref{alg:mv_skeleton} the generic form regardless of the choice of the regularizer on $\tL$ or of the optimization method (linearization or substitution).

% \begin{algorithm*}[ht]
% \footnotesize
% \begin{algorithmic}[1]
% \Procedure{RKCA with missing values}{$\tX,r, \lambda, \alpha$}

% \State $\A^0, \B^0, \tE^0, \ten{R}^0, \LL^0, \mu^0 \gets$ \Call{Initialize}{$\tX$}

% \While{not converged}

% \State $\tE^{t+1} \gets \shrink^\Omega_{\lambda / \mu^t} ( \tX - \tK^{t} \times_1 \mat{A}^{t} \times_2 \mat{B}^{t} + \frac{1}{\mu^t} \LL^t)$

% \State $\tilde{\tX}^{t+1} \gets \tX - \tE^{t+1}$

% \State $\mat{A}^{t+1}\gets \min_{\A} L(\A, \B, \tR, \tE, \LL, \mu)$

% \State $\mat{B}^{t+1}\gets \min_{\B} L(\A, \B, \tR, \tE, \LL, \mu)$

% \State $\ten{R}^{t+1} \gets \min_{\tR} \min_{\A} L(\A, \B, \tR, \tE, \LL, \mu)$

% \State $\LL^{t+1} \gets \LL^{t} + \mu^t (\tilde{\tX}^{t+1} - \tK^{t+1} \times_1 \mat{A}^{t+1} \times_2 \mat{B}^{t+1})$
% \State $\mu^{t+1} \gets \min(\mu^*,  \rho \mu^t)$
% \EndWhile

% \State \textbf{return} $\A, \B, \ten{R}, \tE$ 
% \EndProcedure
% \end{algorithmic}
% \caption{Kronecker-Decomposable Robust Sparse Dictionary Learning with missing values.}
% \label{alg:mv_skeleton}
% \end{algorithm*}

We present tensor completion results in Section \ref{sec:tensor_completion_experiments}.

\subsection{An efficient algorithm for the degree 2 regularizer}
\label{sec:kdrsdl}

Finally, we derive an efficient algorithm for the degree 2-regularized problem. This discussion will serve as a basis for the other regularizers which only require minor modifications of this algorithm.

Problem (\ref{eq:constrained_pb}) is not jointly convex, but is convex in each component individually. We resort to an alternating-direction method and propose a non-convex ADMM procedure that operates on the frontal slices.

%Let us formulate the augmented Lagrangian, introducing the tensor of Lagrange multiplier such that each frontal slice $\LLi$ is the multiplier for the constraint on the $i^{th}$ slice, and a dual step size $\mu$:
%\begin{equation}\label{eq:augmented_lagrangian_pb}
%\begin{split}
%    \min_{\A, \B, \ten{R}, \ten{E}}  \lambda \sum_i \One{\Ri} + 
%    \lambda \sum_i \One{\Ei} + 
%    \Fro{\B \otimes \A} \\ + \sum_i \inner{\LLi}{\XXi - \A \Ri \Bt - \Ei} \\ + \mu \sum_i \Frosq{\XXi - \A \Ri \Bt - \Ei }
%\end{split}
%\end{equation}

Minimizing $\Fro{\B \otimes \A}$ presents a challenge: the product is high-dimensional, the two bases are coupled, and the loss is non-smooth. Let 
$||.||_p$ denote the Schatten-$p$ norm\footnote{The Schatten-$p$ norm of $\A$ is the $\ell_p$ norm of its singular values.}. Using the identity $||\A \otimes \B||_p = ||\A||_p ||\B||_p$ \footnote{From the compatibility of the Kronecker product with the SVD.} and remarking that $\Fro{\B} \Fro{\A} \leq \frac{\Frosq{\A} + \Frosq{\B}}{2}$, we minimize a simpler upper bound\footnote{$\Nuc{\A\Bt}$ \cite{recht_guaranteed_2010}.}. The resulting sub-problems are smaller, and therefore easier to solve computationally. In order to obtain exact proximal steps for the encodings $\Ri$, we introduce a split variable $\Ki$ such that $\forall i, \; \Ki = \Ri$. Thus, we solve:
\begin{align}\label{eq:constrained_pb_rpca_upper_bound}
    \begin{matrix*}[l]
    \displaystyle \min_{\A, \B, \ten{R}, \tK, \ten{E}} & \alpha \One{\ten{R}} + \lambda \One{\tE} + \frac{1}{2}(\Frosq{\A} + \Frosq{\B}),\\
    \st & \tX = \ten{K} \times_1 \A \times_2 \B + \tE,\\
    \st & \tR = \tK.
    \end{matrix*}
\end{align}

%, for which the augmented Lagrangian can also be formulated by introducing additional Lagrange multipliers $\Yi$:
By introducing the  Lagrange multipliers $\LL$ and $\tY$, such that the $i^{th}$ frontal slice corresponds to the $i^{th}$ constraint, and the dual step sizes $\mu$ and $\mu_{\tK}$, we formulate the Augmented Lagrangian of problem (\ref{eq:constrained_pb_rpca_upper_bound}):
\begin{align}\label{eq:augmented_lagrangian_pb_split}
    & L(\A, \B, \tR, \tE, \tK, \LL, \tY, \mu, \mu_{\tK}) =  \lambda \sum_i \One{\Ri} + \lambda \sum_i \One{\Ei} +\notag \\ 
    & \frac{1}{2}(\Frosq{\A} + \Frosq{\B}) + \sum_i \inner{\LLi}{\XXi - \A \Ki \Bt - \Ei} + \notag\\
    & \sum_i \inner{\Yi}{\Ri - \Ki} + \frac{\mu}{2} \sum_i \Frosq{\XXi - \A \Ki \Bt - \Ei } + \notag\\
    & \frac{\mu_{\tK}}{2} \sum_i \Frosq{\Ri - \Ki}.
\end{align}
We can now derive the ADMM updates. Each $\Ei$ is given by shrinkage after rescaling \cite{candes_robust_2011}:
\begin{equation}\label{eq:upd_E}
\Ei = \shrink_{\lambda/\mu} (\XXi - \A \Ki \Bt + \frac{1}{\mu} \LLi).
\end{equation}
From the developments of Section \ref{sec:tensor_comp}, extending our algorithms to handle missing value only involves using the selective shrinkage operator (\ref{eq:select_shrink}) in (\ref{eq:upd_E}).

A similar rule is immediate to derive for $\Ri$, and solving for $\A$ and $\B$ is straightforward with some matrix algebra. We therefore focus on the computation of the split variable $\Ki$. Differentiating, we find $\Ki$ satisfies:
\begin{align}\label{eq_updt_K_opti}
    & \mu_{\tK} \Ki + \mu \At\A \Ki \Bt\B =\\
    & \quad \At ( \LLi + \mu ( \XXi - \Ei ) ) \B + \mu_{\tK} \Ri + \Yi \nonumber.
\end{align}
The key is here to recognize Equation (\ref{eq_updt_K_opti}) is a \textit{Stein} equation, and can be solved in cubical time and quadratic space in $r$ by solvers for discrete-time Sylvester equations - such as the Hessenberg-Schur method \cite{g._golub_hessenberg-schur_1979} - instead of the naive $O(r^6)$ time, $O(r^4)$ space solution of vectorizing the equation in a size-$r^2$ linear system. We obtain Algorithm \ref{alg:RKCA}.

% In practice, we solve a slightly different problem with $\alpha \sum_i \One{\Ri} + \lambda \sum_i \One{\Ei}$. This introduces an additional degree of freedom and corresponds to rescaling the coefficients of $\Ri$. We found the modified problem to be numerically more stable and to allow for tuning of the relative importance of $\One{\tR}$ and $\One{\tE}$.

%%%%%%%%%%%%%%%%%%%%%%%%%%%%%%%%%%%%%%%%%%%%%%%%%%%%%%%%%%%%%%%%%%%%%%%%%%%%%%%%%%%%%%%%%%%%%%%%%%%%%%%%%%%%%%%%%%%%%%%%%%%%%%%%%%%%%
\begin{algorithm}[ht]
\footnotesize
\begin{algorithmic}[1]
\Procedure{RKCA}{$\tX,r, \lambda, \alpha$}

\State $\A^0, \B^0, \tE^0, \ten{R}^0, \tK^0, \LL^0, \tY^0, \mu^0, \mu_{\tK}^0 \gets$ \Call{Initialize}{$\tX$}

\While{not converged}

\State $\tE^{t+1} \gets \shrink_{\lambda / \mu^t} ( \tX - \tK^{t} \times_1 \mat{A}^{t} \times_2 \mat{B}^{t} + \frac{1}{\mu^t} \LL^t)$

\State $\tilde{\tX}^{t+1} \gets \tX - \tE^{t+1}$

\State \begin{varwidth}[t]{\linewidth} $\mat{A}^{t+1}\gets (\sum_i (\mu^t \tilde{\X}_i^{t+1} + \LLi^{t}) \mat{B}^{t} \trp{(\Ki^t)}) \; /$ \\ \hspace*{2em} $(\II + \mu^t \sum_i \Ki^t \trp{(\mat{B}^{t})} \mat{B}^{t} \trp{(\Ki^t)})$ \end{varwidth}

\State \begin{varwidth}[t]{\linewidth}  $\mat{B}^{t+1}\gets (\sum_i \trp{(\mu^t \tilde{\X}_i^{t+1} + \LLi^t)} \mat{A}^{t+1} \Ki^t) \; /$ \\ \hspace*{2em} $(\II + \mu^t \sum_i \trp{(\Ki^t)} \trp{(\mat{A}^{t+1})} \mat{A}^{t+1} \Ki^t)$ \end{varwidth}

\ForAll{i}
    \State
        % Here I replaced \Call by custome code, varwidth allows me to break the line. Use textsc to make it look exactly like the normal \Call
        \begin{varwidth}[t]{\linewidth}
            $\mat{K}_i^{t+1} \gets$ \textsc{Stein}($-\frac{\mu^t}{\mu_{\tK}^t} \trp{(\mat{A}^{t+1})}\mat{A}^{t+1}$,
                \; $\trp{(\mat{B}^{t+1})} \mat{B}^{t+1}$, \hspace*{2em} $\frac{1}{\mu_{\tK}^t} \left[ \trp{(\mat{A}^{t+1})} ( \mat{\Lambda}_i^{t} + \mu^t \tilde{\mat{X}}_i^{t+1} ) \mat{B}^{t+1} + \mat{Y}_i^t \right] + \Ri^t $)
        \end{varwidth}
    \State $\mat{R}_i^{t+1} \gets \shrink_{\alpha / \mu_{\tK}^t} (\mat{K}_i^{t+1} - \frac{1}{\mu_{\tK}^t} \mat{Y}_i^t)$
\EndFor

\State $\LL^{t+1} \gets \LL^{t} + \mu^t (\tilde{\tX}^{t+1} - \tK^{t+1} \times_1 \mat{A}^{t+1} \times_2 \mat{B}^{t+1})$
\State $\tY^{t+1} \gets \tY^{t} + \mu_{\tK}^t (\tR^{t+1} - \tK^{t+1})$
\State $\mu^{t+1} \gets \min(\mu^*,  \rho \mu^t)$
\State $\mu_{\tK}^{t+1} \gets \min(\mu_{\tK}^*, \rho \mu_{\tK}^t)$

\EndWhile

\State \textbf{return} $\A, \B, \ten{R}, \tE$ 
\EndProcedure
\end{algorithmic}
\caption{RKCA with degree 2 regularization.}
\label{alg:RKCA}
\end{algorithm}

% \subsection{Proof published previously}

\section{RKCA and global optimality}
\label{sec:kdrsdl_degree_three}

The work of \cite{haeffele_structured_2014, haeffele_global_2015} suggests global optimality can be achieved from any initialization in tensor factorization models given that the factorization and regularization mappings match in certain ways. We summarize the main results of this work and show our model with regularizer (\ref{eq:degree_three_f_fro}) or (\ref{eq:degree_three_f_nuc}) respects the conditions for global optimality.

\subsection{Review of the main results}

% \begin{defn}
% A function $f$ is positively homogeneous of degree $p$ if
% \begin{align*}
% \forall x, \forall \alpha, f(\alpha x) = \alpha^p f(x)
% \end{align*}
% Notably $f(0) = 0$ holds.
% \end{defn}

% \begin{defn}
% An elemental mapping $\phi$ 
% \end{defn}

We first review some of the concepts manipulated in \cite{haeffele_structured_2014, haeffele_global_2015} and give an overview of the main results.

\begin{defn}[{Def. 1 of \cite[page 6]{haeffele_global_2015}}]
A \textbf{size-r set of K factors} $(\tX^1,\ldots,\tX^K)_r$ is defined to be a set of $K$ tensors where the final dimension of each tensor is equal to $r$: $(\tX^1,\ldots,\tX^K)_r \in \RR^{(D^1 \times r)} \times \ldots \times \RR^{(D^K \times r)}$.
\end{defn}

\begin{defn}[{Def. 6 of \cite[page 6]{haeffele_global_2015}}]
An \textbf{elemental mapping}, $\phi : \RR^{D^1} \times \ldots \times \RR^{D^K} \rightarrow \RR^D$ is any mapping which is positively homogeneous with degree $p \neq 0$.  The \textbf{r-element factorization mapping} $\Phi_r : \RR^{ (D^1 \times r) } \times \ldots \times \RR^{(D^K \times r) } \rightarrow \RR^D$ is defined as:
\begin{equation}
\label{eq:Phi_r_def}
\Phi_r(\tX^1,\ldots,\tX^K) = \sum_{i=1}^r \phi(\tX^1_i,\ldots,\tX^K_i).
\end{equation}
\end{defn}

\begin{defn}[{Def. 7 of \cite[page 8]{haeffele_global_2015}}]
\label{def:elemental_reg}
An \textbf{elemental regularization function} $g : \RR^{D^1} \times \ldots \times \RR^{D^K} \rightarrow \RR_+ \cup \infty$, is defined to be any function which is positive semidefinite and positively homogeneous.
\end{defn}

\begin{defn}[{Def. 8 of \cite[page 9]{haeffele_global_2015}}]
\label{def:nondegenerate}
Given an elemental mapping $\phi$ and an elemental regularization function $g$, the authors define $(\phi,g)$ to be a \textbf{nondegenerate pair} 
if 1) $g$ and $\phi$ are both positively homogeneous with degree $p$, for some $ p\neq 0$ and 2) $\forall X \in \Image(\phi) \backslash 0, \ \ \exists \mu \in (0,\infty]$ and $(\bm{\tilde{z}}^1,\ldots,\bm{\tilde{z}}^K)$ such that $\phi(\bm{\tilde{z}}^1,\ldots,\bm{\tilde{z}}^K) = \tX$, $g(\bm{\tilde{z}}^1,\ldots,\bm{\tilde{z}}^K)=\mu$, and $g(\bm{z}^1,\ldots,\bm{z}^K) \geq \mu$ for all $(\bm{z}^1,\ldots,\bm{z}^K)$ such that $\phi(\bm{z}^1,\ldots,\bm{z}^K) = \tX$.
\end{defn}

The main result, Theorem 15 of \cite{haeffele_global_2015}, provides a characterization of the global optima of CP-based tensor factorization problems.
\begin{thm}[{Theorem 15 of \cite[page 15]{haeffele_global_2015}}]
\label{thm:0_local_min}

Given a function $f_r(\tX^1,\ldots,\tX^K,Q)$, any local minimizer of the optimization problem:
\begin{align}
	& \min_{(\tX^1,\ldots,\tX^K)_r,Q} \ f_r(\tX^1,\ldots,\tX^K,Q) \equiv \\ &\ell(\Phi_r(\tX^1,\ldots,\tX^K),Q)+\lambda \sum_{i=1}^r g(\tX^1_i,\ldots,\tX^K_i) + H(Q)\notag
\end{align}
such that $(\tX^1_{i_0},\ldots,\tX^K_{i_0})=(0,\ldots,0)$ for some $i_0 \in \{1,\ldots,r\}$ is a global minimizer.
\end{thm}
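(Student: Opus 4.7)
The plan is to reduce the non-convex factorization problem to an equivalent convex problem on the image variable $\tX = \Phi_r(\tX^1,\ldots,\tX^K)$ and exploit the ``free column'' provided by the zero slice to derive a first-order global optimality certificate. Concretely, I would associate to the nondegenerate pair $(\phi, g)$ the gauge-like functional
\begin{equation*}
\Omega(\tX) = \inf\left\{ \sum_{i=1}^{s} g(\bm{z}^1_i,\ldots,\bm{z}^K_i) \,:\, s\in\mathbb{N},\ \tX = \sum_{i=1}^{s} \phi(\bm{z}^1_i,\ldots,\bm{z}^K_i) \right\},
\end{equation*}
and consider the convex surrogate $F(\tX, Q) = \ell(\tX, Q) + \lambda\,\Omega(\tX) + H(Q)$. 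The first step is to verify that $\Omega$ is a convex, positively homogeneous (degree one after the implicit rescaling allowed by the common degree $p$ of $\phi$ and $g$) functional, which is where nondegeneracy is essential: it guarantees the infimum is attained and well-defined on the image of $\phi$, and it lets us rescale factors without changing the product while concentrating all ``mass'' on the regularization term. By construction, $F(\Phi_r(\tX^1,\ldots,\tX^K), Q) \le f_r(\tX^1,\ldots,\tX^K, Q)$ for every feasible point, so $F$ furnishes a global lower bound.

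Next, I would use the presence of a zero slice $i_0$ in a local minimizer $(\tX^{1*},\ldots,\tX^{K*}, Q^*)$ to produce an unconstrained first-order condition. For any candidate atom $(\bm{z}^1,\ldots,\bm{z}^K)$ and small $\epsilon \ge 0$, the perturbation that replaces the $i_0$-th column of each factor by $\epsilon^{1/p}\bm{z}^k$ is admissible, and, using positive homogeneity of degree $p$ of both $\phi$ and $g$, yields the directional change
\begin{equation*}
\epsilon\bigl\langle \nabla_{\tX}\ell(\Phi_r(\tX^{1*},\ldots,\tX^{K*}), Q^*),\, \phi(\bm{z}^1,\ldots,\bm{z}^K) \bigr\rangle + \epsilon\,\lambda\, g(\bm{z}^1,\ldots,\bm{z}^K) + o(\epsilon).
\end{equation*}
Local optimality then forces this linear term to be nonnegative for every $(\bm{z}^1,\ldots,\bm{z}^K)$, which is exactly the statement that $-\nabla_{\tX}\ell(\Phi_r(\tX^{1*},\ldots,\tX^{K*}), Q^*)$ lies in the polar set of the atomic set defining $\Omega$, i.e.\ it is a subgradient of $\lambda\Omega$ at $\Phi_r(\tX^{1*},\ldots,\tX^{K*})$.

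Combined with the optimality of $Q^*$ in the (convex in $Q$) function $H + \ell(\cdot, Q)$, this assembles into the full KKT conditions for the convex problem $\min_{\tX, Q} F(\tX, Q)$. Convexity then upgrades stationarity to global optimality for $F$, so $F(\Phi_r(\tX^{1*},\ldots,\tX^{K*}), Q^*)$ equals the convex optimum. Since $F$ lower-bounds $f_r$ everywhere and is attained with equality at the candidate (because the representation achieving the defining infimum of $\Omega$ can be padded with zero columns up to size $r$, using again the free slot), the candidate is globally optimal for $f_r$.

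I expect the chief obstacle to be the convexity of $\Omega$ and the attainment of its defining infimum: proving that the infimum over an unbounded number of atoms is convex and behaves like a norm requires nondegeneracy in a crucial way, and one must carefully handle the degree-$p$ rescaling so that the atomic decomposition with exactly $r$ terms suffices at the optimum. Everything else---the perturbation argument producing the subgradient condition, and the KKT-to-global-optimum passage---is a standard consequence once the convex surrogate and its subdifferential are in hand.
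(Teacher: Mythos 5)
You should first note that the paper you are working from does not actually prove this statement: Theorem \ref{thm:0_local_min} is quoted verbatim as Theorem 15 of \cite{haeffele_global_2015}, and the surrounding text explicitly omits the machinery (the factorization--regularization function of Equation (18) and Propositions 10--11 of that reference) that your $\Omega$ reconstructs. So the comparison is really with the original Haeffele--Vidal argument, and at the level of architecture your sketch matches it: define the convex surrogate $\Omega$ as an infimal sum of $g$ over atomic decompositions by $\phi$, observe $F \le f_r$ on factorized points, and use the all-zero slice $i_0$ to perturb in an arbitrary atomic direction $\epsilon^{1/p}(\bm{z}^1,\ldots,\bm{z}^K)$, which by degree-$p$ homogeneity produces the linear-in-$\epsilon$ condition $\langle \nabla_{\tX}\ell, \phi(\bm{z})\rangle + \lambda g(\bm{z}) \ge 0$ for all atoms, i.e.\ a polar-type bound on $-\nabla_{\tX}\ell$.

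There is, however, a genuine gap in how you pass from that bound to global optimality. Polar membership ($\Omega^{\circ}(-\nabla_{\tX}\ell/\lambda)\le 1$) is \emph{not} by itself the statement that $-\nabla_{\tX}\ell$ is a subgradient of $\lambda\Omega$ at $\Phi_r(\tX^{1*},\ldots,\tX^{K*})$; for a positively homogeneous convex functional you additionally need the equality $\langle -\nabla_{\tX}\ell, \Phi_r(\tX^{1*},\ldots,\tX^{K*})\rangle = \lambda\,\Omega(\Phi_r(\tX^{1*},\ldots,\tX^{K*}))$. In the original proof this equality comes from a second use of local optimality: scaling the \emph{nonzero} slices by $(1\pm\epsilon)^{1/p}$, which is admissible in both directions and forces $\langle -\nabla_{\tX}\ell, \Phi_r\rangle = \lambda\sum_i g(\tX^{1*}_i,\ldots,\tX^{K*}_i)$; combined with the polar inequality this simultaneously yields the subgradient condition and the identity $\sum_i g(\tX^{1*}_i,\ldots,\tX^{K*}_i)=\Omega(\Phi_r)$, i.e.\ that the candidate's own factorization attains the infimum defining $\Omega$, which is exactly what makes $f_r = F$ at the candidate. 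Your substitute for this last step --- ``the representation achieving the defining infimum of $\Omega$ can be padded with zero columns up to size $r$'' --- does not work: the infimum ranges over decompositions with arbitrarily many atoms, so it need not admit (nor be attained by) an $r$-term representation, and even if it did, what is needed is that the \emph{candidate's} slices achieve the infimum, not that some other $r$-term decomposition does. Adding the scaling perturbation of the nonzero slices closes the gap and recovers the cited proof; without it, the chain $f_r(\tX^{*},Q^*) = F(\Phi_r(\tX^{*}),Q^*) = \min F \le \min f_r$ breaks at its first link.
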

Where $Q$ is an optional set of non-factorized variables (in our case $\tE$), $H$ is convex and possibly non-smooth, $\ell$ is jointly convex and once-differentiable in $(\tX, Q)$, and $(\phi, g)$ is a nondegenerate pair.

\subsection{Outline of the proof}
In order to apply the aforementioned results, we reformulate RKCA in an equivalent formulation that satisfies the hypotheses of \cite{haeffele_structured_2014, haeffele_global_2015}. The arguments we develop actually hold for the general Tucker factorization. This section outlines the proof of equivalence, the detailed developments can be found in Appendix \ref{appendix:proof}.

The main challenge is to find a factorization function that can be expressed as a sum of elemental mappings. Problem (\ref{eq:constrained_pb}) as-is does not lend itself to such a formulation: even though the factors $\A$ and $\B$ form a size-$r$ set of 2 factors, the core tensor $\tR$ is of size $N$ on its last dimension. We note however that the set of frontal slices of $\tR$ is a size-$r$ set of $N$ factors, but this formulation doesn't satisfy Proposition 10 of \cite{haeffele_global_2015} and it is not immediately obvious how to define concatenation of the factors of $\tX$ and $\tY$ and how to verify the convexity of the factorization-regularization function $\Omega$ (Equation (18) and Proposition 11 of \cite{haeffele_global_2015} omitted here for brevity). Additionally, the results of \cite{haeffele_structured_2014, haeffele_global_2015} have been proved in the case of a single sum over the last dimension of the factors, but our factorization being a special case of Tucker decomposition is most naturally described as:
\begin{align}\label{eq:explicit_sums_factorization}
\tL & = \tR \times_1 \A \times_2 \B \\ 
	& = \tR \times_1 \A \times_2 \B \times_3 \II_N \\
    & = \sum_{i} \sum_j \sum_k r_{i, j, k} \cdot \vv{a}_i \otimes \vv{b}_j \otimes \vv{c}_k.
\end{align}

First, we note the order of the sums can be permuted such that it is clear our model expresses $\tL$ as the sum of $N$ tensors of size $m \times n \times N$, where all frontal slices are the null matrix except for one:
\begin{equation}\label{eq:sum_N_tensors}
\tL = \sum_{k} \sum_i \sum_j r_{i, j, k} \cdot \vv{a}_i \otimes \vv{b}_j \otimes \vv{c}_k.
\end{equation}
With $(\vv{c}_k)_i = \delta_{i, k} = \begin{cases}
1 & i = j\\
0 & i \neq j
\end{cases}$.

Next, we seek a transformation such that (\ref{eq:sum_N_tensors}) does not involve cross-terms. Our idea is to unfold the Tucker factorization by duplicating elements of the factors to express it in the form (\ref{eq:Phi_r_def}) where the sum is over a size-$s$ set of $3$ factors (or $M$ factors for the general Tucker).
\begin{itemize}
\item We define $\sigma = \vvec(\tR)$.
\item We construct $\tilde{\A}$ and $\tilde{\B}$ from $\A$ and $\B$ by duplicating columns.
\item $\II_N$ doesn't depend on the data and can be injected directly in the elemental mapping.
\end{itemize}
We show the factorization function is defined over the size-$Nr^2$ set of 3 factors $(\bm{\sigma}, \tilde{\A}, \tilde{\B})$ by:
\begin{equation}
\Phi_{Nr^2} (\bm{\sigma}, \tilde{\A}, \tilde{\B}) = \sum_{l=1}^{Nr^2} \sigma_l \cdot \tilde{\vv{a}}_l \otimes \tilde{\vv{b}}_l \otimes \bm{\delta}_{l, (\ceil{\frac{l}{r^2}} - 1) \, \text{mod} \, r^2 + 1}.
\end{equation}

\subsection{RKCA with degree 3 regularization}
\label{sec:kdrsdl_degree_three_regulatization}

We now check that the \textit{degree 3 regularizers} introduced in Section \ref{sec:model_derivation} are compatible with our reformulated factorization. Recall both regularizers are of the form:
\begin{equation}
g(\tL) = \alpha \One{\tR} ||\A||_q ||\B||_q.
\end{equation}
With $||.||_p$ the Schatten-$p$ norm, $p = 1$ for the Nuclear norm and $p = 2$ for the Frobenius norm. As we shall see in Section \ref{sec:fro_nuc_equivalence}, $g$ is low-rank promoting.

In the case of the Frobenius penalty, the resulting optimization problem is:
\begin{align}\label{eq:constrained_pb_deg_three}
    \begin{matrix*}[l]
    \min_{\A, \B, \ten{R}, \ten{E}} & \lambda \One{\ten{R}}\Fro{\B} \Fro{\A} + \lambda \One{\tE},\\
    \st & \tX = \ten{R} \times_1 \A \times_2 \B + \tE.
    %\st & \forall i, \; \XXi = \A \Ri \Bt + \Ei
    \end{matrix*}
\end{align}

The regularizer $g$ adapted to our transformation is:
\begin{equation}
g(\bm{\sigma}, \tilde{\A}, \tilde{\B}) = \One{\bm{\sigma}} \frac{\Fro{\tilde{\A}}}{Nr} \frac{\Fro{\tilde{\B}}}{Nr}.
\end{equation}

This stems from the fact that the Frobenius norm is equivalent to the element-wise $\ell_2$ norm, and each element of $\A$ and $\B$ appears $Nr$ times in $\tilde{\A}$ and $\tilde{\B}$.

In the case of the Nuclear norm, we simply observe that the numbers of linearly independent columns in $\tilde{\A}$ and $\tilde{\B}$ are clearly the same as in $\A$ and $\B$, so our transformation preserves the ranks of the target matrices, and the regularization function is simply:
\begin{equation}
f(\bm{\sigma}, \tilde{\A}, \tilde{\B}) = \One{\bm{\sigma}} \Nuc{\tilde{\A}} \Nuc{\tilde{\B}}.
\end{equation}

It should be clear that the newly defined $\phi$ and $g$ are both positively homogeneous of degree 3, and form a nondegenerate pair. Property 10 of \cite{haeffele_global_2015} also holds.

Hence, we argue that RKCA with a product of norms regularization enjoys the optimality guarantees of \cite{haeffele_global_2015}.

\section{RKCA learns low-rank dictionaries}
\label{sec:low-rankness}

% We show experimentally that our algorithm successfully recovers the components of the decomposition (\ref{eq:constrained_pb}), we then prove the formulation encourages two different notions of low-rankness on tensors, and finally discuss the issues of non-convexity and convergence.

% In this section we prove our model learns low-rank dictionaries and therefore encourages two different notions of low-rankness on tensors.

In this Section we explain the low-rank promoting behavior of RKCA from complementary perspectives. First, we show the regularizers defined in equations (\ref{eq:degree_two_f}), (\ref{eq:degree_three_f_fro}), and (\ref{eq:degree_three_f_nuc}) directly provide upper bounds on the mode-1 and mode-2 ranks of the low-rank component, and thus on the rank of each of its frontal slices. This perspective was first presented in \cite{m._bahri_robust_2017}. The second approach to explaining the models' properties studies the optimization sub-problems associated with the two bases $\A$ and $\B$. Based on recent work \cite{x._peng_connections_2018}, we show these sub-problems are equivalent to rank-minimization problems and admit closed-form solutions that involve forms of singular value thresholding. Finally, we discuss connections with recent work on low-rank inducing norms.

\subsection{Direct penalization of the rank of $\tL$}
Seeing the models from the perspective of Robust PCA, which seeks a low-rank representation $\A$ of the dataset $\X$, we minimize the rank of the low-rank tensor $\tL$. More precisely, we show in Theorem \ref{thm:rank} that we simultaneously penalize the Tucker rank and the multi-rank of $\tL$.

\begin{thm}\label{thm:rank}
RKCA encourages low mode-$1$ and mode-$2$ rank, and thus, low-rankness in each frontal slice of $\tL$, for suitable choices of the parameters $\lambda$ and $\alpha$.
\end{thm}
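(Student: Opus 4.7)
The plan is to exploit the Tucker structure $\tL = \tR \times_1 \A \times_2 \B$ together with elementary rank inequalities, and to show that each of the three RKCA regularizers upper-bounds, either directly or through a well-known surrogate, the mode-$1$ and mode-$2$ ranks of $\tL$. The first step is to unfold and bound: by Proposition \ref{prop:matri_mode_n}, writing $\tL = \tR \times_1 \A \times_2 \B \times_3 \II_N$ one obtains $\matr{L}{1} = \A\,\matr{R}{1}\,(\II_N \otimes \B)^{\mathsf{T}}$ and $\matr{L}{2} = \B\,\matr{R}{2}\,(\II_N \otimes \A)^{\mathsf{T}}$; sub-multiplicativity of the rank then gives $\rk{\matr{L}{1}} \leq \min\{\rk{\A}, \rk{\matr{R}{1}}\}$ and $\rk{\matr{L}{2}} \leq \min\{\rk{\B}, \rk{\matr{R}{2}}\}$, and specialising to the frontal slices $\mat{L}_i = \A\,\Ri\,\Bt$ already recalled in (\ref{eq:frontal_slice}) yields $\rk{\mat{L}_i} \leq \min\{\rk{\A}, \rk{\B}, \rk{\Ri}\}$. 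Any decrease in $\rk{\A}$, $\rk{\B}$, or $\rk{\Ri}$ therefore decreases the rank of every frontal slice of $\tL$ and of its first two unfoldings.

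Second, I would translate each of the three regularizers into a rank penalty on $\A$, $\B$, or the core. The penalty $\alpha \One{\tR}$ promotes sparsity in each $\Ri$, and since $\rk{\Ri}$ is upper-bounded by the number of non-zero columns (equivalently, non-zero rows), the $\ell_1$ term acts as a surrogate rank penalty on the core. For the degree-$3$ nuclear regularizer, Proposition \ref{prop:compat_kron_svd} gives $\Nuc{\B \otimes \A} = \Nuc{\A}\,\Nuc{\B}$; since the nuclear norm is the convex envelope of the rank on the unit spectral ball, this term directly penalizes $\rk{\A}$ and $\rk{\B}$. For the Frobenius-based regularizers (the degree-$2$ and degree-$3$ Fro cases), I would combine $\Fro{\B \otimes \A} = \Fro{\A}\,\Fro{\B}$ with the variational identity $\Nuc{\M} = \min_{\U\V^{\mathsf{T}} = \M} \tfrac{1}{2}(\Frosq{\U} + \Frosq{\V})$ from \cite{recht_guaranteed_2010} (already exploited in Section \ref{sec:kdrsdl}), which turns a penalty on the Frobenius norms of the factors into an upper bound on the nuclear norm of the products involving them, and therefore into an implicit rank penalty on the unfoldings of $\tL$.

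Combining the two steps, for $\lambda$ and $\alpha$ chosen large enough each regularizer yields a surrogate upper bound on $\rk{\mat{L}_i}$, $\rk{\matr{L}{1}}$, and $\rk{\matr{L}{2}}$, so its minimization shrinks the multi-rank and the first two coordinates of the Tucker rank of $\tL$. The main obstacle I expect is the Frobenius-based case: $\Fro{\A}$ alone does not control $\rk{\A}$, so the argument must thread through the variational nuclear-norm identity and the joint coupling between $\A$, $\B$, and $\tR$ rather than apply a pointwise bound, and one must be explicit about the regime of $\lambda,\alpha$ in which the sparsity and factor-norm penalties jointly force strict rank reduction rather than a vacuously satisfied inequality.
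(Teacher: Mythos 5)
Your structural step and your handling of the nuclear-norm case essentially coincide with the paper: the paper likewise uses $\mat{L}_i=\A\Ri\Bt$, the bound $\rk{\A\Ri\Bt}\leq\min(\rk{\A},\rk{\B},\rk{\Ri})$, the fact that the mode-$1$/mode-$2$ ranks are controlled by $\rk{\A}$ and $\rk{\B}$, and it reads the penalty on $\B\otimes\A$ as a surrogate for $\rk{\A\otimes\B}=\rk{\A}\rk{\B}$. The genuine gap is in the two Frobenius-regularized cases, which include the degree-2 model the paper actually runs. Your route through the variational identity $\Nuc{\M}=\min_{\U\trp{\V}=\M}\tfrac12(\Frosq{\U}+\Frosq{\V})$ controls the nuclear norm of the \emph{product of the penalized factors}. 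If that product is $\A\Bt$, this is the wrong object: $\Nuc{\A\Bt}$ can be made arbitrarily small (even zero, e.g. $\A=[\U,\,\U]$, $\B=[\V,\,-\V]$) while $\rk{\A}$ and $\rk{\B}$ stay large, so penalizing it does not bound the mode-$1$ or mode-$2$ rank of $\tL$. If instead you aim at the unfoldings, then $\matr{L}{1}=\A\,\matr{R}{1}\trp{(\II_N\otimes\B)}$, and bounding $\Nuc{\matr{L}{1}}$ requires coupling $\Fro{\A}$ with the Frobenius norm of the \emph{whole} second factor $\matr{R}{1}\trp{(\II_N\otimes\B)}$ — which is not what the regularizer penalizes; you yourself flag this coupling as the main obstacle and never carry it out. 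So as written the claim is only established for the nuclear-norm regularizer.

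The paper closes the Frobenius cases with a one-line device you did not use: by norm equivalence on the singular values, $\Nuc{\A\otimes\B}\leq k\,\Fro{\A\otimes\B}$ with an explicit $k$ (it proves $\Nuc{\M}\leq\sqrt{\min(m,n)}\,\Fro{\M}$), so after rescaling $\alpha$ and $\lambda$ by $k$ every Frobenius-penalized objective upper-bounds the corresponding nuclear-norm-penalized one. All three regularizers thus reduce to penalizing $\Nuc{\A\otimes\B}$, the convex surrogate of $\rk{\A\otimes\B}=\rk{\A}\rk{\B}$; since ranks are non-negative integers, driving this product down forces $\rk{\A}$ or $\rk{\B}$ to drop, and your step-1 bounds finish the argument exactly as in the paper. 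Crucially, the Kronecker product keeps the ranks multiplicative, which is what makes the surrogate argument work, whereas $\A\Bt$ does not. Your side claim that $\One{\tR}$ acts as a rank surrogate on the slices $\Ri$ is not needed for (and not used in) the paper's proof; the rank control comes solely from $\A$ and $\B$.
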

\begin{proof}
We minimize either $\lambda \One{\tE} + \alpha \One{\tR} + \Fro{\A \otimes \B}$, $\lambda \One{\tE} + \alpha \One{\tR}\Fro{\A \otimes \B}$, or $\lambda \One{\tE} + \alpha \One{\tR}\Nuc{\A \otimes \B}$. From the equivalence of norms in finite-dimensions, $\exists k \in \RR_+^*, \Nuc{\A \otimes \B} \leq k \Fro{\A \otimes \B}$. In the case of the Frobenius norm, we can choose $\alpha = \frac{\alpha'}{k}, \lambda = \frac{\lambda'}{k}$ to reduce the problem to that of the nuclear norm. In all cases, we penalize $\rk{\A \otimes \B} = \rk{\A}\rk{\B}$. Given that the rank is a non-negative integer, $\rk{\A}$ or $\rk{\B}$ decreases necessarily. Therefore, we minimize the mode-$1$ and mode-$2$ ranks of $\tL = \tR \times_1 \A \times_2 \B$. Additionally, $\forall i, \rk{\A \Ri \Bt} \leq \min(\rk{\A}, \rk{\B}, \rk{\Ri})$.

Exhibiting a valid $k$ may help in the choice of parameters. We show $\forall \A \in \RR^{m \times n}, \; \Nuc{\A} \leq \sqrt{\min(m, n)}\Fro{\A}$: We know $\forall \vv{x} \in \RR^n, \One{\vv{x}} \leq \sqrt{n} \Two{\vv{x}}$. Recalling the nuclear norm and the Frobenius norm are the Schatten-$1$ and Schatten-$2$ norms, and $\A$ has $\min(m, n)$ singular values, the result follows.
\end{proof}
In practice, we find that on synthetic data designed to test the models, we effectively recover the ranks of $\A$ and $\B$ regardless of the choice of $r$, as seen in Figure \ref{fig:sample_ranks}.
\begin{figure}
\centering
\includegraphics[width=.9\columnwidth]{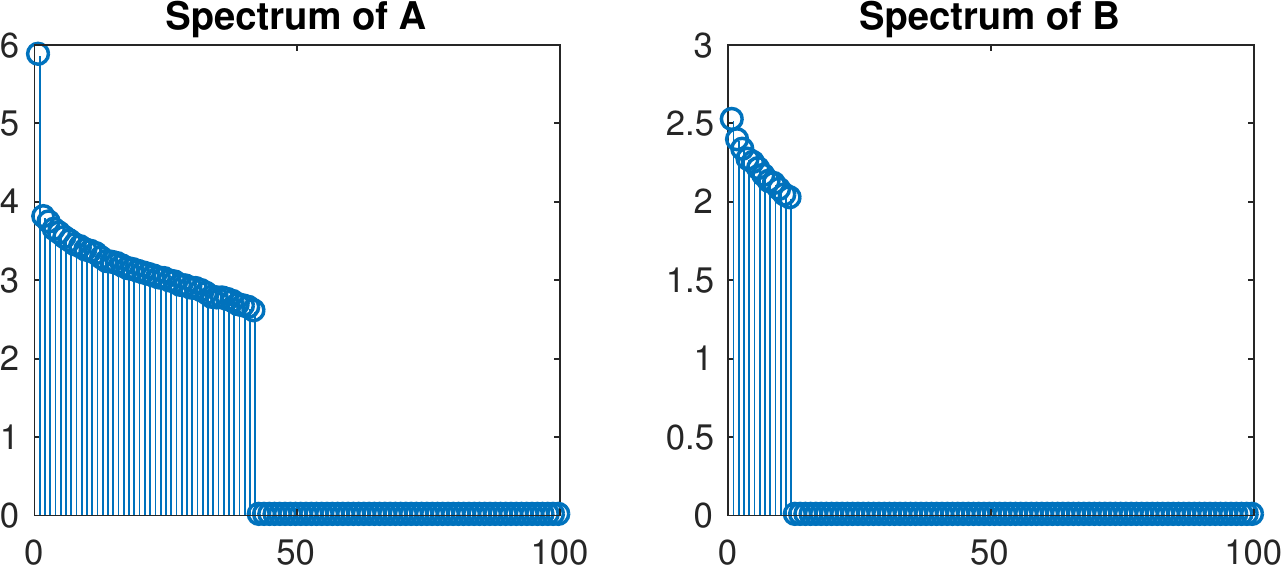}
\caption{Sample spectrums of $\A$ and $\B$. Ground truth attained ($\A$: 42, $\B$: 12). $r = 100$. Degree 2 regularizer.}
\label{fig:sample_ranks}
\end{figure}

\subsection{Analysis of the sub-problems in A and B}
\label{sec:fro_nuc_equivalence}

We present an alternative interpretation of the low-rank properties of our methods. In \cite{x._peng_connections_2018} the authors show that Frobenius norm and Nuclear norm regularizations either lead to the same optimal solution - even in the presence of noise - when the dictionary provides enough representative power, or are equivalent in the sense that they describe two different bases on the column space of the dictionary. We explain the behavior of our algorithms by showing the sub-problems in $\A$ and $\B$ are rank-minimization problems.

\subsubsection{Sub-problems in A and B}

For generality purposes, let us begin with the problem:
\begin{align}\label{eq:q}
    \begin{matrix*}[l]
    \displaystyle \min_{\A, \B, \ten{R}, \ten{E}} & \beta(\tR) \Fro{\A}\Fro{\B} + \lambda \One{\tE} + W(\tR),\\
    \st & \tX = \tR \times_1 \A \times_2 \B \times_3 \II_N + \tE.
    \end{matrix*}
\end{align}

In the degree 3 case we have $\beta(\tR) = \alpha \One{\ten{R}}$ and $W(\tR) = 0$, in the degree 2 case, $\beta(\tR) = 1$ and $W(\tR) = \alpha \One{\ten{R}}$. We now omit the dependency in $\ten{R}$ for $\beta$ and $W$ as they are constant in the sub-problems in $\A$ and $\B$.

The sub-problem in $\A$ is:
\begin{align}\label{eq:qq}
    \begin{matrix*}[l]
    \displaystyle \min_{\A} & \left[ \beta\Fro{\B} \right] \Fro{\A},\\
    \st & \tX = \tR \times_1 \A \times_2 \B \times_3 \II_N + \tE.
    \end{matrix*}
\end{align}

Equivalently, in a matricized way using Property (\ref{prop:matri_mode_n}):
\begin{align}\label{eq:qqq}
    \begin{matrix*}[l]
    \displaystyle \min_{\A} & \left[ \beta \Fro{\B} \right] \Fro{\A},\\
    \st & \X_{[1]} = \A \R_{[1]} \trp{(\II_N \otimes \B)} + \E_{[1]}.
    \end{matrix*}
\end{align}

Noting that $\Fro{\A} = \Fro{\trp{\A}}$ we reformulate the problem in the form of Equation (4) of \cite{x._peng_connections_2018}:

\begin{align}\label{eq:qqqq}
    \displaystyle \min_{\A} \gamma \Fro{\trp{\A}}, \quad \st \quad \trp{\tilde{\X}_{[1]}} =  \D_{\A} \trp{\A}.
\end{align}
With $\trp{\tilde{\X}_{[1]}} = \trp{\X_{[1]}} - \trp{\E_{[1]}}$, $\gamma = \left[ \beta\Fro{\B} \right]$, $\D_{\A} = (\II_N \otimes \B) \trp{\R_{[1]}}$.

The sub-problem in $\B$ is very similar:

\begin{align}\label{eq:qqqqq}
    \begin{matrix*}[l]
    \displaystyle \min_{\B} & \left[ \beta \Fro{\A} \right] \Fro{\B},\\
    \st & \tX = \tR \times_1 \A \times_2 \B \times_3 \II_N + \tE.
    \end{matrix*}
\end{align}

Matricizing on the second mode:
\begin{align}\label{eq:qqqqqq}
    \displaystyle \min_{\B} \kappa \Fro{\trp{\B}}, \quad \st \quad \trp{\tilde{\X}_{[2]}} =  \D_{\B} \trp{\B}.
\end{align}
With $\trp{\tilde{\X}_{[2]}} = \trp{\X_{[2]}} - \trp{\E_{[2]}}$, $\kappa = \left[ \beta \Fro{\A} \right]$, $\D_{\B} = (\II_N \otimes \A) \trp{\R_{[2]}}$.

\subsubsection{Optimal solutions and interpretation}

According to \cite{g._liu_robust_2013} the optimal solution of $\min \Nuc{\C}, \, \st \, \X = \D \C$ assuming feasible solutions exist and $\D \neq \vv{0}$ is $\C^* = \V_r \trp{\V_r}$ where $\U_r \mat{\Sigma}_r \trp{\V_r}$ is the skinny SVD of $\D$. \cite{h._zhang_flrr:_2014} showed it is also the optimal solution of $\min \Fro{\C}, \, \st \, \X = \D \C$.

It is therefore easy to show that the optimal solution of $\min \Nuc{\trp{\C}} \, \st \, \X = \D \trp{\C}$ under similar conditions is $\U_r \trp{\U_r}$.
\begin{proof}
Letting $\X = \U \mat{\Sigma} \trp{\V}$ we have $\trp{\X} = \V \mat{\Sigma} \trp{\U}$.
\end{proof}

Letting $\U^{\A}_p \mat{\Sigma}^{\A}_p \trp{{\V^{\A}}_p}$ the skinny SVD of $\D_{\A}$ and $\U^{\B}_q \mat{\Sigma}^{\B}_q \trp{{\V^{\B}}_q}$ the skinny SVD of $\D_{\B}$ the optimal solutions for $\A$ and $\B$ are therefore $\U^{\A}_p \trp{{\U^{\A}}_p}$ and $\U^{\B}_q \trp{{\U^{\B}}_q}$.

For the sake of brevity, we shall describe the interpretation for $\A$ as the same holds for $\B$ by symmetry. It should be noted that $\U_p^{\A}$ is the PCA basis of $\D_{\A}$ and that $\U^{\A}_p \trp{{\U^{\A}}_p}$ is the matrix of the orthogonal projection onto that basis. Thus, the optimal $\A$ is the projection of $\D_{\A}$ onto its principal components. Remembering that $\D_{\A} = (\II_N \otimes \B) \trp{\R_{[1]}}$, $\D_{\A}$ is the product of $\B$ with each code $\R_i$ matricized such that the $i^th$ line of $\D_{\A}$ is the concatenation of all the $i^{th}$ columns of all the partial reconstructions $\R_k \Bt$, and all the columns of $\D_{\A}$ are all the lines of $\R_k \Bt$, which can be seen as the partial reconstruction of the low-rank images in their common row space $\spn{\B}$.

Hence, we can see the process of updating $\A$ and $\B$ as alternating PCAs in the row and column spaces, respectively.

\subsection{A note on low-rank inducing norms}
\label{sec:low_rank_inducing_norms}

The recently published \cite{grussler_low-rank_2018} suggests one more interpretation of our regularizers. In \cite{grussler_low-rank_2018}, the authors show a family of low-rank inducing norms, based on solving rank-constrained Frobenius norm minimization, or rank-constrained Spectral norm minimization problems can outperform the standard nuclear norm relaxation in rank-minimization problems. Given that the role of the parameter $r$ in our model is to impose an upper bound on the ranks of the dictionaries and of the reconstructions, we suggest the choice of Frobenius regularization instead of nuclear norm regularization is further justified. Deeper analysis of this connection is required and is left for future work.

\section{Implementation details and complexity}
\label{seq:implem_details}

In this section, we discuss the computational complexity of our algorithms, and the scalability issues depending on the regularization. We discuss parameter tuning in Appendix \ref{appendix:param_tuning}. The initialization procedure has already been discussed in \cite{m._bahri_robust_2017}, and is included in Appendix \ref{appendix:initialization}.

\subsection{Computational complexity and LADMM}
\label{sec:ladmm}
%%%%%%%%%%%%%%%%%%%%%%%%%%%%%%%%%%%%%%%%%%%%%%%%%%%%%%%%%%%%%%%%%%%%%

The substitution method used in Section \ref{sec:kdrsdl} is effective but comes with an additional cost that limits its scalability. Notably, the cost of solving $N$ Stein equations cannot be neglected in practice. Additionally, the added parameters can make tuning difficult. Linearization provides an alternative approach that can scale better to large dimensions and large datasets. In Appendix \ref{appendix:ladmm}, we give detailed derivations of how RKCA can be implemented with LADMM.

%%%%%%%%%%%%%%%%%%%%%%%%%%%%%%%%%%%%%%%%%%%%%%%%%%%%%%%%%%%%%%%%%%%%%

The time and space complexity per iteration of Algorithm \ref{alg:RKCA} (i.e., with the degree two regularizer) are $O(N(mnr + (m + n)r + mn + \min(m, n)r^2 + r^3 + r^2))$ and $O( N(mn + r^2) + (m + n)r + r^2  )$. Since $r \leq \min(m, n)$, the terms in $r$ are asymptotically negligible, but in practice it is useful to know how the computational requirements scale with the size of the dictionary. Similarly, the initialization procedure has cost $O(N(m n \min (m, n) + (\min(m ,n))^3 + mn) + mn)$ in time and needs quadratic space per slice, assuming a standard algorithm is used for the SVD\cite{chan_improved_1982}. 

Switching to an LADMM update for $\tR$ eliminates the need of solving costly Stein equations. The soft-shrinkage operator is applied to the tensor $(\tR \times_1 \A \times_2 \B - \bm{\Delta}) \times_1 \trp{\A} \times_2 \trp{\B}$, which has $Nr^2$ elements, and can be computed in $O(N(\min(m, n)r^2 + mn + r^2 + r^3) + mr^2 + nr^2)$ by remembering that $(\tX \times_1 \A) \times_1 \B = \tX \times_1 \B \A$. The space complexity of the update is $O(N(\min(m,n)r + r^2) + r^2)$.

Updating $\A$ and $\B$ with a Frobenius norm or with a nuclear norm penalty requires computing a proximal operator, and solving a linear system in the case of ADMM with substitution. We focus on the computation of the proximal operator in Section \ref{sec:fro_vs_nuclear}. The substitution adds an additional time and space complexity of $O(mr + nr)$.

Several key steps of Algorithm \ref{alg:RKCA} and its variants, such as summations of independent terms, are trivially distributed in a \textit{MapReduce} \cite{dean_mapreduce:_2004} way. Proximal operators are separable in nature and are therefore parallelizable. Consequently, highly parallel and distributed implementations are possible, and computational complexity can be further reduced by adaptively adopting sparse linear algebra structures.

\subsection{Frobenius and nuclear norm penalties}
\label{sec:fro_vs_nuclear}

In Equation (\ref{eq:constrained_pb_rpca_upper_bound}) we used an upper bound on $\Fro{\A}\Fro{\B}$ to obtain smooth sub-problems. In the degree 3-regularized case, we did not apply the same bound. Given the non-smoothness of the Frobenius norm, we must resort to other approaches such as substitution or linearization (c.f. Appendix \ref{appendix:substitution_updates_degree_three} and Appendix \ref{appendix:ladmm}), as with the Nuclear norm.% We may thus choose to penalize the ranks of $\A$ and $\B$ directly with $\Nuc{\A}$ and $\Nuc{\B}$. 

% must keep $g(\cdot)$ positive homogeneous of degree 3 and therefore cannot: actually incorrect, we could have replaced ||R||.||A||.||B|| by (||R||.||A||^2 + ||R||.||B||^2)/2

In Section \ref{sec:fro_nuc_equivalence}, we then showed how Frobenius and Nuclear norm penalties can yield the same optimal solutions, and how in our case they would be equivalent in solving the sub-problems associated with $\A$ and $\B$. It is therefore natural to wonder why we should choose one over the other, and what the practical implications of this choice are.

It can be shown that the proximal operator of the Schatten-$p$ norm has a closed form expression that requires the computation of the SVD of the matrix (Proposition \ref{prop:prox_operator_schatten_p}, Appendix \ref{appendix:schatten_p_prox}). Computing the SVD is costly, with a typical algorithm \cite{chan_improved_1982} requiring $O(mn \min (m, n) + (\min (m, n))^3)$ floating-point operations. Storing the resulting triple $(\U, \mS, \V)$ requires $m \times \min(m, n) + \min(m,n)^2 + n \times \min(m, n)$ space. Although faster algorithms have been developed, scalability remains a concern.

However, the Frobenius norm is equal to the matrix element-wise $\ell_2$ norm, whose proximal operator is only the projection on the unit ball and doesn't require any costly matrix decomposition. The choice of the Frobenius norm is therefore justified in the high dimensional setting. A comparable use of the Frobenius norm in lieu of the Nuclear norm for representation learning can be seen in \cite{x._peng_automatic_2017}, also motivated by scalability.

\begin{figure}
\centering
\includegraphics[width=.9\columnwidth]{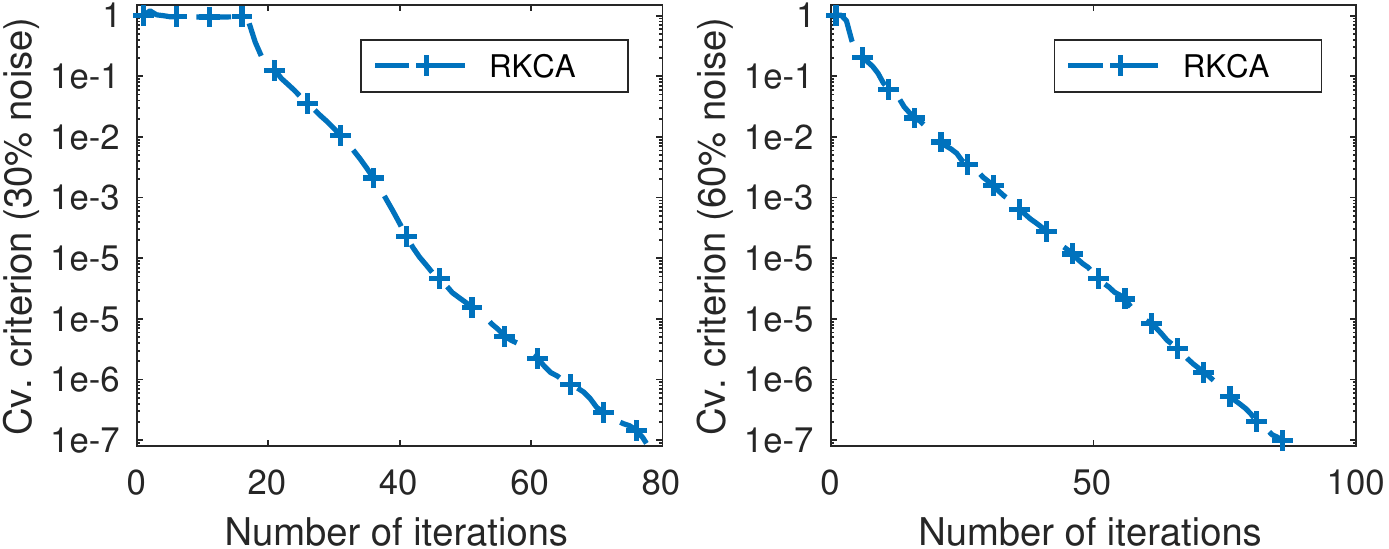}
\caption{Convergence on synthetic data with 30\% and 60\% corruption.}
\label{fig:sample_convergence}
\end{figure}

\section{Experimental evaluation\protect}
\label{sec:experiments}
% In order to demonstrate the effectiveness of our model, we compare its performance to the state-of-the-art tensor methods on two kinds of benchmarks. First, we assess the models on background subtraction tasks using two different datasets. Second, we test our model's ability for image denoising on both gray-scale and color images.

We first provide experimental verification of the correctness of Algorithm \ref{alg:RKCA} (with degree 2 regularization) on synthetic data. We then compared the performance of RKCA with degree 2 regularization against a range of state-of-the-art tensor decomposition algorithms on four low-rank modeling computer vision benchmarks: two for image denoising, and two for background subtraction. As a baseline, we report the performance of matrix Robust PCA implemented via inexact ALM (\textit{RPCA}) \cite{candes_robust_2011, lin_augmented_2010}, of Non-Negative Robust Dictionary Learning (\textit{RNNDL}) \cite{pan_robust_2014}, and of the Low-Rank Representations algorithm \cite{g._liu_robust_2013} implemented with both exact ALM (\textit{LRRe}) and inexact ALM (\textit{LRRi}). We chose the following methods to include recent representatives of various existing approaches to low-rank modeling on tensors: The singleton version of Higher-Order Robust PCA (\textit{HORPCA-S}) \cite{goldfarb_robust_2014} optimizes the Tucker rank of the tensor through the sum of the nuclear norms of its unfoldings. In \cite{y._yang_robust_2016}, the authors consider a similar model but with robust M-estimators as loss functions, either a Cauchy loss or a Welsh loss, and support both hard and soft thresholding; we tested the soft-thresholding models (\textit{Cauchy ST} and \textit{Welsh ST}). Non-convex Tensor Robust PCA (\textit{NC TRPCA}) \cite{anandkumar_tensor_2016} adapts to tensors the matrix non-convex RPCA \cite{netrapalli_non-convex_2014}. Finally, the two Tensor RPCA algorithms \cite{c._lu_tensor_2016, z._zhang_novel_2014} (\textit{TRPCA '14} and \textit{TRPCA '16}) work with slightly different definitions of the tensor nuclear norm as a convex surrogate of the tensor multi-rank. In addition to the aforementioned low-rank matrix and tensor factorizations, we include in the comparison the recently proposed Deep Image Prior \cite{ulyanov_deep_2018} with two different architectures (\textit{DIP 1} and \textit{DIP 2}) as an example of deep learning based approach for denoising. Our choice of comparing to DIP is motivated by the fact that DIP is fully unsupervised, designed for denoising (among other tasks), does not require a large dataset for training, and is close in spirit to tensor factorizations. Given the small sizes of the datasets at hand, training deep neural networks would have been impractical.

\textit{For all models but RNNDL, we used the implementation made available by their respective authors, either publicly or on request. Our implementation of RNNDL was tested for conformity with the original paper.}

For each model, with the exception of DIP, we identified a maximum of two parameters to tune via grid-search in order to keep parameter tuning tractable. When criteria or heuristics for choosing the parameters were provided by the authors, we chose the search space around the value obtained from them. In all cases, the tuning process explored a wide range of parameters to maximize performance. In the special case of DIP, we re-used the two architectures implemented in the denoising example in the code made available online by the authors (c.f. Appendix \ref{appendix:dip}). We then trained for 2500 iterations while keeping track of the best reconstruction so-far measured by the PSNR. The best reconstruction is then saved for further comparison. Following \cite{ulyanov_deep_2018}, the input is chosen to be random. As in \cite{g._liu_robust_2013}, the data matrix itself was used as the dictionary for LRRe and LRRi.

When the performance of one method was significantly worse than that of the other, the result is not reported so as not to clutter the text (see Appendix \ref{appendix:additional_experiments}). This is the case of Separable Dictionary Learning \cite{hawe_separable_2013} whose drastically different nature renders unsuitable for robust low-rank modeling, but was compared for completeness. For the same reason, we did not compare our method against K-SVD \cite{m._aharon_k-svd:_2006}, or \cite{s._hsieh_2d_2014}.

Finally, we provide tensor completion experiments with and without gross corruption in Section \ref{sec:tensor_completion_experiments}.

\subsection{Validation on synthetic data}

We generated synthetic data following the RKCA model's assumptions by first sampling two random bases $\A$ and $\B$ of known ranks $r_{\A}$ and $r_{\B}$, $N$ Gaussian slices for the core $\tR$, and forming the ground truth $\tL = \tR \times_1 \A \times_2 \B$. We modeled additive random sparse Laplacian noise with a tensor $\tE$ whose entries are $0$ with probability $p$, and $1$ or $-1$ with equal probability otherwise. We generated data for $p = 70\%$ and $p = 40\%$, leading to a noise density of, respectively, $30\%$ and $60\%$. We measured the reconstruction error on $\tL$ and $\tE$, and the density of $\tE$ for varying values of $\lambda$, and $\alpha = \num{1e-2}$. Our model achieved near-exact recovery of both $\tL$ and $\tE$, and exact recovery of the density of $\tE$, for suitable values of $\lambda$. Evidence is presented in Figure \ref{fig:synthdata_60} for the $60\%$ noise case.

\begin{figure}
\centering
\includegraphics[width=.9\columnwidth]{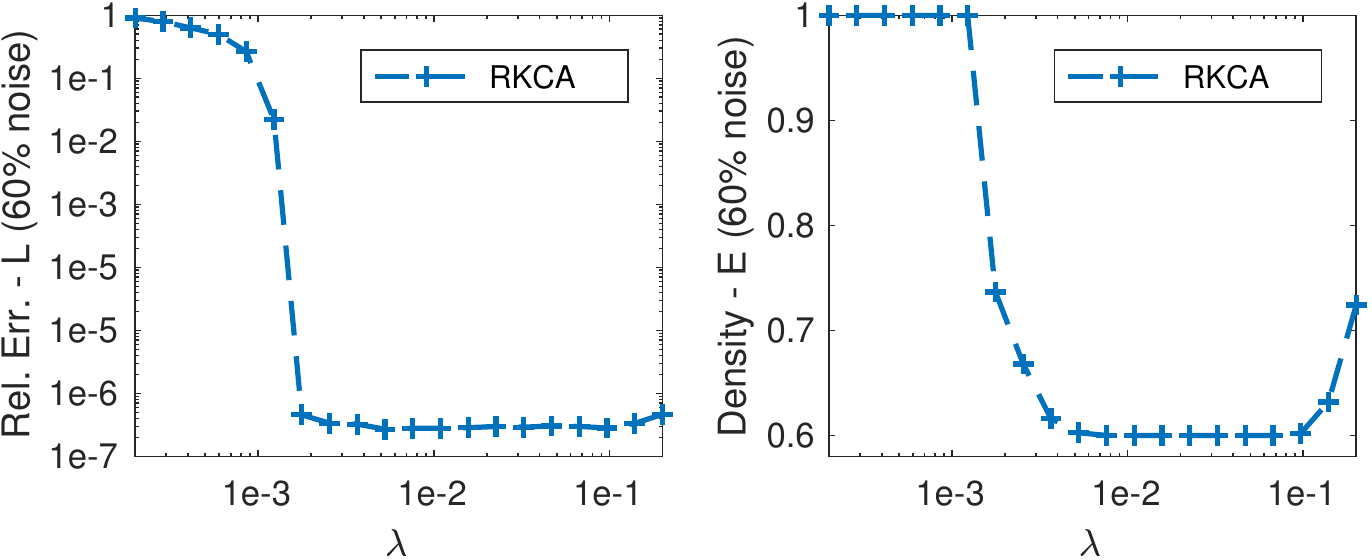}
\caption{Recovery performance with 60\% corruption. Relative $\ell_2$ error and density.}
\label{fig:synthdata_60}
\end{figure}

Algorithm \ref{alg:RKCA} appears robust to small changes in $\lambda$, which suggests more than one value can lead to optimal results, and that a simple criterion that provides consistently good reconstruction might be derived, as in Robust PCA \cite{candes_robust_2011}. In the $30\%$ noise case, we did not observe an increase in the density of $\tE$ as $\lambda$ increases, and the $\ell_2$ error on both $\tE$ and $\tL$ was of the order of \num{1e-7}.

\subsection{Background subtraction}

Background subtraction is a common task in computer vision and can be tackled by robust low-rank modeling: the static or mostly static background of a video sequence can effectively be represented as a low-rank tensor while the foreground forms a sparse component of outliers.

\subsubsection{Experimental procedure}

We compared the algorithms on two benchmarks. The first is an excerpt of the \textit{Highway} dataset \cite{n._goyette_changedetection.net:_2012}, and consists in a video sequence of cars travelling on a highway; the background is completely static. We kept 400 gray-scale images re-sized to $48 \times 64$ pixels. The second is the \textit{Airport Hall} dataset (\cite{liyuan_li_statistical_2004}) and has been chosen as a more challenging benchmark since the background is not fully static and the scene is richer. We used the same excerpt of 300 frames (frames 3301 to 3600) as in \cite{q._zhao_bayesian_2016}, and kept the frames in their original size of $144 \times 176$ pixels.

We treat background subtraction as a binary classification problem. Since ground truth frames are available for our excerpts, we report the AUC \cite{fawcett_introduction_2006} on both videos.
The value of $\alpha$ was set to \num{1e-2} for both experiments.

\subsubsection{Results}

The original, ground truth, and recovered frames are in Figure \ref{fig:visual_hall} for the \textit{Hall} experiment (\textit{Highway} in Appendix \ref{appendix:additional_experiments}).

Table \ref{tab:perf_bg} presents the AUC scores of the algorithms, ranked in order of their mean performance on the two benchmarks. The two matrix methods rank high on both benchmarks and only half of the tensor algorithms match or outperform this baseline. Our proposed model matches the best performance on the \textit{Highway} dataset and provides significantly higher performance than the other on the more challenging \textit{Hall} benchmark. Visual inspection of the results show RKCA is the only method that doesn't fully capture the immobile people in the background, and therefore achieves the best trade-off between foreground detection and background-foreground contamination.

\begin{table}[h]
\scriptsize
\centering
\resizebox{\columnwidth}{!}{
\begin{tabularx}{\columnwidth}{|>{\centering\arraybackslash}X|>{\hsize=.5\hsize\centering\arraybackslash}X|>{\hsize=.5\hsize\centering\arraybackslash}X|}\hline
\textbf{Algorithm} & \textbf{Highway} & \textbf{Hall} \\ \hline
\textbf{RKCA (proposed)} & 0.94 & 0.88 \\ \hline
TRPCA '16 & 0.94 & 0.86 \\ \hline
NC TRPCA   & 0.93 & 0.86 \\ \hline
\textit{RPCA (baseline)} & 0.94 & 0.85 \\ \hline
\textit{RNNDL (baseline)} & 0.94 & 0.85\\ \hline
\textit{LRR Exact (baseline)} & 0.94 & 0.84\\ \hline
\textit{LRR Inexact (baseline)} & 0.93 & 0.84\\ \hline
HORPCA-S  & 0.93 & 0.86 \\ \hline
Cauchy ST & 0.83 & 0.76 \\ \hline
Welsh ST & 0.82 & 0.71 \\ \hline
TRPCA '14 & 0.76 & 0.61 \\ \hline
\end{tabularx}
}
\caption[AUC scores]{AUC on \textit{Highway} and \textit{Hall} ordered by mean AUC.}
\label{tab:perf_bg}
\normalsize
\end{table}

%%%%%%%%%%%%%
\begin{figure}
\captionsetup[sub]{font=scriptsize}
\begin{subfigure}[b]{.19\linewidth}
\includegraphics[width = \linewidth]{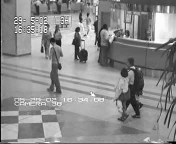} 
\caption{Original}
\end{subfigure}
\begin{subfigure}[b]{.19\linewidth}
\includegraphics[width = \linewidth]{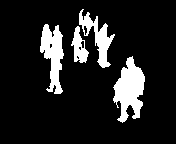}
\caption{GT}
\end{subfigure}
\begin{subfigure}[b]{.19\linewidth}
\includegraphics[width = \linewidth]{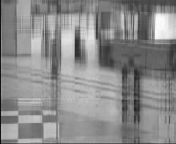} 
\caption{ Cauchy ST}
\end{subfigure}\hfill
\begin{subfigure}[b]{.19\linewidth}
\includegraphics[width = \linewidth]{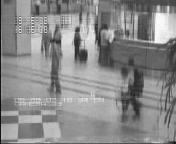} 
\caption{Welsh ST}
\end{subfigure}\hfill
\begin{subfigure}[b]{.19\linewidth}
\includegraphics[width = \linewidth]{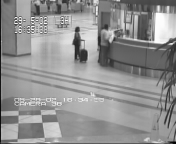}
\caption{TRPCA '16}
\end{subfigure}\hfill
\begin{subfigure}[b]{.19\linewidth}
\includegraphics[width = \linewidth]{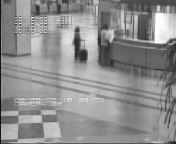} 
\caption{HORPCA-S}
\end{subfigure}\hfill
\begin{subfigure}[b]{.19\linewidth}
\includegraphics[width = \linewidth]{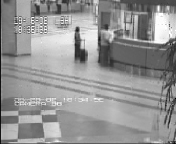} 
\caption{NCTRPCA}
\end{subfigure}\hfill
\begin{subfigure}[b]{.19\linewidth}
\includegraphics[width = \linewidth]{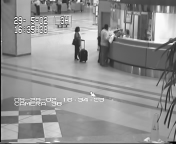} 
\caption{RNNDL}
\end{subfigure}\hfill
\begin{subfigure}[b]{.19\linewidth}
\includegraphics[width = \linewidth]{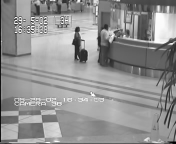} 
\caption{RPCA}
\end{subfigure}\hfill
\begin{subfigure}[b]{.19\linewidth}
\includegraphics[width = \linewidth]{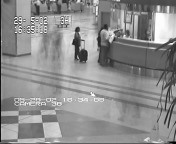} 
\caption{LRRe}
\end{subfigure}\hfill
\begin{subfigure}[b]{.19\linewidth}
\includegraphics[width = \linewidth]{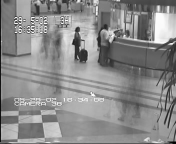} 
\caption{LRRi}
\end{subfigure}\hfill
\begin{subfigure}[b]{.19\linewidth}
\includegraphics[width = \linewidth]{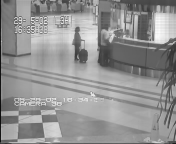}
\caption{RKCA}
\end{subfigure}\hfill
\begin{subfigure}[b]{.19\linewidth}
\caption*{}
\end{subfigure}\hfill
\begin{subfigure}[b]{.19\linewidth}
\caption*{}
\end{subfigure}\hfill
\begin{subfigure}[b]{.19\linewidth}
\caption*{}
\end{subfigure}\hfill
\caption{Background subtraction results on \textit{Airport Hall}. TRPCA '14 removed.}
\label{fig:visual_hall}
\end{figure}

%%%%%%%%%%%%%

\subsection{Image denoising}

Many natural and artificial images exhibit an inherent low-rank structure and are suitably denoised by low-rank modeling algorithms. In this section, we assess the performance of the cohort on two datasets chosen for their popularity, and for the typical use cases they represent.

We consider collections of grayscale images, and color images represented as 3-way tensors. Laplacian (salt \& pepper) noise was introduced separately in all frontal slices of the observation tensor at three different levels: 10\%, 30\%, and 60\%, to simulate medium, high, and gross corruption. In these experiments we set the value of $\alpha$ to \num{1e-3} for noise levels up to 30\%, and to \num{1e-2} at the 60\% level.

We report two complementary image quality metrics. The \textit{Peak Signal To Noise Ratio (PSNR)} will be used as an indicator of the element-wise reconstruction quality of the signals, while the \textit{Feature Similarity Index (FSIM, FSIMc for color images)} \cite{l._zhang_fsim:_2011} evaluates the recovery of structural information. Quantitative metrics are not perfect replacements for subjective assessment of image quality; therefore, we present reconstructed images for verification. Our measure of choice for determining which images to compare visually is the FSIM(c) for its higher correlation with human evaluation than the PSNR \cite{pedram_mohammadi_subjective_2014}.

\subsubsection{Monochromatic face images}
\label{sec:face_denoising}

Our face denoising experiment uses the Extented Yale-B dataset \cite{a._s._georghiades_few_2001} of 10 different subject, each under 64 different lighting conditions. According to \cite{ramamoorthi_efficient_2001, r._basri_lambertian_2003}, face images of one subject under various illuminations lie approximately on a 9-dimensional subspace, and are therefore suitable for low-rank modeling. We used the pre-cropped 64 images of the first subject and kept them at full resolution. The resulting collection of images constitutes a 3-way tensor of 64 images of size $192 \times 168$. Each mode corresponds respectively to the columns and rows of the images, and to the illumination component. All three are expected to be low-rank due to the spatial correlation within frontal slices and to the correlation between images of the same subject under different illuminations. We present the comparative quantitative performance of the methods tested in Figure \ref{fig:perf_yale}, and provide visualizations of the reconstructed first image at the 30\% noise level in Figure \ref{fig:visual_yale_30}. We report the metrics averaged on the 64 images. For DIP, we chose to use a single model for the 64 images, rather than one model per image, to provide a better comparison with the other approaches. In this case, the input dimension was chosen to be 128.

\begin{figure}
    \centering
    \includegraphics[width=.9\linewidth]{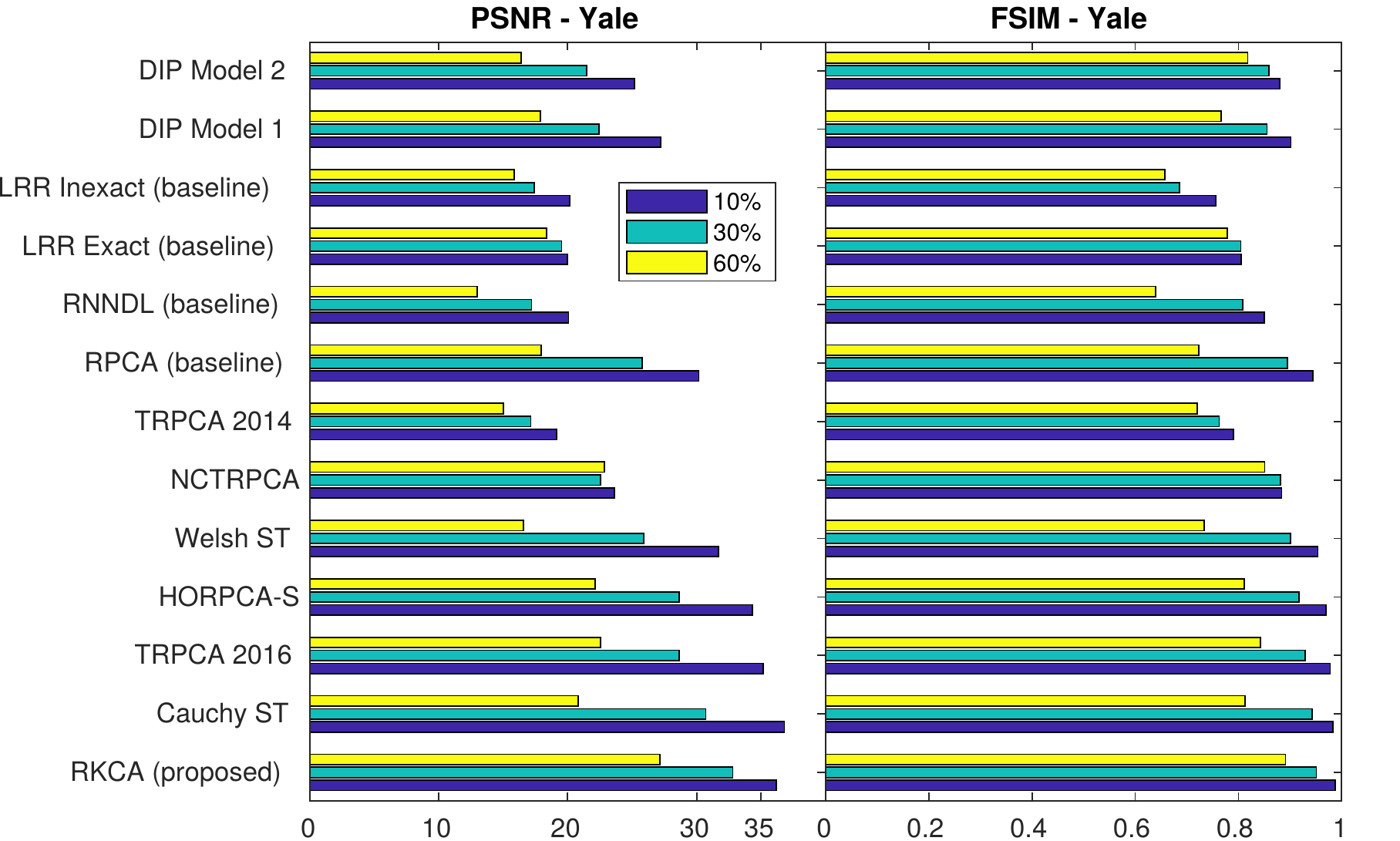}
    \caption{Mean PSNR and FSIM on the 64 images of the first subject of \textit{Yale }at noise levels 10\%, 30\%, and 60\%.}
    \label{fig:perf_yale}
\end{figure}

At the 10\% noise level, nearly every method provided good to excellent recovery of the original images. We therefore omit this noise level (cf. Appendix \ref{appendix:additional_experiments}, Table \ref{tab:full_results_yale} and figures). On the other hand, most methods, with the notable exception of RKCA, NC TRPCA, and TRPCA '16, failed to provide acceptable reconstruction in the gross corruption case. Thus, we present the denoised images at the 30\% level, and compare the performance of the three best performing methods in Table \ref{tab:top_3_yale_60} for the 60\% noise level.

Clear differences appeared at the 30\% noise level, as demonstrated both by the quantitative metrics, and by visual inspection of Figure \ref{fig:visual_yale_30}. Overall, performance was markedly lower than at the 10\% level, and most methods started to lose much of the details. Visual inspection of the results confirms a higher reconstruction quality for RKCA. We invite the reader to look at the texture of the skin, the white of the eye, and at the reflection of the light on the subject's skin and pupil. The latter, in particular, is very close in nature to the white pixel corruption of the salt \& pepper noise. Out of all methods, RKCA provided the best reconstruction quality: it is the only algorithm that removed all the noise and for which all the aforementioned details are distinguishable in the reconstruction. Both deep learning approaches provided reconstructions with visually-identifiable features, but under-performed compared to the best robust factorization models.

%%%%%%%%%%%%%
\begin{figure}
\captionsetup[sub]{font=scriptsize}
\begin{subfigure}[b]{.187\linewidth}
\includegraphics[width = \linewidth]{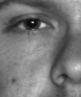} 
\caption{Original}
\end{subfigure}
\begin{subfigure}[b]{.187\linewidth}
\includegraphics[width = \linewidth]{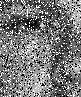}
\caption{Noisy}
\end{subfigure}
\begin{subfigure}[b]{.187\linewidth}
\includegraphics[width = \linewidth]{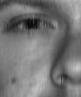} 
\caption{Cauchy ST}
\end{subfigure}\hfill
\begin{subfigure}[b]{.187\linewidth}
\includegraphics[width = \linewidth]{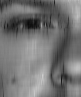} 
\caption{Welsh ST}
\end{subfigure}\hfill
%\begin{subfigure}[b]{.187\linewidth}
%\includegraphics[width = \linewidth]{} 
%\caption{TRPCA '14}
%\end{subfigure}\hfill
\begin{subfigure}[b]{.187\linewidth}
\includegraphics[width = \linewidth]{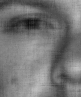}
\caption{TRPCA '16}
\end{subfigure}\hfill
\begin{subfigure}[b]{.187\linewidth}
\includegraphics[width = \linewidth]{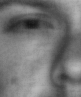} 
\caption{HORPCA-S}
\end{subfigure}\hfill
\begin{subfigure}[b]{.187\linewidth}
\includegraphics[width = \linewidth]{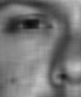} 
\caption{NCTRPCA}
\end{subfigure}\hfill
\begin{subfigure}[b]{.187\linewidth}
\includegraphics[width = \linewidth]{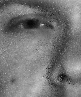} 
\caption{RNNDL}
\end{subfigure}\hfill
\begin{subfigure}[b]{.187\linewidth}
\includegraphics[width = \linewidth]{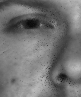} 
\caption{RPCA}
\end{subfigure}\hfill
\begin{subfigure}[b]{.187\linewidth}
\includegraphics[width = \linewidth]{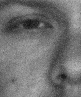}
\caption{LRRe}
\end{subfigure}
\begin{subfigure}[b]{.187\linewidth}
\includegraphics[width = \linewidth]{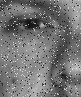}
\caption{LRRi}
\end{subfigure}
\begin{subfigure}[b]{.187\linewidth}
\includegraphics[width = \linewidth]{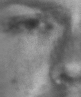}
\caption{DIP1}
\end{subfigure}
\begin{subfigure}[b]{.187\linewidth}
\includegraphics[width = \linewidth]{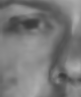}
\caption{DIP2}
\end{subfigure}
\begin{subfigure}[b]{.187\linewidth}
\includegraphics[width = \linewidth]{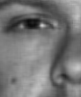}
\caption{RKCA}
\end{subfigure}
\caption{Results on the Yale benchmark with 30\% noise. TRPCA '14 removed.}
\label{fig:visual_yale_30}
\end{figure}

%%%%%%%%%%%%%

%%%%%%%%%%%%%
\begin{table}[h]
    \centering
    \begin{tabular}{cccc}
        \includegraphics[width = .187 \linewidth]{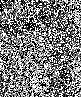} &
        \includegraphics[width = .187 \linewidth]{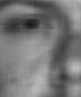} &
        \includegraphics[width = .187 \linewidth]{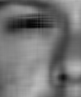} &
        \includegraphics[width = .187 \linewidth]{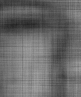}\\
         Noisy & RKCA & NC TRPCA & TRPCA '16 \Tstrut\Bstrut \\ \hline
         \textbf{PSNR} & 27.1490 & 22.8502 & 22.566 \Tstrut \\
         \textbf{FSIM} & 0.8913 & 0.8509 & 0.8427
    \end{tabular}
    \caption{Three best results on \textit{Yale} at 60\% noise.}
    \label{tab:top_3_yale_60}
\end{table}

%%%%%%%%%%%%%

At the 60\% noise level, our method scored markedly higher than its competitors on image quality metrics, as seen both in Figure \ref{fig:perf_yale} and in Table \ref{tab:top_3_yale_60}. Visualizing the reconstructions confirms the difference: the image recovered by RKCA at the 60\% noise level is comparable to the output of competing algorithms at the 30\% noise level.

\subsubsection{Color image denoising}

Our benchmark is the \textit{Facade} image \cite{x._chen_robust_2016}: the geometric nature of the building's front wall, and the strong correlation between the RGB bands indicate the data can be modeled by a low-rank 3-way tensor where each frontal slice is a color channel. The rich details and lighting make it interesting to assess fine reconstruction. The input dimension for DIP was set to 32, following the color image denoising examples.

\begin{figure}
    \centering
    \includegraphics[width=.9\linewidth]{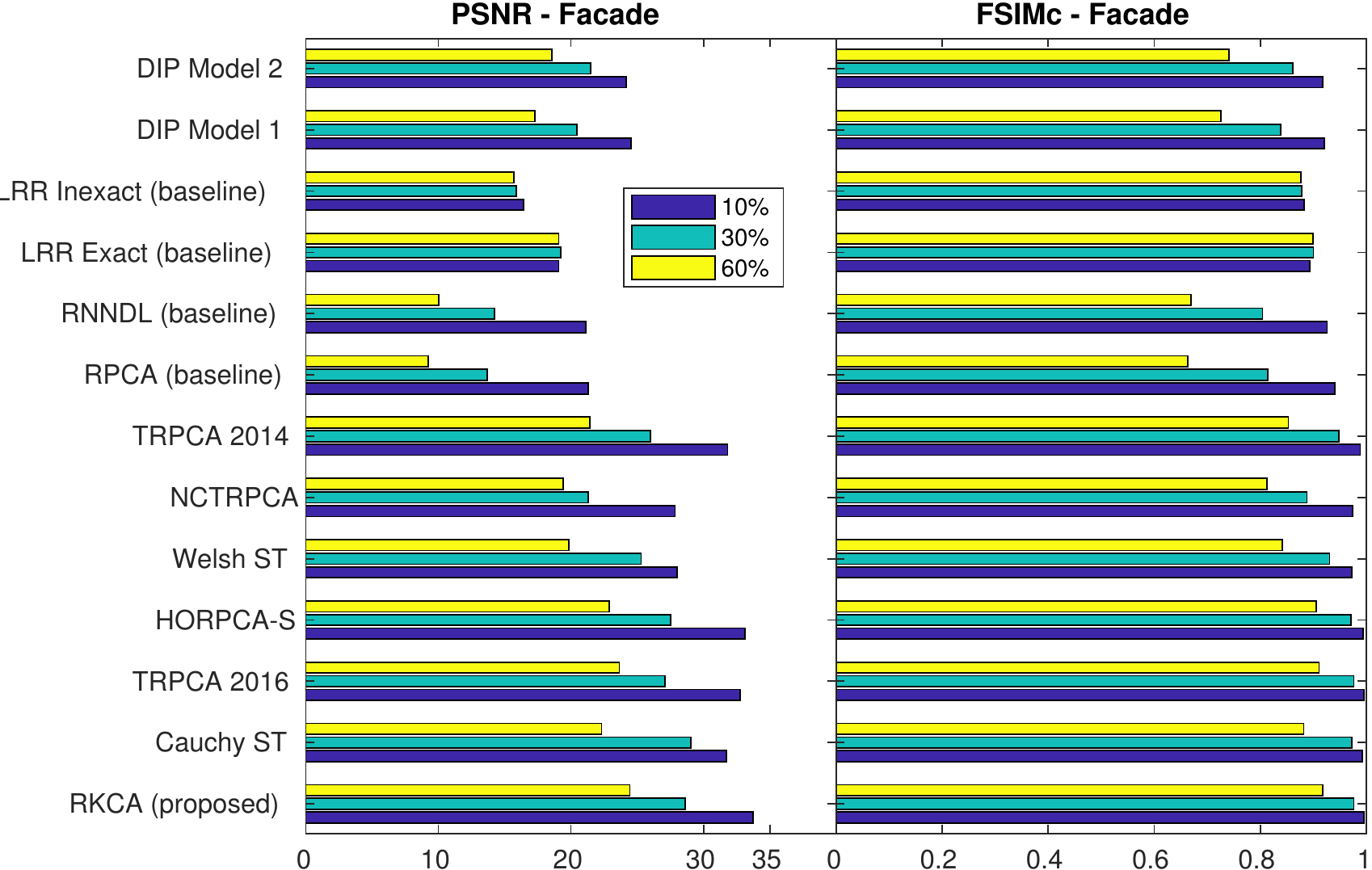}
    \caption{PSNR and FSIMc of all methods on the Facade benchmark at noise levels 10\%, 30\%, and 60\%.}
    \label{fig:perf_facade}
\end{figure}

At the 10\% noise level, RKCA attained the highest PSNR, and the highest FSIMc value. Most methods provided excellent reconstruction, in agreement with the high values of the metrics shown in Figure \ref{fig:perf_facade}. As in the previous benchmark, full results are in Appendix \ref{appendix:additional_experiments} (Table \ref{tab:full_results_facade} and figures). At the 30\% noise level, Cauchy ST exhibited the highest PSNR and RKCA the second highest, while TRPCA '16 and RKCA were tied for first place on the FSIMc metric. Details are provided in Figure \ref{fig:visual_facade_30}. Clear differences in reconstruction quality are visible, and are best seen on the fine details of the picture, such as the black iron ornaments, or the light coming through the window. Our method best preserved the dynamics of the lighting, and the sharpness of the details, and in the end provided the reconstruction visually closest to the original. Competing models tend to oversmooth the image, and to make the light dimmer; indicating substantial losses of high-frequency and dynamic information. RKCA appears to also provide the best color fidelity. Similar to the Yale-B experiment, the deep-learning models are able to handle the gross corruption to a certain extent, but suffer from more distortion than the robust factorizations.
%%%%%%%%%%%%%
\begin{figure}
\captionsetup[sub]{font=scriptsize}
\begin{subfigure}[b]{.19\linewidth}
\includegraphics[width = \linewidth]{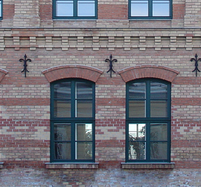} 
\caption{Original}
\end{subfigure}
\begin{subfigure}[b]{.19\linewidth}
\includegraphics[width = \linewidth]{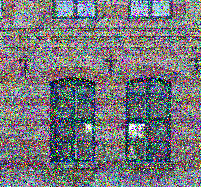}
\caption{Noisy}
\end{subfigure}
\begin{subfigure}[b]{.19\linewidth}
\includegraphics[width = \linewidth]{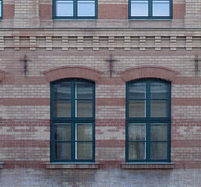} 
\caption{Cauchy ST}
\end{subfigure}\hfill
\begin{subfigure}[b]{.19\linewidth}
\includegraphics[width = \linewidth]{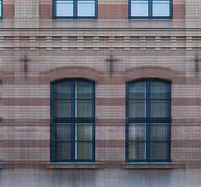} 
\caption{Welsh ST}
\end{subfigure}\hfill
\begin{subfigure}[b]{.19\linewidth}
\includegraphics[width = \linewidth]{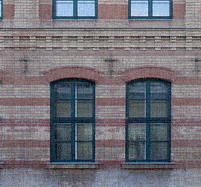} 
\caption{TRPCA '14}
\end{subfigure}\hfill
\begin{subfigure}[b]{.19\linewidth}
\includegraphics[width = \linewidth]{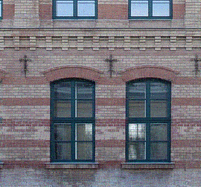}
\caption{TRPCA '16}
\end{subfigure}\hfill
\begin{subfigure}[b]{.19\linewidth}
\includegraphics[width = \linewidth]{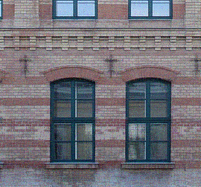} 
\caption{HORPCA-S}
\end{subfigure}\hfill
\begin{subfigure}[b]{.19\linewidth}
\includegraphics[width = \linewidth]{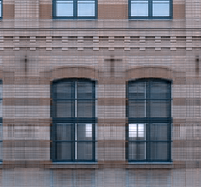} 
\caption{NCTRPCA}
\end{subfigure}\hfill
% \begin{subfigure}[b]{.19\linewidth}
% \includegraphics[width = \linewidth]{experimental_evaluation/image_denoising/colour_denoising/Figures/facade_03_rnndl_fsimc} 
% \caption{RNNDL}
% \end{subfigure}\hfill
\begin{subfigure}[b]{.19\linewidth}
\includegraphics[width = \linewidth]{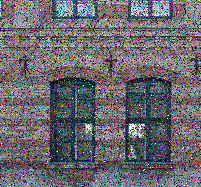} 
\caption{RPCA}
\end{subfigure}\hfill
\begin{subfigure}[b]{.19\linewidth}
\includegraphics[width = \linewidth]{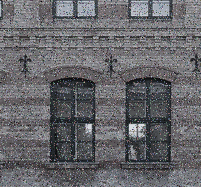} 
\caption{LRRe}
\end{subfigure}\hfill
\begin{subfigure}[b]{.19\linewidth}
\includegraphics[width = \linewidth]{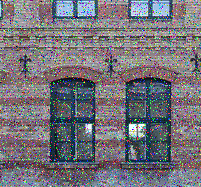} 
\caption{LRRi}
\end{subfigure}\hfill
\begin{subfigure}[b]{.19\linewidth}
\includegraphics[width = \linewidth]{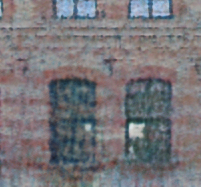} 
\caption{DIP1}
\end{subfigure}\hfill
\begin{subfigure}[b]{.19\linewidth}
\includegraphics[width = \linewidth]{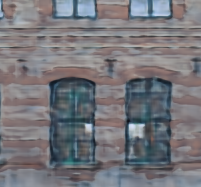} 
\caption{DIP2}
\end{subfigure}\hfill
\begin{subfigure}[b]{.19\linewidth}
\includegraphics[width = \linewidth]{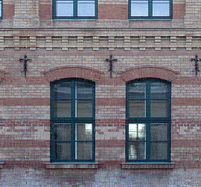}
\caption{RKCA}
\end{subfigure}
\begin{subfigure}[b]{.19\linewidth}
% Empty figure to align with the rest
\caption*{}
\end{subfigure}
\caption{Results on the Facade benchmark with 30\% noise.}
\label{fig:visual_facade_30}
\end{figure}

%%%%%%%%%%%%%

% Visual inspection is the key to identifying the best algorithms. HORPCA-S TC attained a higher PSNR and FSIMc than any other method, but visual comparison of the images clearly shows many details of the original image are lost: the brick wall is smoother: the ornaments are blurred and the light that comes through the window is dim. Similar comments can be made about TRPCA '16, and Cauchy ST. On the contrary, RKCA produces an image with well preserved colours and the sharpest details. It is striking that the dynamic of the light is much clearly recovered by our method than by the others and that the iron ornaments fade away far less. NC TRPCA also preserves a bright light but not the fine details of the picture.

%%%%%%%%%%%%%
\begin{table}[h]
    \centering
    \begin{tabular}{cccc}
        \includegraphics[width = .20 \linewidth]{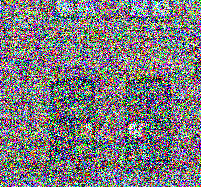} &
        \includegraphics[width = .20 \linewidth]{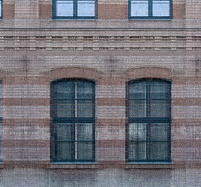} &
        \includegraphics[width = .20 \linewidth]{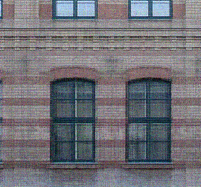} &
        \includegraphics[width = .20 \linewidth]{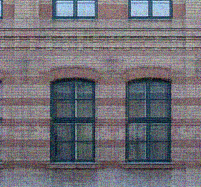}\\
         Noisy & RKCA & TRPCA '16 & HORPCA-S \Tstrut\Bstrut \\ \hline
         \textbf{PSNR} & 24.4491 & 23.6552 & 22.8811 \Tstrut \\
         \textbf{FSIMc} & 0.9179 & 0.9109 & 0.9060
    \end{tabular}
    \caption{Three best results on \textit{Facade} at 60\% noise.}
    \label{tab:top_3_facade_60}
\end{table}

%%%%%%%%%%%%%

In the gross-corruption case, RKCA was the only method with TRPCA '16 and HORPCA-S to provide a reconstruction with distinguishable details, and did it best (Figure \ref{tab:top_3_facade_60}) while achieving the highest score on both quantitative metrics. %RKCA was the best performer on both PSNR and FSIMc. As seen on Figure \ref{tab:top_3_facade_60}, 

% Overall, RKCA matches the performance of the other algorithms at the 10\% noise level and outperform all other methods at higher levels.

\subsection{Tensor completion}
\label{sec:tensor_completion_experiments}

To showcase the tensor completion capabilities of our algorithm, we implemented an LADMM method to solve problem (\ref{eq:constrained_pb_mv}) with $f(\tL) = \alpha \One{\tR} + \frac{1}{2} (\Frosq{\A} + \Frosq{\B})$. We provide comparison with one robust tensor completion model (HORPCA-S with missing values \cite{goldfarb_robust_2014}) and the matrix Robust PCA with missing values \cite{candes_robust_2011}.

\subsubsection{Yale-B with noise and missing values}
Our first experiment extends Section \ref{sec:face_denoising}: we investigate the case where apart from corruption, some values are missing. We generated the data by first introducing 30\% salt \& pepper noise, then removing 30\% of the pixels at random.

\begin{table}[h]
    \centering
    \begin{tabular}{cccc}
        \includegraphics[width = .187 \linewidth]{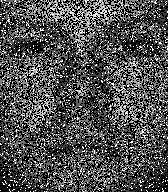} &
        \includegraphics[width = .187 \linewidth]{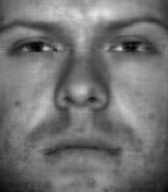} &
        \includegraphics[width = .187 \linewidth]{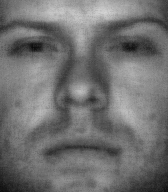} &
        \includegraphics[width = .187 \linewidth]{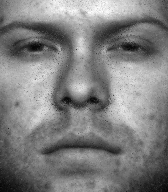}\\
         Noisy & RKCA & HORPCA-S & RPCA \Tstrut\Bstrut \\ \hline
         \textbf{PSNR} & 22.7069 & 22.3827 & 19.6597 \Tstrut \\
         \textbf{FSIM} & 0.9332 & 0.9187 & 0.8717
    \end{tabular}
    \caption{Reconstruction of the first face of Yale-B with 30\% salt \& pepper noise and 30\% missing values.}
    \label{tab:mv_salt_pepper}
\end{table}

As seen in Table \ref{tab:mv_salt_pepper}, RKCA markedly outperformed both other models in terms of FSIM, which translates into a more natural reconstruction. HORPCA-S achieved a similar PSNR but many details are lost, while RPCA removed much of the image's details and left some corruption.

\subsubsection{300 Faces in the Wild}
Our second experiment is on the completion of unwarped 3D faces with partial self-occlusions taken from the 300 faces in the wild challenge \cite{c._sagonas_300_2013,c._sagonas_semi-automatic_2013,sagonas_300_2016}.

\begin{table}[h]
    \centering
    \begin{tabular}{cccc}
        \includegraphics[width = .19 \linewidth]{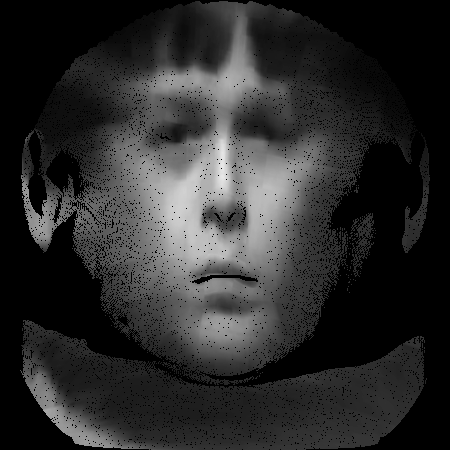} &
        \includegraphics[width = .19 \linewidth]{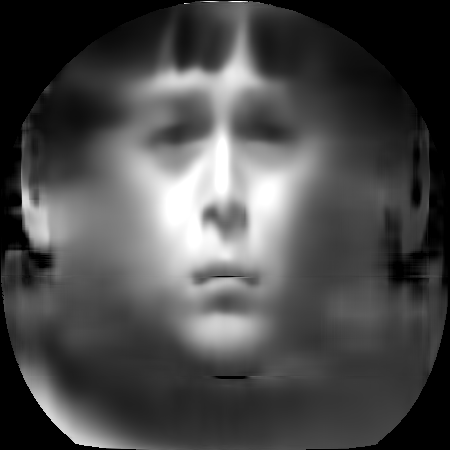} &
        \includegraphics[width = .19 \linewidth]{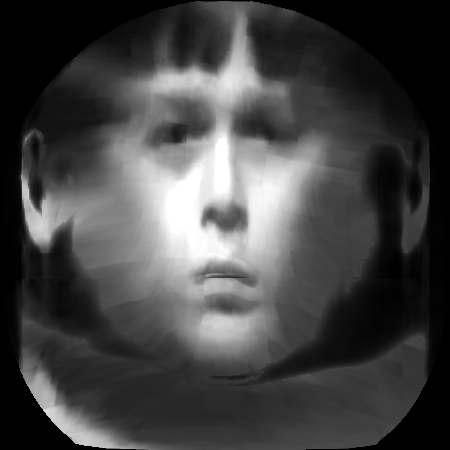} &
        \includegraphics[width = .19 \linewidth]{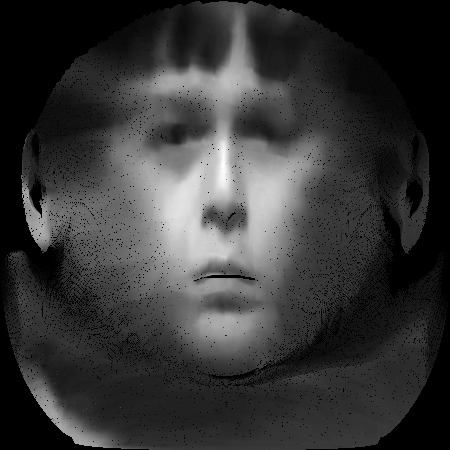}\\
        \includegraphics[width = .19 \linewidth]{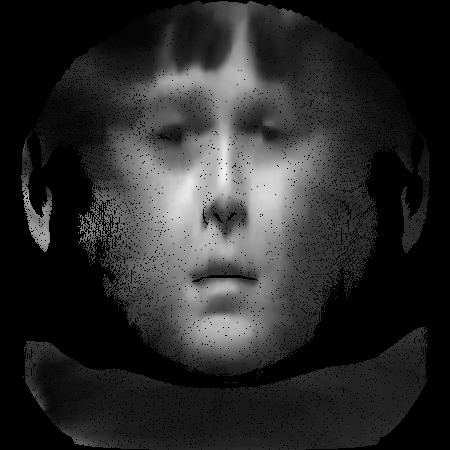} &
        \includegraphics[width = .19 \linewidth]{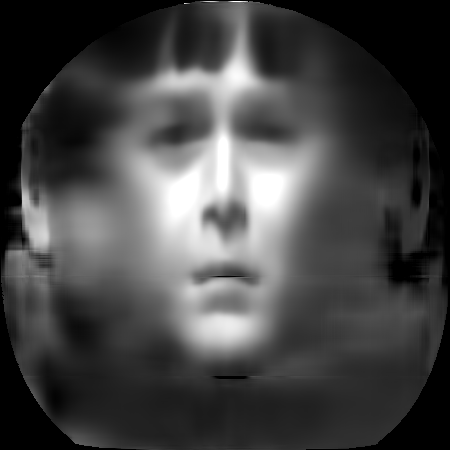} &
        \includegraphics[width = .19 \linewidth]{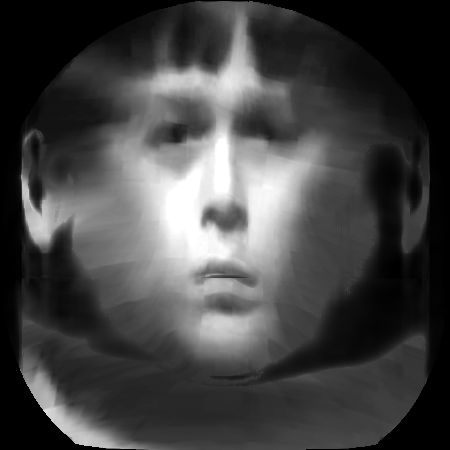} &
        \includegraphics[width = .19 \linewidth]{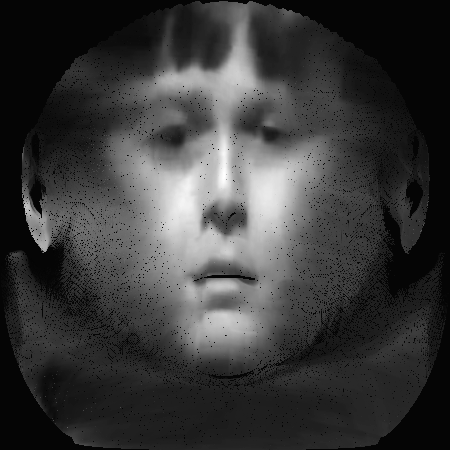}\\
        \includegraphics[width = .19 \linewidth]{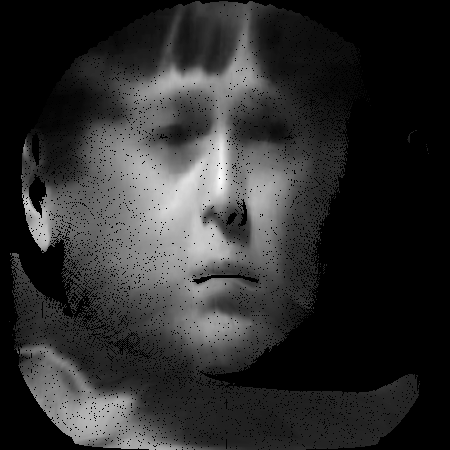} &
        \includegraphics[width = .19 \linewidth]{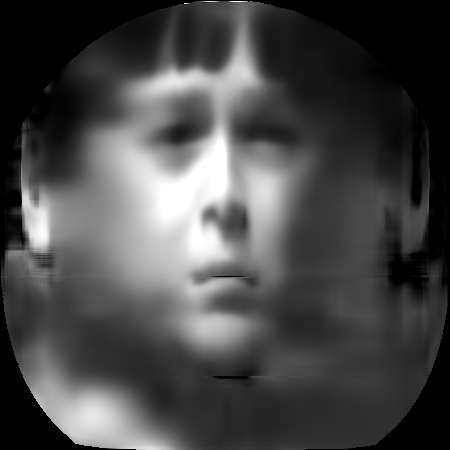} &
        \includegraphics[width = .19 \linewidth]{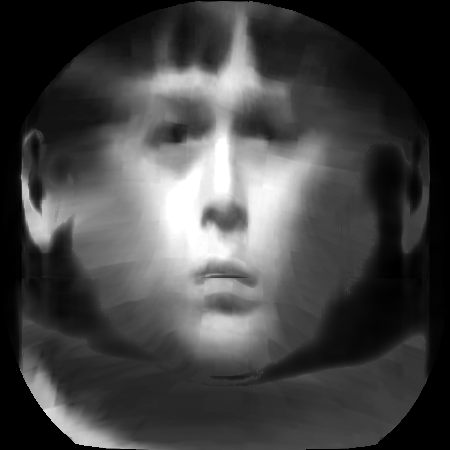} &
        \includegraphics[width = .19 \linewidth]{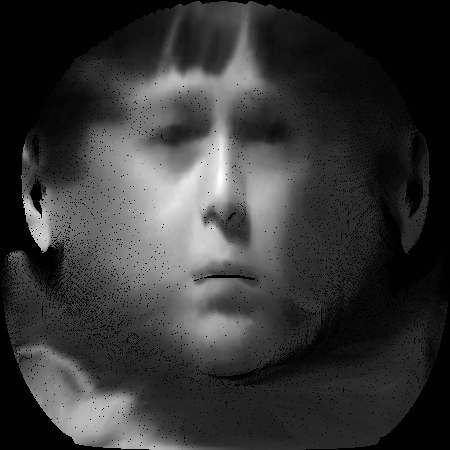}\\
        Original & RKCA & HORPCA & RPCA \Tstrut\Bstrut \\ \hline
    \end{tabular}
    \caption{Completion experiment on the 300W dataset.}
    \label{tab:comp_300w}
\end{table}

We present in Table \ref{tab:comp_300w} the occluded frames 1, 44, and 76 of a video of the dataset, and the completed frames obtained with RKCA, HORPCA-S, and RPCA. Since no corruption is present in the dataset, the $\lambda$ parameters were fixed to a high value (\num{1e4}) such that the algorithms behaved as tensor or matrix completion models. For RKCA, we bounded the rank of the reconstruction to 50 and performed grid-search on 5 values of the $\alpha$ parameter.

It is clear from Table \ref{tab:comp_300w} that RKCA provides the most completion. HORPCA-S is able to complete sparse missing values at the center of the image but is lacking on the self-occlusions at the right. Matrix RPCA failed to complete the frames on this benchmark.

\section{Conclusion}
In this work we presented RKCA, a framework for robust low-rank representation learning on separable dictionaries that expresses the problem as a tensor factorization. We proposed three different regularizers and derived the steps for both ADMM and LADMM solvers. We showed the factorization can be expressed in an equivalent way that gives optimality guarantees when coupled with a regularizer that is positively-homogeneous of degree 3. We then further discussed the low-rank properties of the models, and their practical implementation. We reached the conclusion that our model with a degree 2 regularizer, achieved a good trade-off between experimental effectiveness and parameter tuning difficulty.

Future work can seek to develop a supervised variant of RKCA that could be applied to the same tasks. Another extension that we leave for future work is to consider non-convex regularizers: it is well-known that letting $p \rightarrow 0$ in the element-wise $\ell_p$ and in the Scatten-$p$ norms recover respectively the $\ell_0$ pseudo-norm and the rank functions. We refer to \cite{papamakarios_generalised_2014} for an example in the matrix RPCA case. Finally, we believe RKCA could be extended with deep learning by replacing the factorization of the low-rank component by a deep neural network, such as an auto-encoder. Effectively, the bilinear factorization we use can be seen as an elementary linear auto-encoder, and replacing it with a more involved non-linear model would be straightforward.

% use section* for acknowledgment
\ifCLASSOPTIONcompsoc
  % The Computer Society usually uses the plural form
  \section*{Acknowledgments}
\else
  % regular IEEE prefers the singular form
  \section*{Acknowledgment}
\fi
Mehdi Bahri was partially funded by the Department of Computing, Imperial College London. 
The work of Y. Panagakis has been partially supported by the European Community Horizon 2020 [H2020/2014-2020] under Grant Agreement No. 645094 (SEWA). 
S. Zafeiriou was partially funded by EPSRC Project EP/N007743/1 (FACER2VM) and also partially funded by a Google Faculty Research Award.

% Can use something like this to put references on a page
% by themselves when using endfloat and the captionsoff option.
\ifCLASSOPTIONcaptionsoff
  \newpage
\fi

% if have a single appendix:
%\appendix[Proof of the Zonklar Equations]
% or
%\appendix  % for no appendix heading
% do not use \section anymore after \appendix, only \section*
% is possibly needed

% use appendices with more than one appendix
% then use \section to start each appendix
% you must declare a \section before using any
% \subsection or using \label (\appendices by itself
% starts a section numbered zero.)
%

% trigger a \newpage just before the given reference
% number - used to balance the columns on the last page
% adjust value as needed - may need to be readjusted if
% the document is modified later
%\IEEEtriggeratref{8}
% The "triggered" command can be changed if desired:
%\IEEEtriggercmd{\enlargethispage{-5in}}

% references section

% can use a bibliography generated by BibTeX as a .bbl file
% BibTeX documentation can be easily obtained at:
% http://mirror.ctan.org/biblio/bibtex/contrib/doc/
% The IEEEtran BibTeX style support page is at:
% http://www.michaelshell.org/tex/ieeetran/bibtex/
\bibliographystyle{IEEEtran}
% argument is your BibTeX string definitions and bibliography database(s)
\bibliography{IEEEabrv,bibliography}
%
% <OR> manually copy in the resultant .bbl file
% set second argument of \begin to the number of references
% (used to reserve space for the reference number labels box)

% biography section

% \begin{IEEEbiographynophoto}{Mehdi Bahri}
% is a PhD student with Stefanos Zafeiriou.
% \end{IEEEbiographynophoto}

% \begin{IEEEbiographynophoto}{Yannis Panagakis}
% is a Research Fellow in the Department of Computing, Imperial College London and a Lecturer with Middlesex University London.
% \end{IEEEbiographynophoto}

% % insert where needed to balance the two columns on the last page with
% % biographies
% %\newpage

% \begin{IEEEbiographynophoto}{Stefanos Zafeiriou}
% is a Reader with the Department
% of Computing, Imperial College London,
% London, U.K., and a Distinguishing Research Fellow with the University of Oulu under Finish Distinguishing Professor Programme.
% \end{IEEEbiographynophoto}

% \noindent \textbf{Mehdi Bahri} is a PhD student with Stefanos Zafeiriou.\\
% \textbf{Yannis Panagakis} is with Imperial College London.\\
% \textbf{Stefanos Zafeiriou} is a Reader at Imperial College London.

\begin{IEEEbiography}[{\includegraphics[width=1in,height=1.25in,clip,keepaspectratio]{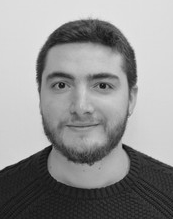}}]{Mehdi Bahri}

is a PhD student with Stefanos Zafeiriou at the Department of Computing, Imperial College London. 
He received the Dipl\^ome d'Ing\'enieur in Applied Mathematics with Honours from Grenoble Institute of Technology - Ensimag and the MSc in Advanced Computing with Distinction from Imperial College London, both in 2016. He published his research in ICCV and spent one year in the industry. His research interests include representation learning, geometric deep learning, and Bayesian learning.
\end{IEEEbiography}

% if you will not have a photo at all:
\begin{IEEEbiography}[{\includegraphics[width=1in,height=1.25in,clip,keepaspectratio]{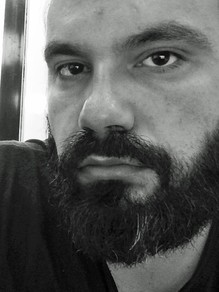}}]{Yannis Panagakis} is a Senior Lecturer (Associate Professor equivalent) in Computer Science at Middlesex University London and a Research Fellow at the Department of Computing, Imperial College London. His research interests lie in machine learning and its interface with signal processing, high-dimensional statistics, and computational optimization. Specifically, Yannis is working on models and algorithms for robust and efficient learning from high-dimensional data and signals representing audio, visual, affective, and social information. He has been awarded the prestigious Marie-Curie Fellowship, among various scholarships and awards for his studies and research. Dr Panagakis currently serves as the Managing Editor of the Image and Vision Computing Journal. He co-organized the BMVC 2017 conference and several workshops and special sessions in top venues such as ICCV. He received his PhD and MSc degrees from the Department of Informatics, Aristotle University of Thessaloniki and his BSc degree in Informatics and Telecommunication from the University of Athens, Greece.

\end{IEEEbiography}

% insert where needed to balance the two columns on the last page with
% biographies
%\newpage

\begin{IEEEbiography}[{\includegraphics[width=1in,height=1.25in,clip,keepaspectratio]{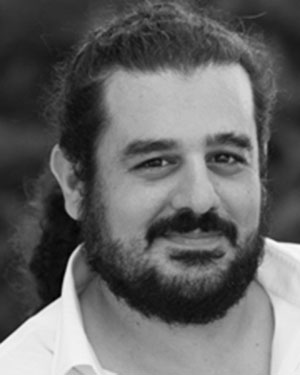}}]{Stefanos Zafeiriou} (M'09) is currently a Reader in Machine Learning and Computer Vision with the Department of Computing, Imperial College London, London, U.K, and a Distinguishing Research Fellow with the University of Oulu under the Finnish Distinguishing Professor Programme. He was a recipient of the Prestigious Junior Research Fellowships from Imperial College London in 2011 to start his own independent research group. He was the recipient of the President’s Medal for Excellence in Research Supervision for 2016. He is recipient of many best paper awards including the best student paper award in FG'2018. In 2018, he received an EPSRC Fellowship. He currently serves as an Associate Editor of the IEEE Transactions on Affective Computing and Computer Vision and Image Understanding journal. In the past he held editorship positions in IEEE Transactions on Cybernetics the Image and Vision Computing Journal. He has been a Guest Editor of over six journal special issues and co-organised over 13 workshops/special sessions on specialised computer vision topics in top venues, such as CVPR/FG/ICCV/ECCV (including three very successfully challenges run in ICCV'13, ICCV'15 and CVPR'17 on facial landmark localisation/tracking). He has co-authored over 60 journal papers mainly on novel statistical machine learning methodologies applied to computer vision problems, such as 2-D/3-D face analysis, deformable object fitting and tracking, shape from shading, and human behaviour analysis, published in the most prestigious journals in his field of research, such as the IEEE T-PAMI, the International Journal of Computer Vision, the IEEE T-IP, the IEEE T-NNLS, the IEEE T-VCG, and the IEEE T-IFS, and many papers in top conferences, such as CVPR, ICCV, ECCV, ICML. His students are frequent recipients of very prestigious and highly competitive fellowships, such as the Google Fellowship x2, the Intel Fellowship, and the Qualcomm Fellowship x3. He has more than 6700 citations to his work, h-index 42. He was the General Chair of BMVC 2017.

\end{IEEEbiography}

% % You can push biographies down or up by placing
% % a \vfill before or after them. The appropriate
% % use of \vfill depends on what kind of text is
% % on the last page and whether or not the columns
% % are being equalized.

\vfill

% Can be used to pull up biographies so that the bottom of the last one
% is flush with the other column.
%\enlargethispage{-5in}

\cleardoublepage

\appendices
\section{Proximal operators}

\subsection{Selective shrinkage operator}
\label{appendix:selective_shrinkage}
We prove the result for a general $D$-dimensional tensor $\tX$.
\begin{proof}
The proximal operator of $\One{\pi_\Omega(.)}$ satisfies:
\begin{equation}
P_{\One{\pi_\Omega(.)}}(\tX) = \argmin_{\tY} \One{\pi_\Omega(\tY)} + \frac{1}{2} \Frosq{\tX - \tY}
\end{equation}

By direct case analysis, if $i = (i_1, i_2, \ldots, i_N) \in \Omega$:
\begin{equation}
(P_{\One{\pi_\Omega(.)}})_i = \argmin_{\tY} \One{\tY_i} + \frac{1}{2} \Frosq{\tX_i - \tY_i}
\end{equation}
Which is the expression of the standard shrinkage operator $\shrink(\tX_i)$. If $i \in \bar{\Omega}$, then:
\begin{equation}
(P_{\One{\pi_\Omega(.)}})_i = \argmin_{\tY} \frac{1}{2} \Frosq{\tX_i - \tY_i} = \tX_i
\end{equation}
\end{proof}

\subsection{Schatten-$p$ norms}
\label{appendix:schatten_p_prox}

\begin{pstion}\label{prop:prox_operator_schatten_p}
Let $\A \in \RR^{m \times n}$ and $p > 0$, denote by $||.||_p$ the Schatten-$p$ norm and the $\ell_p$ norm, and by $\diag(.)$ the operator define by:
\begin{itemize}
\item $\diag(\X)$ is the main diagonal of the matrix $\X$\\
\item $\diag(\X)$ is the square matrix with the vector $x$ as main diagonal
\end{itemize}
Then if $\A = \U \mat{\Sigma} \trp{\V}$ the SVD of $\A$
\begin{equation}
\sprox{\lambda}{||.||_p}{\A} = \U \diag(\sprox{\lambda}{||.||_p}{\diag(\mat{\Sigma})}) \trp{\V}
\end{equation}
\end{pstion}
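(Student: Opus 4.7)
The plan is to establish that the proximal operator of the Schatten-$p$ norm acts on its input only through its singular values, by applying the vectorial $\ell_p$-proximal operator to $\diag(\mat{\Sigma})$, while preserving the singular-vector factors. First, I would rewrite the operator via its variational definition,
\begin{equation*}
\sprox{\lambda}{||.||_p}{\A} = \argmin_{\X} \frac{1}{2} \Frosq{\X - \A} + \lambda \|\X\|_p,
\end{equation*}
and expand $\Frosq{\X - \A} = \Frosq{\X} - 2\inner{\X}{\A} + \Frosq{\A}$, so that the only term coupling $\X$ to $\A$ is the inner product $\inner{\X}{\A} = \trr(\trp{\X}\A)$.

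Next, I would exploit the unitary invariance of both the Frobenius norm and the Schatten-$p$ norm: writing a candidate SVD $\X = \U_\X \mat{\Sigma}_\X \trp{\V_\X}$, both $\Frosq{\X}$ and $\|\X\|_p$ depend only on $\mat{\Sigma}_\X$. Hence, for any fixed choice of singular values of $\X$, minimizing the objective amounts to maximizing $\inner{\X}{\A}$ over the singular-vector factors of $\X$. This is where the main technical ingredient enters, namely von Neumann's trace inequality: $\trr(\trp{\X}\A) \leq \sum_i \sigma_i(\X)\,\sigma_i(\A)$, with equality if and only if $\X$ and $\A$ admit a simultaneous SVD. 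This forces the optimal $\X^\star$ to be of the form $\X^\star = \U \diag(\vv{s})\trp{\V}$, where $\U, \V$ are the singular-vector factors of $\A$ and $\vv{s}\in\RR_+^{\min(m,n)}$ remains to be determined.

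Substituting $\X^\star$ back in and using unitary invariance once more, the matrix problem collapses onto its singular values:
\begin{equation*}
\min_{\vv{s}\geq 0} \frac{1}{2}\Two{\vv{s} - \diag(\mat{\Sigma})}^2 + \lambda \|\vv{s}\|_p,
\end{equation*}
which is, by definition, $\sprox{\lambda}{\|.\|_p}{\diag(\mat{\Sigma})}$. The non-negativity constraint on $\vv{s}$ can be dropped because $\diag(\mat{\Sigma})$ is non-negative and the $\ell_p$-proximal operator preserves this non-negativity componentwise (each scalar sub-problem is even in $s_i$). Reassembling the factorization then yields the claimed identity.

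The main obstacle I anticipate is the ambiguity of the SVD of $\A$ when some singular values are repeated, since in that case the factors $\U, \V$ are only unique up to a block-orthogonal rotation within each eigenspace of $\trp{\A}\A$. I would handle this by noting that any valid SVD of $\A$ produces the same value of the objective for $\X^\star$, because the update commutes with such block rotations, so the right-hand side of the proposition is well-defined as a set-valued expression even though the intermediate factors are not. A minor secondary issue is that $\|\cdot\|_p$ is only a genuine norm for $p\geq 1$; the proof sketched above nevertheless goes through for all $p>0$ as stated, since neither the trace inequality nor the reduction to a separable vector problem requires convexity of $\|\cdot\|_p$.
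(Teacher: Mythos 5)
The paper never actually proves Proposition~\ref{prop:prox_operator_schatten_p}: it is stated in the appendix as a known closed form (and invoked in Section~\ref{sec:fro_vs_nuclear}), so there is no internal argument to compare yours against. Your route is the standard one and it is sound: unitary invariance of the Frobenius and Schatten-$p$ norms, plus von Neumann's trace inequality $\trr{\trp{\X}\A} \leq \sum_i \sigma_i(\X)\sigma_i(\A)$, reduces the matrix problem to the vector problem on the singular values, and the candidate $\U \diag(\sprox{\lambda}{\|.\|_p}{\diag(\mat{\Sigma})})\trp{\V}$ attains the resulting lower bound because the vector prox of the non-negative vector $\diag(\mat{\Sigma})$ can be taken non-negative; your remark that none of this needs convexity, so the identity persists for $0<p<1$ (with the prox understood as possibly set-valued), is also correct. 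Two small repairs: the parenthetical ``each scalar sub-problem is even in $s_i$'' implicitly treats the penalty as the separable sum $\sum_i |s_i|^p$, whereas the norm $(\sum_i |s_i|^p)^{1/p}$ is not separable for $p \neq 1$ --- the cleaner argument is simply that flipping the sign of any negative component of $\vv{s}$ leaves $\|\vv{s}\|_p$ unchanged while strictly decreasing the quadratic term, which needs no separability; and rather than leaning on the equality case of von Neumann (which only gives existence of \emph{some} simultaneous SVD, an issue you rightly flag for repeated singular values), it is tidier to argue by lower bound plus attainment: for every $\X$, the objective is at least $\tfrac{1}{2}\Two{\bm{\sigma}(\X)-\bm{\sigma}(\A)}^2+\lambda\|\bm{\sigma}(\X)\|_p$, hence at least the unconstrained vector-prox value, and the displayed candidate achieves exactly that value, so it is a global minimizer without any discussion of which SVD of $\A$ was chosen or whether the optimal $\vv{s}$ comes out sorted. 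Neither point affects the validity of your argument.
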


\section{Updating the bases by substitution}
\label{appendix:substitution_updates_degree_three}

In the case of the degree three regularizers, we can use the substitution method to tackle the non-smoothness of the Frobenius norm and of the Nuclear norm, regardless of the method used for the core tensor $\tR$. We present below the derivation of the corresponding updates.

Let us introduce the auxiliary variables $\U$ and $\V$ for $\A$ and $\B$ respectively. The constrained problem becomes
\begin{align}\label{eq:constrained_pb_deg_three_sub}
    \begin{matrix*}[l]
    \min_{\A, \B, \U, \V, \ten{K}, \ten{R}, \ten{E}} & \lambda \One{\ten{R}}\Fro{\B} \Fro{\A} + \lambda \One{\tE}\\
    \st & \tX = \ten{K} \times_1 \U \times_2 \V + \tE\\
    	& \tR = \tK\\
        & \A = \U\\
        & \B = \V
    \end{matrix*}
\end{align}

And the Augmented Lagrangian can easily be formulated by introducing the Lagrange multipliers and the penalty parameters $\Y_{\U}$, $\Y_{\V}$, $\mu_{\U}$, and $\mu_{\V}$.

We present the derivations for $\A$ only as the ones for $\B$ are easily found by substituting the correct variables.

We have

\resizebox{.8\linewidth}{!}{
\begin{minipage}{\linewidth}
    \begin{align}
        \A^* & = \argmin_{\A} \left[ \alpha \Fro{\B} \One{\tR} \right] \Fro{\A} + \inner{\Y_{\U}}{\A - \U} + \frac{\mu_{\U}}{2} \Frosq{\A - \U} \\
            & = \argmin_{\A} \left[ \frac{\alpha}{\mu_{\U}} \Fro{\B} \One{\tR} \right] \Fro{\A} + \frac{1}{2} \Frosq{\U - \frac{1}{\mu_{\U}} \Y_{\U} - \A} \\
            & = \sprox{\frac{\alpha}{\mu_{\U}} \Fro{\B} \One{\tR}}{\Fro{.}}{\U - \frac{1}{\mu_{\U}} \Y_{\U}}
    \end{align}
\end{minipage}
}

The update of $\U$ is obtained by derivation
\resizebox{.9\linewidth}{!}{
\begin{minipage}{\linewidth}
    \begin{equation}
        \begin{split}
            \U^* = \argmin_{\U} \frac{\mu}{2} \sum_i \Frosq{\X_i - \U \K_i \trp{\V}- \E_i} + \\ \sum_i \inner{\LL_i}{\X_i - \U \K_i \trp{\V} - \E_i} + \frac{\mu_{\U}}{2} \Frosq{\A - \U} + \inner{\Y_{\U}}{\A - \U}
        \end{split}
    \end{equation}
\end{minipage}
}

Taking the derivative with respect to $\U$ and setting to $0$ we find the Sylvester's equation for $\U^*$
\begin{equation}
\begin{split}
\U^* ( - \frac{\mu}{\mu_{\U}} \sum_i \K_i \trp{\V} \V \trp{\K_i}) - \U^* + \\ \frac{1}{\mu_{\U}} \left[ \mu_{\U} \A + \Y_{\U} + \mu \sum_i (\X_i - \E_i + \frac{1}{\mu} \LL_i) \V \trp{\K_i} \right] = 0
\end{split}
\end{equation}

\section{Proof}
\label{appendix:proof}

\subsection{Definitions on integer sequences}

Let us first define in a clear manner the concepts of periodic and group-periodic integer sequences.
\begin{defn}
\label{def:periodic_sequence}
An integer sequence $a_1, a_2, \ldots, a_n$ is said to be periodic with period $p$ (i.e., $p$-periodic) if:
\begin{equation}
\forall i \geq 1, a_i = a_{i+p}
\end{equation}
\end{defn}

We will need to manipulate sequences of periodic patterns of repeating number, for this purpose we define a group periodic sequence as follows.
\begin{defn}
\label{def:group_periodic_sequence}
We will say an integer sequence $a_1, a_2, \ldots, a_n$ is group periodic with group length $l$ and period $p$, or $(l, p)$-group periodic, if:
\begin{equation}
\forall i \geq 1, a_i = a_{i+pl}
\end{equation}
And:
\begin{equation}
\forall k \geq 0, \, \forall lk + 1 \leq i, j \leq l(k+1), \, a_i = a_j
\end{equation}

For instance, $1 \, 1 \, 0 \, 0 \, 2 \, 2 \, 1 \, 1 \, 0 \, 0 \, 2 \, 2$ is (2, 3)-group periodic.
\end{defn}

%%%%%%%%%%%%%%%%%%%%%%%%%%

Since the sum involves all $N r^2$ elements of $\tR$ it is clear such an unfolding must too. Since there are respectively $r$, $r$, and $N$ columns in $\A$, $\B$, and $\II_N$ and $\max (r, N, Nr^2) = Nr^2$, we have $s = N r^2$.

We define a bijection from $\NN^3$ to $\NN$ such that each triple index $(i, j, k)$ is mapped to a unique positive integer $l$ by setting, in the general $M$-dimensional Tucker case:
\begin{equation}\label{eq:bijection_indices}
l = 1 + \sum_{k=1}^{M}{\left(i_k - 1\right)J_k}\quad\text{and}\quad
L_k = \prod_{m=1}^{k-1}{D_m},
\end{equation}
where $D_m$ denotes the dimension along mode $m$. In our case we have $D_1 = r, \, D_2 = r,\, D_3 = N$. This re-indexing corresponds to the standard definition of the $\vvec$ operator where the columns of each frontal slice of the 3-way tensor are stacked in order.

We now describe the new factors of our decomposition:

%%%%%%%%%%%%%%%%%%%%%%%%%%

The construction of $\tilde{\A}$ and $\tilde{\B}$ must be consistent with the ordering of the elements in the re-indexing. For the reader's convenience, we visualize in Table \ref{table:mapping_correspondence} the correspondence in the simple case where $r = 2$ and $N = 3$. %, in the case of (\ref{eq:bijection_indices}) we have $\tL = \sum_{i} \sum_j \sum_k r_{i, j, k} \vv{a}_i \otimes \vv{b}_j \otimes \vv{c}_k$.
\begin{table}[h]
\centering
\renewcommand{\arraystretch}{1.2}
\begin{tabular}{|c|c|c|c|c|c|c|c|c|c|c|c|c|} \hline
$l$ & 1 & 2 & 3 & 4 & 5 & 6 & 7 & 8 & 9 & 10 & 11 & 12\\ \hline
$i$ & 1 & 2 & 1 & 2 & 1 & 2 & 1 & 2 & 1 & 2 & 1 & 2\\  \hline
$j$ & \multicolumn{2}{c|}{1} & \multicolumn{2}{c|}{2} & \multicolumn{2}{c|}{1} & \multicolumn{2}{c|}{2} & \multicolumn{2}{c|}{1} & \multicolumn{2}{c|}{2} \\ \hline
$k$ & \multicolumn{4}{c|}{1} & \multicolumn{4}{c|}{2} & \multicolumn{4}{c|}{3}\\ \hline
\end{tabular}
\caption{Illustration of our re-indexing in a small-dimensional case.}
\label{table:mapping_correspondence}
\end{table}
To map back $l$ to $i$, $j$, $k$ we observe the 3 sequences are (group) periodic. In fact, the sequence corresponding to the innermost sum of ($\ref{eq:explicit_sums_factorization}$) is $(r^2, N)$-group periodic. We find that:
\begin{equation}
k =  (\ceil{\frac{l}{r^2}} - 1) \, \text{mod} \, N + 1
\end{equation}
% Each inner sum maps to a sequence whose period is equal to the product of the periods of its outer sums.
The sequence for $j$ is $(r, r)$-group periodic and the sequence for $i$ is $r$-periodic, so we have:
\begin{equation}
j = (\ceil{\frac{l}{r}} - 1) \, \text{mod} \, r + 1
\end{equation}

And:
\begin{equation}
i = (l - 1) \, \text{mod} \, r + 1
\end{equation}

The augmented factors all have dimension $Nr^2$ over their last mode. For $\tilde{\A}$ we duplicate each column $Nr^2 - r$ times by concatenating $Nr$ copies of $\A$ on the column dimension. For $\tilde{\B}$, each column is copied $r-1$ times before stacking the next column, and the resulting matrix is concatenated $N$ times. The $\tilde{\vv{c}}_l$ are defined by:
\begin{equation}
(\tilde{\vv{c}}_l)_i = \delta_{i, (\ceil{\frac{l}{r^2}} - 1) \, \text{mod} \, N + 1}
\end{equation}

The elemental mapping of our factorization is therefore:
\begin{equation}
\phi(\sigma, \tilde{\vv{a}}, \tilde{\vv{b}}) = \sigma \cdot \tilde{\vv{a}} \otimes \tilde{\vv{b}} \otimes \bm{\delta_{., (\ceil{\frac{l}{r^2}} - 1) \, \text{mod} \, N + 1}}
\end{equation}

The factorization function is directly obtained from the elemental mapping.

\section{Linearized ADMM for scalability}
\label{appendix:ladmm}

The substitution method used in Section \ref{sec:kdrsdl} is effective in practice but comes with an additional cost that limits its scalability. Notably, the cost of solving a Stein equation for each frontal slice of $\tR$ cannot be neglected. Additionally, the added parameters can make tuning difficult. Linearization provides an alternative approach that can scale better to large dimensions and large datasets, in this appendix, we show how it can be applied to our models.

We give detailed derivations of the linearized updates for $\A$, $\B$, and $\tR$. The development for $\tR$ can directly be applied to solving problem (\ref{eq:constrained_pb}) and its variants with a simple change in the shrinkage factor.

\subsection{Overview}

We briefly remind the reader of the linearization method. Provided an unconstrained optimization problem:
\begin{equation}\label{}
\min_\vv{x} f(\vv{x}) + g(\vv{x})
\end{equation}
Where $f$ is smooth and $g$ is non-smooth and $\vv{x} \in \RR^d$. We replace $f$ by the following quadratic approximation around $\vv{y}$:
\begin{equation}
q_l(\vv{x}, \vv{y}) = f(\vv{y}) + \inner{\grad f(\vv{y})}{\vv{y} - \vv{x}} + \frac{l}{2} \Twosq{\vv{y} - \vv{x}}
\end{equation}

When $\grad f$ is Lipschitz continuous with Lipschitz constant $L$ and $l \geq L$ the quadratic approximation is an upper bound of $f$. If $g$ has proximal operator $\prox{g}{.}$ then:
\begin{align}
& g(\vv{x}) + q_l(\vv{x}, \vv{y}) \\
								& = g(\vv{y}) + f(\vv{y}) + \inner{\grad f(\vv{y})}{\vv{y} - \vv{x}} + \frac{l}{2} \Twosq{\vv{y} - \vv{x}}\\
								& = g(\vv{y}) + \frac{l}{2} \Twosq{\vv{x} - \left( \vv{y} - \frac{1}{l} \grad f(\vv{y}) \right)} + f(\vv{y})
\end{align}
And $\argmin_x g(\vv{x}) + q_l(\vv{x}, \vv{y}) = \prox{g}{\vv{y} - \frac{1}{l} \grad f(\vv{y})}$. In an iterative algorithm, we choose $\vv{x} = \vv{x}_{k+1}$ and $\vv{y} = \vv{x}_k$.

We start from the augmented Lagrangian of problem (\ref{eq:constrained_pb_deg_three}) with no auxiliary variables:

\begin{equation}\label{eq:al_no_sub}
\begin{split}
\mathcal{L}(\A, \B, \tR, \tE, \LL, \mu) = \alpha \Fro{\A} \Fro{\B} \One{\tR} + \\
\lambda \One{\tE} + \inner{\LL}{\tX - \tR \times_1 \A \times_2 \B - \tE} + \\
\frac{\mu}{2} \Frosq{\tX - \tR \times_1 \A \times_2 \B - \tE}
\end{split}
\end{equation}

\subsection{Updating the core $\tR$}

From the augmented Lagrangian of problem (\ref{eq:constrained_pb_deg_three}), updating $\tR$ requires solving the minimization problem:
\begin{equation}\label{eq:min_lagrangian_R}
\begin{split}
\min_{\tR} \alpha \Fro{\A} \Fro{\B} \One{\tR} + \\ \frac{\mu}{2} \Frosq{\tX - \tR \times_1 \A \times_2 \B - \tE + \frac{1}{\mu} \LL}
\end{split}
\end{equation}

Let $\bm{\Delta} = \tX - \tE + \frac{1}{\mu} \LL$ and $\bm{\delta} = \vvec(\Delta)$, $\vv{r} = \vvec(\tR)$. From the properties of the Kronecker product, $\vvec(\tR \times_1 \A \times_2 \B) = (\II_N \otimes \B \otimes \A) \vvec(\vv{r})$. For the sake of brevity we let $\bm{\Gamma} = \II_N \otimes \B \otimes \A$. Thanks to the separability of the $\ell_1$-norm penalty, using these notations (\ref{eq:min_lagrangian_R}) is equivalent to the following vector minimization problem:
\begin{equation}
\min_{\vv{r}} \frac{\alpha}{\mu} \Fro{\A} \Fro{\B} \One{\vv{r}} + \frac{1}{2} \Twosq{\bm{\Gamma} \vv{r} - \bm{\delta}}
\end{equation}

Vector calculus gives the following gradient for the quadratic part:
\begin{equation}
\grad_{\vv{r}}(\vv{r}) = \frac{\partial}{\partial \vv{r}} \frac{1}{2} \Twosq{\bm{\Gamma} \vv{r} - \bm{\delta}} = \trp{\bm{\Gamma}}(\bm{\Gamma} \vv{r} - \bm{\delta})
\end{equation}

The computation of the Lipschitz constant is then straightforward:
\begin{align}
\Two{\grad_{\vv{r}}(\vv{r}_1) - \grad_{\vv{r}}(\vv{r}_2)} & = \Two{\trp{\bm{\Gamma}}\bm{\Gamma}(\vv{r}_1 - \vv{r}_2)}\\
										  & \leq |||\trp{\bm{\Gamma}}\bm{\Gamma}||| \cdot \Two{\vv{r}_1 - \vv{r}_2}
\end{align}

Where $|||.|||$ denotes the induced norm, i.e., the largest singular value. We again make use of the properties of the Kronecker product to find:
\begin{align}
|||\trp{\bm{\Gamma}}\bm{\Gamma}||| & = |||\trp{(\II_N \otimes \B \otimes \A)} (\II_N \otimes \B \otimes \A)||| \\
& = |||\trp{\II_N}\II_N \otimes \Bt\B \otimes \At\A||| \\
& = |||\Bt\B||| \cdot |||\At\A||| \\\
& = \sigma_{\mathrm{max}}^2(\B) \cdot \sigma_{\mathrm{max}}^2(\A)
\end{align}

In this case, we recommend direct calculation of the constant instead of using a backtracking procedure because the matrices $\A$ and $\B$ will often be of small size (if working on small input data or if providing a small upper bound $r$ on the rank) and the computation can be performed by partial SVD in all cases.

Finally, going back to tensor form the updated $\tR^{t+1}$ is:
\begin{equation}\label{eq:ladmm_upd_R}
\shrink_{\alpha'} \left( \tR^t - \frac{1}{L_{\tR}} [(\tR^t \times_1 \A \times_2 \B) - \bm{\Delta})]\times_1 \At \times_2 \Bt \right)
\end{equation}
Where $L_{\tR} \geq \sigma_{\mathrm{max}}^2(\B) \cdot \sigma_{\mathrm{max}}^2(\A)$ and $\alpha' = \alpha$ for problem (\ref{eq:constrained_pb}) and $\alpha' = \alpha \Fro{\A} \Fro{\B}$ for problem (\ref{eq:constrained_pb_deg_three}).

\subsection{For $\A$ and $\B$}

The sub-problem for $\A$ is:
\begin{equation}\label{eq:min_lagrangian_A}
\begin{split}
\min_{\A} \alpha \Fro{\A} \Fro{\B} \One{\tR} + \\ \frac{\mu}{2} \Frosq{\tX - \tR \times_1 \A \times_2 \B - \tE + \frac{1}{\mu} \LL}
\end{split}
\end{equation}
Our problem is separable in the frontal slices of $\tX$ and therefore in the frontal slices of the components of the factorization and of the Lagrange multipliers. Using the same $\bm{\Delta}$ notation from the update of $\tR$ and denoting $\C_i = \Ri \Bt$, we solve:

\begin{align}
	\A^* = \argmin_{\A} \left[ \frac{\alpha}{\mu} \right] \Fro{\A} + \frac{1}{2} \sum_i \Frosq{\bm{\Delta}_i - \A \C_i }
\end{align}

We compute $\grad_{\A}(\A) = \frac{\partial}{\partial \A} \frac{1}{2} \sum_i \Frosq{\bm{\Delta}_i - \A \C_i } = \sum_i (\A \C_i - \bm{\Delta}_i) \trp{\C}_i$.

The Lipschitz constant is:
\begin{align}
& \Fro{\grad_{\A}(\A_1) - \grad_{\A}(\A_2)} \\
& = \Fro{\sum_i (\A_1 \C_i - \bm{\Delta}_i) \trp{\C}_i - \sum_i (\A_2 \C_i - \bm{\Delta}_i) \trp{\C}_i} \\
& = \Fro{\sum_i (\A_1 - \A_2) \C_i \trp{\C}_i } \\
& = \Fro{(\A_1 - \A_2) \sum_i \C_i \trp{\C}_i } \\
& \leq \Fro{\A_1 - \A_2} \Fro{\sum_i \C_i \trp{\C}_i }
\end{align}
From the sub-multiplicativity of Schatten norms.

\noindent Let $a = \frac{1}{\mu L_{\A}}( \alpha \Fro{\B^{t}} \One{\tR^{t}} )$, we have:
\begin{equation}
\A^{t+1} = \sprox{a}{\Fro{.}}{\A^t - \frac{1}{L_{\A} } \sum_i (\A^t \C^t_i - \bm{\Delta}^t_i)\trp{{\C^t}_i}}
\end{equation}

\noindent $L_{\A} \geq \Fro{\sum_i \C_i \trp{\C}_i } = \sum_i \Fro{ \C_i \trp{\C}_i } = \sum_i \Fro{\R_i \Bt \B \trp{\R_i}}$.

\vspace{1em}

The sub-problem for $\B$ is:
\begin{equation}\label{eq:min_lagrangian_B}
\begin{split}
\min_{\B} \alpha \Fro{\A} \Fro{\B} \One{\tR} + \\ \frac{\mu}{2} \Frosq{\tX - \tR \times_1 \A \times_2 \B - \tE + \frac{1}{\mu} \LL}
\end{split}
\end{equation}

Similarly, using the same $\bm{\Delta}$ notation from the update of $\tR$ and letting $\G^t_i = \A^{t+1}\R_i \; \forall i$ ($\G_i = \A \Ri$) and $b = \frac{1}{\mu L_{\B}}( \alpha \Fro{\A^{t+1}} \One{\tR^{t}} )$ we find:

\begin{align}
\B^* = \argmin_{\B} \left[ \frac{\alpha}{\mu} \right] \Fro{\B} + \frac{1}{2} \sum_i \Frosq{\bm{\Delta}_i - \G_i \Bt }
\end{align}

We compute $\grad_{\B}(\B) = \frac{\partial}{\partial \B} \frac{1}{2} \sum_i \Frosq{\bm{\Delta}_i - \G_i \Bt } = \sum_i (\B \trp{\G_i} - \trp{\bm{\Delta}_i}) \G_i$.

And similar to $\A$ the Lipschitz constant:
\begin{align}
& \Fro{\grad_{\B}(\B_1) - \grad_{\B}(\B_2)} \\
& = \Fro{\sum_i (\B_1 \trp{\G_i} - \trp{\bm{\Delta}_i}) \G_i - \sum_i (\B_2 \trp{\G_i} - \trp{\bm{\Delta}_i}) \G_i} \\
& = \Fro{\sum_i (\B_1 - \B_2) \trp{\G_i} \G_i } \\
& = \Fro{(\B_1 - \B_2) \sum_i \trp{\G_i} \G_i } \\
& \leq \Fro{\A_1 - \A_2} \Fro{\sum_i \trp{\G_i} \G_i }
\end{align}

And:

\begin{equation}
\B^{t+1} = \sprox{b}{\Fro{.}}{\B^t - \frac{1}{L_{\B} } \sum_i \trp{(\G^t_i \trp{{\B^t}} - \bm{\Delta}^t_i)}\G^t_i}
\end{equation}

\noindent $L_{\B} \geq \Fro{\sum_i \trp{\G_i} \G_i } = \sum_i \Fro{\trp{\R_i} \At \A \R_i}$.

Remarking that $\ten{C} = \tR \times_2 \B$ then $L_{\A} = \Fro{\C_{[2]} \trp{\C_{[2]}}}$ where $\C_{[2]}$ is the mode-$2$ matricization of $\ten{C}$. The $i^{th}$ line of $\C_{[2]}$ is the concatenation of the $i^{th}$ columns of all the frontal slices of $\ten{C}$ so $\C_{[2]} \trp{\C_{[2]}}$, the matrix of the dot products of the lines of $\C_{[2]}$, captures interactions across the $i^{th}$ rows of all the input images. Conversely, $\tG = \mathrm{permute}(\tR \times_1 \A, [2,1,3])$, so $L_{\B} = \Fro{\G_{[2]} \trp{\G_{[2]}}}$ where $\G_{[2]}$ is the mode-$2$ matricization of $\tG$. The $i^{th}$ line of $\G_{[2]}$ is the concatenation of the $i^{th}$ lines of all the frontal slices of $\tG$ so $\G_{[2]} \trp{\G_{[2]}}$, the matrix of the dot products of the lines of $\G_{[2]}$, captures interactions across the $i^{th}$ columns of all the input images.

\section{Schatten norms and the Kronecker product}

In this Section we prove the identity used in Section 2.2 on the Schatten norm of a Kronecker product.

Let us first remind the reader of the definition of the Schatten-$p$ norm:

\begin{defn}
Let $\A$ a real-valued matrix, $\A \in \RR^{m \times n}$. The Schatten-$p$ norm of $\A$ is defined as:
\[
||\A||_p = (\sum_i s_i^p)^{1/p}
\]
Where $s_i$ is the $i^{th}$ singular value of $\A$.
\end{defn}

We argued the result stems from the compatibility of the Kronecker product with the singular value decomposition, that is:

\begin{pstion}
Let $\A = \U_A \bSigma_A \trp{\V_A}, \; \B = \U_B \bSigma_B \trp{\V_B}$ two real-valued matrices given by their SVD, then:
\[
\A \otimes \B = (\U_A \otimes \U_B)(\bSigma_A \otimes \bSigma_B)(\trp{\V_A} \otimes \trp{\V_B})
\]
\end{pstion}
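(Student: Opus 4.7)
The plan is to reduce the claim to the \emph{mixed-product property} of the Kronecker product and then verify that the three factors on the right-hand side form a valid SVD of $\A\otimes\B$.

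First I would recall the mixed-product identity $(\mat{M}_1\otimes\mat{N}_1)(\mat{M}_2\otimes\mat{N}_2)=(\mat{M}_1\mat{M}_2)\otimes(\mat{N}_1\mat{N}_2)$, valid whenever the ordinary matrix products $\mat{M}_1\mat{M}_2$ and $\mat{N}_1\mat{N}_2$ are defined. Applying this identity twice to the right-hand side gives
\begin{equation*}
(\U_A\otimes\U_B)(\bSigma_A\otimes\bSigma_B)(\trp{\V_A}\otimes\trp{\V_B})
=(\U_A\bSigma_A\trp{\V_A})\otimes(\U_B\bSigma_B\trp{\V_B})
=\A\otimes\B,
\end{equation*}
which is the stated equality. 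This already proves the factorization identity.

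To justify calling it an SVD, I would next check the three structural requirements. For orthogonality, a direct application of the same mixed-product rule together with $\trp{(\mat{M}\otimes\mat{N})}=\trp{\mat{M}}\otimes\trp{\mat{N}}$ shows $\trp{(\U_A\otimes\U_B)}(\U_A\otimes\U_B)=(\trp{\U_A}\U_A)\otimes(\trp{\U_B}\U_B)=\II\otimes\II=\II$, and similarly for $\V_A\otimes\V_B$. For diagonality, I would appeal to the definition of the Kronecker product: since $\bSigma_A$ and $\bSigma_B$ are (rectangular) diagonal, every block $(\bSigma_A)_{ij}\bSigma_B$ of $\bSigma_A\otimes\bSigma_B$ is either zero or a diagonal block, so the full matrix is rectangular diagonal. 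Its nonzero entries are precisely the products $s_i(\A)\,s_j(\B)$ of pairs of singular values, giving the singular values of $\A\otimes\B$ (up to ordering).

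The only subtle point, and the one I would flag as the main obstacle, is that the diagonal entries of $\bSigma_A\otimes\bSigma_B$ are generally \emph{not} in nonincreasing order, so strictly speaking the right-hand side is an SVD only up to a simultaneous permutation of the columns of $\U_A\otimes\U_B$, the rows of $\trp{\V_A}\otimes\trp{\V_B}$, and the diagonal of $\bSigma_A\otimes\bSigma_B$. Since such a permutation does not change the multiset of singular values, this suffices for the corollary used in the paper, namely $\|\A\otimes\B\|_p=\|\A\|_p\|\B\|_p$: summing over all pairs,
\begin{equation*}
\|\A\otimes\B\|_p^{\,p}=\sum_{i,j}\bigl(s_i(\A)\,s_j(\B)\bigr)^p=\Bigl(\sum_i s_i(\A)^p\Bigr)\Bigl(\sum_j s_j(\B)^p\Bigr)=\|\A\|_p^{\,p}\,\|\B\|_p^{\,p},
\end{equation*}
and taking $p$-th roots closes the argument.
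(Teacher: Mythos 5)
Your argument is correct. Note that the paper does not actually prove this proposition: its entire ``proof'' is a pointer to a standard linear-algebra reference (Laub, \emph{Matrix Analysis for Scientists and Engineers}), so there is no in-paper derivation to compare against. Your route --- two applications of the mixed-product identity $(\mat{M}_1\otimes\mat{N}_1)(\mat{M}_2\otimes\mat{N}_2)=(\mat{M}_1\mat{M}_2)\otimes(\mat{N}_1\mat{N}_2)$ to get the factorization, then orthogonality of $\U_A\otimes\U_B$ and $\V_A\otimes\V_B$ via the same identity, and the observation that the diagonal entries of $\bSigma_A\otimes\bSigma_B$ are the products $s_i(\A)s_j(\B)$ --- is exactly the standard textbook proof the citation defers to, and it supplies what the paper leaves implicit. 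One small imprecision: when $\bSigma_A$ or $\bSigma_B$ is rectangular, $\bSigma_A\otimes\bSigma_B$ is block-diagonal with (rectangular) diagonal blocks rather than literally diagonal, so the simultaneous permutation you already invoke to reorder the singular values is also what restores the diagonal form; since permutations preserve the multiset of singular values, this does not affect the conclusion, and in particular your derivation of $\|\A\otimes\B\|_p=\|\A\|_p\|\B\|_p$ (the paper's Theorem 1 in the same appendix) goes through unchanged.
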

\begin{proof}
See \cite{laub_matrix_2004}.
\end{proof}

The identity we used is formally expressed in Theorem \ref{thm:kron_schatten}, of which we give the proof for completeness.

\setcounter{thm}{0}
\begin{thm}\label{thm:kron_schatten}
Let $\A \in \RR^{m \times n}$ and $\B \in \RR^{p \times q}$, then:
\[
\forall p > 0, ||\A \otimes \B||_p = ||\A||_p ||\B||_p
\]
Where $||.||_p$ denotes the Schatten-$p$ norm.
\end{thm}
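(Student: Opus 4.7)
The plan is to reduce the identity for the Schatten-$p$ norm to a statement about singular values, namely that the multiset of singular values of $\A \otimes \B$ coincides (up to multiplicity) with the set of products $\{s_i^A s_j^B\}$ of the singular values of $\A$ and $\B$. Once this is established, the identity
\[
\|\A \otimes \B\|_p^p = \sum_{i,j} (s_i^A s_j^B)^p = \Bigl(\sum_i (s_i^A)^p\Bigr)\Bigl(\sum_j (s_j^B)^p\Bigr) = \|\A\|_p^p \, \|\B\|_p^p
\]
is immediate by separating the double sum, and taking $p$-th roots gives the claim.

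To prove the singular value statement, I would start from the SVDs $\A = \U_A \bSigma_A \trp{\V_A}$ and $\B = \U_B \bSigma_B \trp{\V_B}$ and invoke the compatibility of the Kronecker product with the SVD (Proposition \ref{prop:compat_kron_svd}, reused here), obtaining
\[
\A \otimes \B = (\U_A \otimes \U_B)\,(\bSigma_A \otimes \bSigma_B)\,\trp{(\V_A \otimes \V_B)}.
\]
The key observations are that $\U_A \otimes \U_B$ and $\V_A \otimes \V_B$ are orthogonal (since $\trp{(\U_A \otimes \U_B)}(\U_A \otimes \U_B) = \trp{\U_A}\U_A \otimes \trp{\U_B}\U_B = \II$), and that $\bSigma_A \otimes \bSigma_B$ is a (generalized) diagonal matrix whose non-zero entries are exactly the products $s_i^A s_j^B$. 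After a suitable permutation this yields an SVD of $\A \otimes \B$ whose singular values are precisely these products.

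The main obstacle is a bookkeeping one rather than a mathematical one: when $\A$ or $\B$ is rectangular, $\bSigma_A \otimes \bSigma_B$ is not literally a diagonal matrix in the standard sense, so some care is needed to argue that its non-zero entries are nevertheless the singular values. The cleanest workaround is to avoid the SVD approach entirely and instead compute
\[
\trp{(\A \otimes \B)}(\A \otimes \B) = \trp{\A}\A \otimes \trp{\B}\B,
\]
whose eigenvalues are the products of the eigenvalues of $\trp{\A}\A$ and $\trp{\B}\B$, i.e., the squared products $(s_i^A s_j^B)^2$. Taking square roots recovers the singular values of $\A \otimes \B$ without any issue of diagonality, and the separable-sum computation above completes the proof.
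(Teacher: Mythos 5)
Your proposal is correct and follows essentially the same route as the paper: both reduce the claim to the fact that the singular values of $\A \otimes \B$ are the products $\sigma_{A,i}\sigma_{B,j}$ (the paper gets this directly from its Kronecker--SVD compatibility proposition, citing the literature) and then conclude by the separable-sum computation $\sum_{i,j}(\sigma_{A,i}\sigma_{B,j})^p = \bigl(\sum_i \sigma_{A,i}^p\bigr)\bigl(\sum_j \sigma_{B,j}^p\bigr)$. Your additional Gram-matrix argument via $\trp{(\A \otimes \B)}(\A \otimes \B) = \trp{\A}\A \otimes \trp{\B}\B$ is a nice way to sidestep the diagonality bookkeeping in the rectangular case, which the paper glosses over, but it is a refinement of the same idea rather than a different proof.
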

\begin{proof}
From Proposition 1, the singular values of $\A \otimes \B$ are the $\sigma_{A, i}\sigma_{B,j}$ so:
\begin{align*}
    ||\A \otimes \B||_p & = (\sum_{i, j} (\sigma_{A, i} \sigma_{B,j})^p)^{1/p}\\
    & = (\sum_{i, j} \sigma_{A, i}^p \sigma_{B,j}^p)^{1/p}\\
    & = (\sum_{i} \sigma_{A, i}^p \sum_j \sigma_{B,j}^p)^{1/p}\\
    & = (\sum_{i} \sigma_{A, i}^p)^{1/p} (\sum_j \sigma_{B,j}^p)^{1/p}\\
    & = ||\A||_p ||\B||_p
\end{align*}

\end{proof}

\section{Further implementation details}

\subsection{Parameter tuning and adaptive ADMM}
\label{appendix:param_tuning}

A key concern with the practical implementation of ADMM and LADMM algorithms is parameter tuning and initialization, and is exacerbated by the introduction of additional variables and constraints. A constraint of the form $\tX = \tY$ translates to additional terms $\inner{\LL, \tX - \tY} + \frac{\mu}{2} \Frosq{\tX - \tY}$ in the augmented Lagrangian of the problem, where $\LL$ is a tensor of Lagrange multipliers of the dimension of $\tX$, and $\mu$ is a penalty parameter. In standard (L)ADMM, $\mu$ is initialized and updated such that the sequence $(\mu^t)_{t \geq 1}$ is bounded and non-decreasing, generally through a simple update $\mu^{t+1} = \rho \mu^t$. The initial value $\mu^0$ and the value of $\rho$ directly influence the speed of convergence and the quality of the reconstructed images. As more constraints are added, more penalty parameters are introduced and must be tuned and updated properly, possibly independently from one another. Although the literature on ADMM with adaptive penalty updates \cite{lin_linearized_2011,xu_adaptive_2017,xu_admm_2017} suggests adaptive updates for the $\rho$ variables, these solutions also come with their added complexity and possibly with more parameters to tune. In the case of LADMM, the practitioner must also ensure an adequate choice of value is made to upper bound the Lipschitz constants of the gradients in each update.

Due to the sensitivity of the ADMM and LADMM algorithms to parameter tuning, we found that the algorithm that provided the most benefits in practice and the best reconstructions was the one that was the simplest to tune. This explains our choice of Algorithm \ref{alg:RKCA} and its LADMM variant for the experiments.

\subsection{Convergence and initialization}
\label{appendix:initialization}

Problems (\ref{eq:constrained_pb}) and (\ref{eq:constrained_pb_deg_three}) are non-convex, therefore global convergence is a priori not guaranteed. Recent work \cite{hong_convergence_2016, wang_global_2018} studies the convergence of ADMM for non-convex and possibly non-smooth objective functions with linear constraints. Here, the constraints are not linear. We proposed problem (\ref{eq:constrained_pb}) based on \cite{haeffele_structured_2014, haeffele_global_2015} so global convergence could theoretically be attained with a local descent algorithm. However, problem (\ref{eq:constrained_pb}) doesn't offer any guarantees and in both cases the (L)ADMM scheme employed does not necessarily converge to a local minimum. In this section, we provide experimental results for Algorithm \ref{alg:RKCA} and discuss the initialization strategy implemented for all the variants.

We propose a simple initialization scheme for Algorithm \ref{alg:RKCA} and its variants adapted from \cite{lin_augmented_2010}. We initialize the bases $\A$ and $\B$ and the core $\tR$ by performing SVD on each observation $\XXi = \mat{U}_i \mat{S}_i \trp{\mat{V}_i}$. We set $\Ri = \mat{S}_i$, $\A = \frac{1}{N} \sum_i \mat{U}_i$ and $\B = \frac{1}{N} \sum_i \mat{V}_i$. To initialize the dual-variables for the constraint $\XXi - \A \Ri \Bt - \Ei = \mat{0}$, we take $\mu^0 = \frac{\eta N}{\sum_i \Fro{\XXi}}$ where $\eta$ is a scaling coefficient, chosen in practice to be $\eta = 1.25$ as in \cite{lin_augmented_2010}. We chose $\mu_{\ten{K}}^0 = \frac{\eta N}{\sum_i \Fro{\Ri}}$ and similarly for $\A$ and $\B$ when applicable. These correspond to averaging the initial values for each individual slice and its corresponding constraint. Our convergence criterion corresponds to primal-feasibility of problem (\ref{eq:constrained_pb_rpca_upper_bound}) or (\ref{eq:constrained_pb_deg_three}), and is given by $\max(\mathrm{err}_{rec}, \mathrm{err}_{\tR}, \mathrm{err}_{\A}, \mathrm{err}_{\B}) \leq \epsilon$ where $\mathrm{err}_{rec} = \max_i \frac{\Frosq{\XXi - \A \Ri \Bt - \Ei}}{\Frosq{\XXi}}$ and $\mathrm{err}_{\tR} = \max_i \frac{\Frosq{\Ri - \Ki}}{\Frosq{\Ri}}$, $\mathrm{err}_{\A} = \frac{\Frosq{\A - \U}}{\Frosq{\A}}$, and $\frac{\Frosq{\B - \V}}{\Frosq{\B}}$. For the LADMM versions we only have $\max(\mathrm{err}_{rec})$. Empirically, we obtained systematic convergence of Algorithm \ref{alg:RKCA} to a good solution, and a linear convergence rate, as shown in Figure \ref{fig:sample_convergence}. Similar results were found for the ADMM with substitution algorithm for problem (\ref{eq:constrained_pb_deg_three}) and for the LADMM variants of both problems.

\section{Deep Image Prior}
\label{appendix:dip}
We tried two different network architectures presented in the paper and in the denoising example code provided by the authors\footnote{https://github.com/DmitryUlyanov/deep-image-prior/blob/661c4f45f2416604b3afb43afc5003213e9865e7/denoising.ipynb}, abbreviated as DIP 1 and DIP 2. In both DIP implementations, we used an auto-encoder structure with skip connections. 

We used as input 128 random noise input channels for Yale database with an output of the 64 images of the person in the Yale database. That way, all the information from all the facial images was used. 

For Facade we used 64 input channels and 3 outputs. In DIP 1 the number channels down was $5\times 128$, while in DIP 2 $[8,16,32,64,128]$. 

In DIP 1 the numbers of channels up were $5\times 128$, while in DIP $[128, 64, 32, 16, 8]$. In DIP 1 the number of channels per skip connection was 4, while in DIP 2 was $[0,0,4,4]$. 

Bilinear upsampling was used in both cases with reflection padding. Adam optimiser was used with learning rate of 0.01. The structure of the input noise was the one described in the paper \cite{ulyanov_deep_2018}.

\section{Additional experimental results}
\label{appendix:additional_experiments}

In this Section we provide additional information on the results reported in the paper, as well as supporting material.

\subsection{Additional denoising results on Yale}

This section presents supplementary denoising results. In addition to the first illumination, we show the reconstruction obtained on the third illumination, and include the output of all algorithms.

\begin{table*}[h!]
    \centering
    \begin{tabular}{|c|c|c|c|c|c|c|}\hline
  & \multicolumn{2}{c|}{10.00\%} & \multicolumn{2}{c|}{30.00\%} & \multicolumn{2}{c|}{60.00\%} \\ \cline{2-7}
  & PSNR & FSIM & PSNR & FSIM & PSNR & FSIM \\ \hline
  RKCA & 36.1887 & \textbf{0.9878} & \textbf{32.7931} & \textbf{0.9509} & \textbf{27.1490} & \textbf{0.8913}\\
  Cauchy ST & \textbf{36.8013} & 0.9826 & 30.7079 & 0.9424 & 20.8269 & 0.8131\\
  TRPCA 2016 & 35.1450 & 0.9779 & 28.6409 & 0.9299 & 22.5660 & 0.8427\\
  HORPCA-S & 34.3085 & 0.9698 & 28.6301 & 0.9179 & 22.1410 & 0.8115\\
  Welsh ST & 31.6702 & 0.9539 & 25.8976 & 0.9008 & 16.5715 & 0.7335\\
  NCTRPCA & 23.6362 & 0.8839 & 22.5348 & 0.8816 & 22.8502 & 0.8509\\
  TRPCA 2014 & 19.1429 & 0.7902 & 17.1359 & 0.7627 & 15.0052 & 0.7201\\
  RPCA (baseline) & 30.1698 & 0.9440 & 25.7667 & 0.8954 & 17.9296 & 0.7234\\
  RNNDL (baseline) & 20.0746 & 0.8503 & 17.1977 & 0.8087 & 12.9945 & 0.6391\\
  LRR Exact (baseline) & 19.9762 & 0.8058 & 19.5246 & 0.8044 & 18.3642 & 0.7787\\
  LRR Inexact (baseline) & 20.1770 & 0.7565 & 17.4091 & 0.6855 & 15.8379 & 0.6576\\
  DIP Model 1 & 27.2166 & 0.9016 & 22.4471 & 0.8555 & 17.8652 & 0.7661\\
  DIP Model 2 & 25.1907 & 0.8796 & 21.4762 & 0.8588 & 16.3906 & 0.8180\\  
\hline
    \end{tabular}
    \caption{Numerical values of the image quality metrics on the Yale-B benchmark for the three noise levels. Best performance in bold.}
    \label{tab:full_results_yale}
\end{table*}

\begin{figure}[h]
\begin{subfigure}[b]{.24\linewidth}
\includegraphics[width = \linewidth]{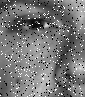}
\caption{Noisy}
\end{subfigure}
\begin{subfigure}[b]{.24\linewidth}
\includegraphics[width = \linewidth]{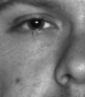} 
\caption{Original}
\end{subfigure}
\begin{subfigure}[b]{.24\linewidth}
\includegraphics[width = \linewidth]{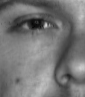} 
\caption{Cauchy ST}
\end{subfigure}\hfill
\begin{subfigure}[b]{.24\linewidth}
\includegraphics[width = \linewidth]{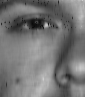} 
\caption{Welsh ST}
\end{subfigure}\hfill
\begin{subfigure}[b]{.24\linewidth}
\includegraphics[width = \linewidth]{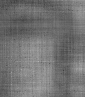} 
\caption{TRPCA '14}
\end{subfigure}\hfill
\begin{subfigure}[b]{.24\linewidth}
\includegraphics[width = \linewidth]{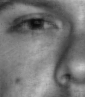}
\caption{TRPCA '16}
\end{subfigure}\hfill
\begin{subfigure}[b]{.24\linewidth}
\includegraphics[width = \linewidth]{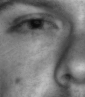} 
\caption{HORPCA-S}
\end{subfigure}\hfill
\begin{subfigure}[b]{.24\linewidth}
\includegraphics[width = \linewidth]{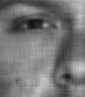} 
\caption{NCTRPCA}
\end{subfigure}\hfill
\begin{subfigure}[b]{.24\linewidth}
\includegraphics[width = \linewidth]{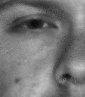} 
\caption{RNNDL}
\end{subfigure}\hfill
\begin{subfigure}[b]{.24\linewidth}
\includegraphics[width = \linewidth]{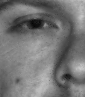} 
\caption{RPCA}
\end{subfigure}\hfill
\begin{subfigure}[b]{.24\linewidth}
\includegraphics[width = \linewidth]{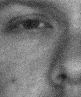} 
\caption{LRRe}
\end{subfigure}\hfill
\begin{subfigure}[b]{.24\linewidth}
\includegraphics[width = \linewidth]{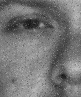} 
\caption{LRRi}
\end{subfigure}\hfill
\begin{subfigure}[b]{.24\linewidth}
\includegraphics[width = \linewidth]{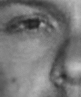} 
\caption{DIP1}
\end{subfigure}\hfill
\begin{subfigure}[b]{.24\linewidth}
\includegraphics[width = \linewidth]{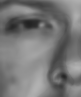} 
\caption{DIP2}
\end{subfigure}\hfill
\begin{subfigure}[b]{.24\linewidth}
\includegraphics[width = \linewidth]{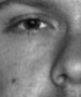}
\caption{RKCA}
\end{subfigure}
\caption{Comparative performance on the Yale benchmark with 10\% salt \& pepper noise - first illumination.}
\label{fig:visual_yale_10_i1}
\end{figure}

\begin{figure}[h]
\begin{subfigure}[b]{.24\linewidth}
\includegraphics[width = \linewidth]{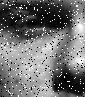}
\caption{Noisy}
\end{subfigure}
\begin{subfigure}[b]{.24\linewidth}
\includegraphics[width = \linewidth]{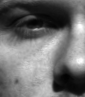} 
\caption{Original}
\end{subfigure}
\begin{subfigure}[b]{.24\linewidth}
\includegraphics[width = \linewidth]{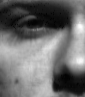} 
\caption{Cauchy ST}
\end{subfigure}\hfill
\begin{subfigure}[b]{.24\linewidth}
\includegraphics[width = \linewidth]{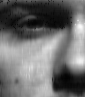} 
\caption{Welsh ST}
\end{subfigure}\hfill
\begin{subfigure}[b]{.24\linewidth}
\includegraphics[width = \linewidth]{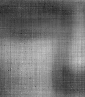} 
\caption{TRPCA '14}
\end{subfigure}\hfill
\begin{subfigure}[b]{.24\linewidth}
\includegraphics[width = \linewidth]{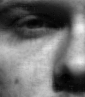}
\caption{TRPCA '16}
\end{subfigure}\hfill
\begin{subfigure}[b]{.24\linewidth}
\includegraphics[width = \linewidth]{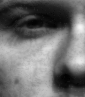} 
\caption{HORPCA-S}
\end{subfigure}\hfill
\begin{subfigure}[b]{.24\linewidth}
\includegraphics[width = \linewidth]{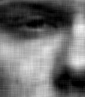} 
\caption{NCTRPCA}
\end{subfigure}\hfill
\begin{subfigure}[b]{.24\linewidth}
\includegraphics[width = \linewidth]{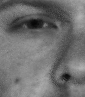} 
\caption{RNNDL}
\end{subfigure}\hfill
\begin{subfigure}[b]{.24\linewidth}
\includegraphics[width = \linewidth]{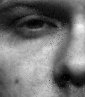} 
\caption{RPCA}
\end{subfigure}\hfill
\begin{subfigure}[b]{.24\linewidth}
\includegraphics[width = \linewidth]{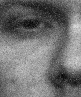} 
\caption{LRRe}
\end{subfigure}\hfill
\begin{subfigure}[b]{.24\linewidth}
\includegraphics[width = \linewidth]{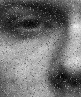} 
\caption{LRRi}
\end{subfigure}\hfill
\begin{subfigure}[b]{.24\linewidth}
\includegraphics[width = \linewidth]{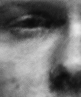} 
\caption{DIP1}
\end{subfigure}\hfill
\begin{subfigure}[b]{.24\linewidth}
\includegraphics[width = \linewidth]{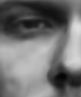} 
\caption{DIP2}
\end{subfigure}\hfill
\begin{subfigure}[b]{.24\linewidth}
\includegraphics[width = \linewidth]{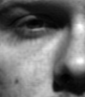}
\caption{RKCA}
\end{subfigure}
\caption{Comparative performance on the Yale benchmark with 10\% salt \& pepper noise - third illumination.}
\label{fig:visual_yale_10_i3}
\end{figure}

\begin{figure}[h]
\begin{subfigure}[b]{.24\linewidth}
\includegraphics[width = \linewidth]{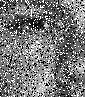}
\caption{Noisy}
\end{subfigure}
\begin{subfigure}[b]{.24\linewidth}
\includegraphics[width = \linewidth]{appendices/experimental_evaluation/reference_images/yale_1_original} 
\caption{Original}
\end{subfigure}
\begin{subfigure}[b]{.24\linewidth}
\includegraphics[width = \linewidth]{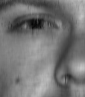} 
\caption{Cauchy ST}
\end{subfigure}\hfill
\begin{subfigure}[b]{.24\linewidth}
\includegraphics[width = \linewidth]{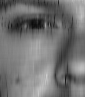} 
\caption{Welsh ST}
\end{subfigure}\hfill
\begin{subfigure}[b]{.24\linewidth}
\includegraphics[width = \linewidth]{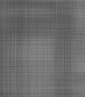} 
\caption{TRPCA '14}
\end{subfigure}\hfill
\begin{subfigure}[b]{.24\linewidth}
\includegraphics[width = \linewidth]{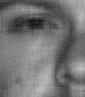}
\caption{TRPCA '16}
\end{subfigure}\hfill
\begin{subfigure}[b]{.24\linewidth}
\includegraphics[width = \linewidth]{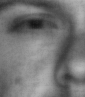} 
\caption{HORPCA-S}
\end{subfigure}\hfill
\begin{subfigure}[b]{.24\linewidth}
\includegraphics[width = \linewidth]{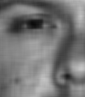} 
\caption{NCTRPCA}
\end{subfigure}\hfill
\begin{subfigure}[b]{.24\linewidth}
\includegraphics[width = \linewidth]{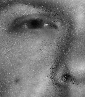} 
\caption{RNNDL}
\end{subfigure}\hfill
\begin{subfigure}[b]{.24\linewidth}
\includegraphics[width = \linewidth]{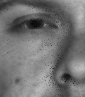} 
\caption{RPCA}
\end{subfigure}\hfill
\begin{subfigure}[b]{.24\linewidth}
\includegraphics[width = \linewidth]{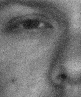} 
\caption{LRRe}
\end{subfigure}\hfill
\begin{subfigure}[b]{.24\linewidth}
\includegraphics[width = \linewidth]{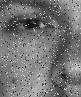} 
\caption{LRRi}
\end{subfigure}\hfill
\begin{subfigure}[b]{.24\linewidth}
\includegraphics[width = \linewidth]{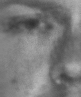} 
\caption{DIP1}
\end{subfigure}\hfill
\begin{subfigure}[b]{.24\linewidth}
\includegraphics[width = \linewidth]{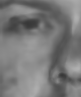} 
\caption{DIP2}
\end{subfigure}\hfill
\begin{subfigure}[b]{.24\linewidth}
\includegraphics[width = \linewidth]{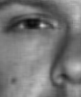}
\caption{RKCA}
\end{subfigure}
\caption{Comparative performance on the Yale benchmark with 30\% salt \& pepper noise - first illumination.}
\label{fig:visual_yale_30_i1}
\end{figure}

\begin{figure}[h]
\begin{subfigure}[b]{.24\linewidth}
\includegraphics[width = \linewidth]{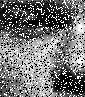}
\caption{Noisy}
\end{subfigure}
\begin{subfigure}[b]{.24\linewidth}
\includegraphics[width = \linewidth]{appendices/experimental_evaluation/reference_images/yale_3_original} 
\caption{Original}
\end{subfigure}
\begin{subfigure}[b]{.24\linewidth}
\includegraphics[width = \linewidth]{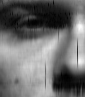} 
\caption{Cauchy ST}
\end{subfigure}\hfill
\begin{subfigure}[b]{.24\linewidth}
\includegraphics[width = \linewidth]{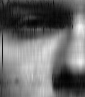} 
\caption{Welsh ST}
\end{subfigure}\hfill
\begin{subfigure}[b]{.24\linewidth}
\includegraphics[width = \linewidth]{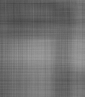} 
\caption{TRPCA '14}
\end{subfigure}\hfill
\begin{subfigure}[b]{.24\linewidth}
\includegraphics[width = \linewidth]{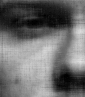}
\caption{TRPCA '16}
\end{subfigure}\hfill
\begin{subfigure}[b]{.24\linewidth}
\includegraphics[width = \linewidth]{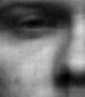} 
\caption{HORPCA-S}
\end{subfigure}\hfill
\begin{subfigure}[b]{.24\linewidth}
\includegraphics[width = \linewidth]{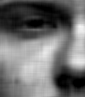} 
\caption{NCTRPCA}
\end{subfigure}\hfill
\begin{subfigure}[b]{.24\linewidth}
\includegraphics[width = \linewidth]{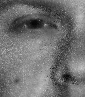} 
\caption{RNNDL}
\end{subfigure}\hfill
\begin{subfigure}[b]{.24\linewidth}
\includegraphics[width = \linewidth]{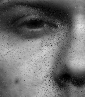} 
\caption{RPCA}
\end{subfigure}\hfill
\begin{subfigure}[b]{.24\linewidth}
\includegraphics[width = \linewidth]{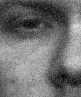} 
\caption{LRRe}
\end{subfigure}\hfill
\begin{subfigure}[b]{.24\linewidth}
\includegraphics[width = \linewidth]{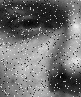} 
\caption{LRRi}
\end{subfigure}\hfill
\begin{subfigure}[b]{.24\linewidth}
\includegraphics[width = \linewidth]{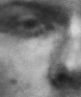} 
\caption{DIP1}
\end{subfigure}\hfill
\begin{subfigure}[b]{.24\linewidth}
\includegraphics[width = \linewidth]{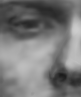} 
\caption{DIP2}
\end{subfigure}\hfill
\begin{subfigure}[b]{.24\linewidth}
\includegraphics[width = \linewidth]{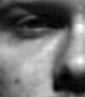}
\caption{RKCA}
\end{subfigure}
\caption{Comparative performance on the Yale benchmark with 30\% salt \& pepper noise - third illumination.}
\label{fig:visual_yale_30_i3}
\end{figure}

\begin{figure}[h]
\begin{subfigure}[b]{.24\linewidth}
\includegraphics[width = \linewidth]{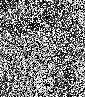}
\caption{Noisy}
\end{subfigure}
\begin{subfigure}[b]{.24\linewidth}
\includegraphics[width = \linewidth]{appendices/experimental_evaluation/reference_images/yale_1_original} 
\caption{Original}
\end{subfigure}
\begin{subfigure}[b]{.24\linewidth}
\includegraphics[width = \linewidth]{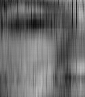} 
\caption{Cauchy ST}
\end{subfigure}\hfill
\begin{subfigure}[b]{.24\linewidth}
\includegraphics[width = \linewidth]{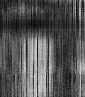} 
\caption{Welsh ST}
\end{subfigure}\hfill
\begin{subfigure}[b]{.24\linewidth}
\includegraphics[width = \linewidth]{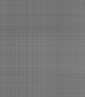} 
\caption{TRPCA '14}
\end{subfigure}\hfill
\begin{subfigure}[b]{.24\linewidth}
\includegraphics[width = \linewidth]{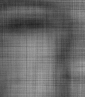}
\caption{TRPCA '16}
\end{subfigure}\hfill
\begin{subfigure}[b]{.24\linewidth}
\includegraphics[width = \linewidth]{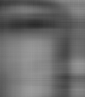} 
\caption{HORPCA-S}
\end{subfigure}\hfill
\begin{subfigure}[b]{.24\linewidth}
\includegraphics[width = \linewidth]{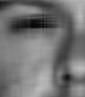} 
\caption{NCTRPCA}
\end{subfigure}\hfill
\begin{subfigure}[b]{.24\linewidth}
\includegraphics[width = \linewidth]{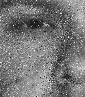} 
\caption{RNNDL}
\end{subfigure}\hfill
\begin{subfigure}[b]{.24\linewidth}
\includegraphics[width = \linewidth]{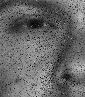} 
\caption{RPCA}
\end{subfigure}\hfill
\begin{subfigure}[b]{.24\linewidth}
\includegraphics[width = \linewidth]{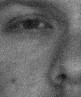} 
\caption{LRRe}
\end{subfigure}\hfill
\begin{subfigure}[b]{.24\linewidth}
\includegraphics[width = \linewidth]{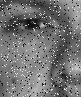} 
\caption{LRRi}
\end{subfigure}\hfill
\begin{subfigure}[b]{.24\linewidth}
\includegraphics[width = \linewidth]{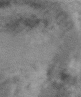} 
\caption{DIP1}
\end{subfigure}\hfill
\begin{subfigure}[b]{.24\linewidth}
\includegraphics[width = \linewidth]{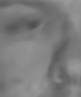} 
\caption{DIP2}
\end{subfigure}\hfill
\begin{subfigure}[b]{.24\linewidth}
\includegraphics[width = \linewidth]{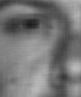}
\caption{RKCA}
\end{subfigure}
\caption{Comparative performance on the Yale benchmark with 60\% salt \& pepper noise - first illumination.}
\label{fig:visual_yale_60_i1}
\end{figure}

\begin{figure}[h]
\begin{subfigure}[b]{.24\linewidth}
\includegraphics[width = \linewidth]{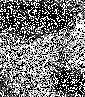}
\caption{Noisy}
\end{subfigure}
\begin{subfigure}[b]{.24\linewidth}
\includegraphics[width = \linewidth]{appendices/experimental_evaluation/reference_images/yale_3_original} 
\caption{Original}
\end{subfigure}
\begin{subfigure}[b]{.24\linewidth}
\includegraphics[width = \linewidth]{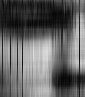} 
\caption{Cauchy ST}
\end{subfigure}\hfill
\begin{subfigure}[b]{.24\linewidth}
\includegraphics[width = \linewidth]{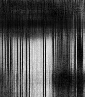} 
\caption{Welsh ST}
\end{subfigure}\hfill
\begin{subfigure}[b]{.24\linewidth}
\includegraphics[width = \linewidth]{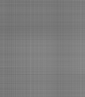} 
\caption{TRPCA '14}
\end{subfigure}\hfill
\begin{subfigure}[b]{.24\linewidth}
\includegraphics[width = \linewidth]{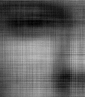}
\caption{TRPCA '16}
\end{subfigure}\hfill
\begin{subfigure}[b]{.24\linewidth}
\includegraphics[width = \linewidth]{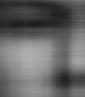} 
\caption{HORPCA-S}
\end{subfigure}\hfill
\begin{subfigure}[b]{.24\linewidth}
\includegraphics[width = \linewidth]{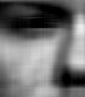} 
\caption{NCTRPCA}
\end{subfigure}\hfill
\begin{subfigure}[b]{.24\linewidth}
\includegraphics[width = \linewidth]{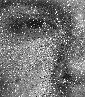} 
\caption{RNNDL}
\end{subfigure}\hfill
\begin{subfigure}[b]{.24\linewidth}
\includegraphics[width = \linewidth]{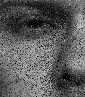} 
\caption{RPCA}
\end{subfigure}\hfill
\begin{subfigure}[b]{.24\linewidth}
\includegraphics[width = \linewidth]{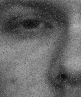} 
\caption{LRRe}
\end{subfigure}\hfill
\begin{subfigure}[b]{.24\linewidth}
\includegraphics[width = \linewidth]{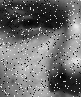} 
\caption{LRRi}
\end{subfigure}\hfill
\begin{subfigure}[b]{.24\linewidth}
\includegraphics[width = \linewidth]{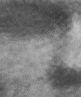} 
\caption{DIP1}
\end{subfigure}\hfill
\begin{subfigure}[b]{.24\linewidth}
\includegraphics[width = \linewidth]{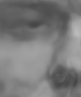} 
\caption{DIP2}
\end{subfigure}\hfill
\begin{subfigure}[b]{.24\linewidth}
\includegraphics[width = \linewidth]{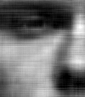}
\caption{RKCA}
\end{subfigure}
\caption{Comparative performance on the Yale benchmark with 60\% salt \& pepper noise - third illumination.}
\label{fig:visual_yale_60_i3}
\end{figure}

\clearpage
\subsection{Additional denoising results on Facade}

We present the full denoising results at the 10\% and 60\% noise level, as well as the 30\% level for completeness.

\begin{table*}[h!]
    \centering
    \begin{tabular}{|c|c|c|c|c|c|c|}\hline
 & \multicolumn{2}{c|}{10.00\%} & \multicolumn{2}{c|}{30.00\%} & \multicolumn{2}{c|}{60.00\%}\\ \cline{2-7}
 & PSNR & FSIMc & PSNR & FSIMc & PSNR & FSIMc\\ \hline
 RKCA & \textbf{33.7184} & \textbf{0.9960} & 28.6235 & \textbf{0.9764} & \textbf{24.4491} & \textbf{0.9179}\\
 Cauchy ST & 31.7107 & 0.9924 & \textbf{29.0492} & 0.9724 & 22.3150 & 0.8820\\
 TRPCA 2016 & 32.7462 & 0.9955 & 27.0979 & \textbf{0.9764} & 23.6552 & 0.9109\\
 HORPCA-S & 33.1370 & 0.9938 & 27.5200 & 0.9714 & 22.8811 & 0.9060\\
 Welsh ST & 28.0183 & 0.9725 & 25.2946 & 0.9309 & 19.8508 & 0.8421\\
 NCTRPCA & 27.8527 & 0.9744 & 21.2992 & 0.8881 & 19.4207 & 0.8121\\
 TRPCA 2014 & 31.8137 & 0.9886 & 25.9989 & 0.9484 & 21.4069 & 0.8529\\
 RPCA (baseline) & 21.3266 & 0.9404 & 13.6751 & 0.8145 & 9.2293 & 0.6633\\
 RNNDL (baseline) & 21.1150 & 0.9263 & 14.2439 & 0.8041 & 10.0129 & 0.6690\\
 LRR Exact (baseline) & 19.0759 & 0.8929 & 19.2146 & 0.9001 & 19.0804 & 0.8993\\
 LRR Inexact (baseline) & 16.4344 & 0.8830 & 15.8606 & 0.8776 & 15.7061 & 0.8762\\
 DIP Model 1 & 24.5215 & 0.9211 & 20.4301 & 0.8385 & 17.2867 & 0.7258\\
 DIP Model 2 & 24.1887 & 0.9181 & 21.4694 & 0.8612 & 18.5480 & 0.7410\\
\hline
    \end{tabular}
    \caption{Numerical values of the image quality metrics on the Facade benchmark for the three noise levels. Best performance in bold.}
    \label{tab:full_results_facade}
\end{table*}

\begin{figure}[h!]
\begin{subfigure}[b]{.24\linewidth}
\includegraphics[width = \linewidth]{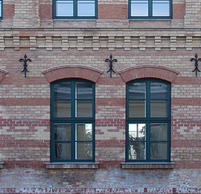} 
\caption{Original}
\end{subfigure}
\begin{subfigure}[b]{.24\linewidth}
\includegraphics[width = \linewidth]{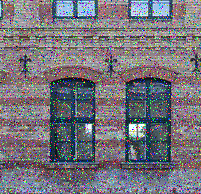}
\caption{Noisy}
\end{subfigure}
\begin{subfigure}[b]{.24\linewidth}
\includegraphics[width = \linewidth]{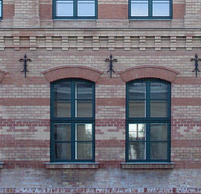} 
\caption{Cauchy ST}
\end{subfigure}\hfill
\begin{subfigure}[b]{.24\linewidth}
\includegraphics[width = \linewidth]{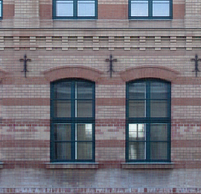} 
\caption{Welsh ST}
\end{subfigure}\hfill
\begin{subfigure}[b]{.24\linewidth}
\includegraphics[width = \linewidth]{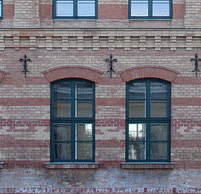} 
\caption{TRPCA '14}
\end{subfigure}\hfill
\begin{subfigure}[b]{.24\linewidth}
\includegraphics[width = \linewidth]{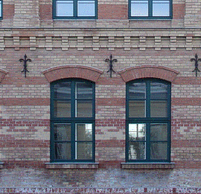}
\caption{TRPCA '16}
\end{subfigure}\hfill
\begin{subfigure}[b]{.24\linewidth}
\includegraphics[width = \linewidth]{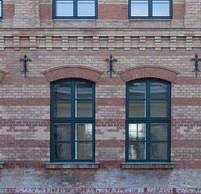} 
\caption{HORPCA-S}
\end{subfigure}\hfill
\begin{subfigure}[b]{.24\linewidth}
\includegraphics[width = \linewidth]{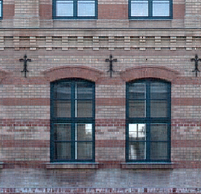} 
\caption{NCTRPCA}
\end{subfigure}\hfill
\begin{subfigure}[b]{.24\linewidth}
\includegraphics[width = \linewidth]{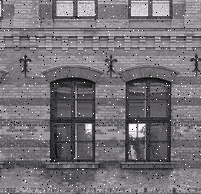} 
\caption{RNNDL}
\end{subfigure}\hfill
\begin{subfigure}[b]{.24\linewidth}
\includegraphics[width = \linewidth]{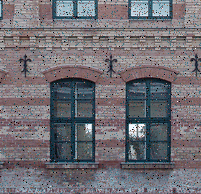} 
\caption{RPCA}
\end{subfigure}\hfill
\begin{subfigure}[b]{.24\linewidth}
\includegraphics[width = \linewidth]{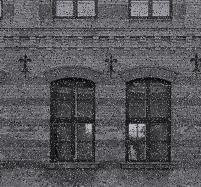} 
\caption{LRRe}
\end{subfigure}\hfill
\begin{subfigure}[b]{.24\linewidth}
\includegraphics[width = \linewidth]{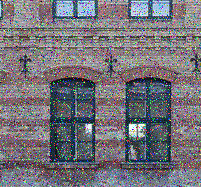} 
\caption{LRRi}
\end{subfigure}\hfill
\begin{subfigure}[b]{.24\linewidth}
\includegraphics[width = \linewidth]{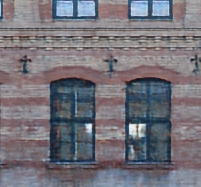} 
\caption{DIP1}
\end{subfigure}\hfill
\begin{subfigure}[b]{.24\linewidth}
\includegraphics[width = \linewidth]{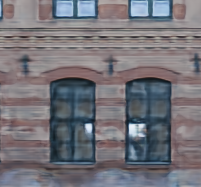} 
\caption{DIP2}
\end{subfigure}\hfill
\begin{subfigure}[b]{.24\linewidth}
\includegraphics[width = \linewidth]{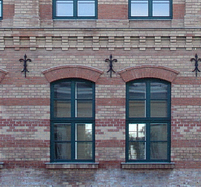}
\caption{RKCA}
\end{subfigure}
\caption{Comparative performance on the Facade benchmark with 10\% noise.}
\label{fig:visual_facade_10}
\end{figure}

\begin{figure}[h!]
\begin{subfigure}[b]{.24\linewidth}
\includegraphics[width = \linewidth]{appendices/experimental_evaluation/reference_images/facade_original} 
\caption{Original}
\end{subfigure}
\begin{subfigure}[b]{.24\linewidth}
\includegraphics[width = \linewidth]{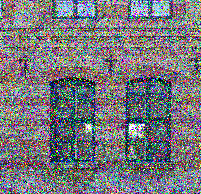}
\caption{Noisy}
\end{subfigure}
\begin{subfigure}[b]{.24\linewidth}
\includegraphics[width = \linewidth]{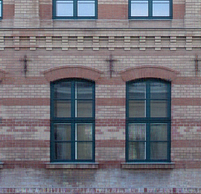} 
\caption{Cauchy ST}
\end{subfigure}\hfill
\begin{subfigure}[b]{.24\linewidth}
\includegraphics[width = \linewidth]{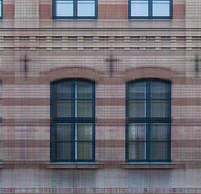} 
\caption{Welsh ST}
\end{subfigure}\hfill
\begin{subfigure}[b]{.24\linewidth}
\includegraphics[width = \linewidth]{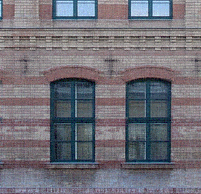} 
\caption{TRPCA '14}
\end{subfigure}\hfill
\begin{subfigure}[b]{.24\linewidth}
\includegraphics[width = \linewidth]{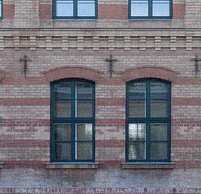}
\caption{TRPCA '16}
\end{subfigure}\hfill
\begin{subfigure}[b]{.24\linewidth}
\includegraphics[width = \linewidth]{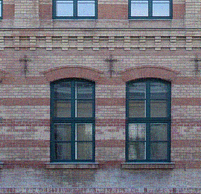} 
\caption{HORPCA-S}
\end{subfigure}\hfill
\begin{subfigure}[b]{.24\linewidth}
\includegraphics[width = \linewidth]{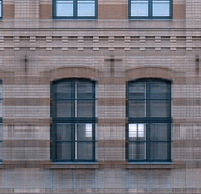} 
\caption{NCTRPCA}
\end{subfigure}\hfill
\begin{subfigure}[b]{.24\linewidth}
\includegraphics[width = \linewidth]{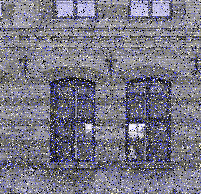} 
\caption{RNNDL}
\end{subfigure}\hfill
\begin{subfigure}[b]{.24\linewidth}
\includegraphics[width = \linewidth]{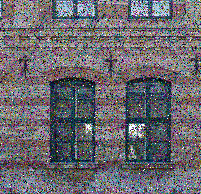} 
\caption{RPCA}
\end{subfigure}\hfill
\begin{subfigure}[b]{.24\linewidth}
\includegraphics[width = \linewidth]{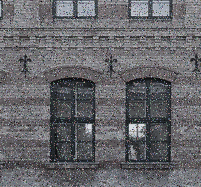} 
\caption{LRRe}
\end{subfigure}\hfill
\begin{subfigure}[b]{.24\linewidth}
\includegraphics[width = \linewidth]{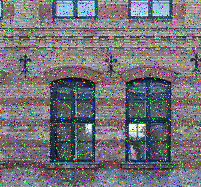} 
\caption{LRRi}
\end{subfigure}\hfill
\begin{subfigure}[b]{.24\linewidth}
\includegraphics[width = \linewidth]{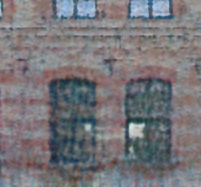} 
\caption{DIP1}
\end{subfigure}\hfill
\begin{subfigure}[b]{.24\linewidth}
\includegraphics[width = \linewidth]{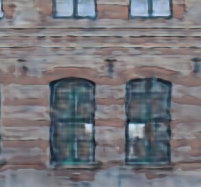} 
\caption{DIP2}
\end{subfigure}\hfill
\begin{subfigure}[b]{.24\linewidth}
\includegraphics[width = \linewidth]{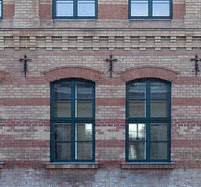}
\caption{RKCA}
\end{subfigure}
\caption{Comparative performance on the Facade benchmark with 30\% noise.}
\label{fig:visual_facade_30_appendix}
\end{figure}

\begin{figure}[h!]
\begin{subfigure}[b]{.24\linewidth}
\includegraphics[width = \linewidth]{appendices/experimental_evaluation/reference_images/facade_original} 
\caption{Original}
\end{subfigure}
\begin{subfigure}[b]{.24\linewidth}
\includegraphics[width = \linewidth]{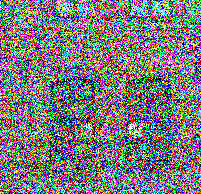}
\caption{Noisy}
\end{subfigure}
\begin{subfigure}[b]{.24\linewidth}
\includegraphics[width = \linewidth]{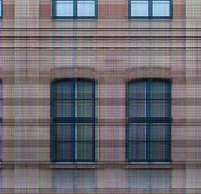} 
\caption{Cauchy ST}
\end{subfigure}\hfill
\begin{subfigure}[b]{.24\linewidth}
\includegraphics[width = \linewidth]{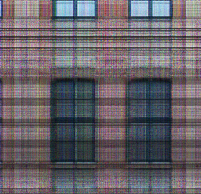} 
\caption{Welsh ST}
\end{subfigure}\hfill
\begin{subfigure}[b]{.24\linewidth}
\includegraphics[width = \linewidth]{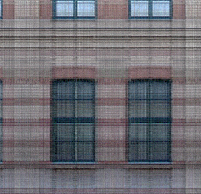} 
\caption{TRPCA '14}
\end{subfigure}\hfill
\begin{subfigure}[b]{.24\linewidth}
\includegraphics[width = \linewidth]{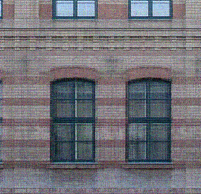}
\caption{TRPCA '16}
\end{subfigure}\hfill
\begin{subfigure}[b]{.24\linewidth}
\includegraphics[width = \linewidth]{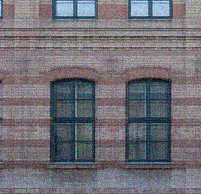} 
\caption{HORPCA-S}
\end{subfigure}\hfill
\begin{subfigure}[b]{.24\linewidth}
\includegraphics[width = \linewidth]{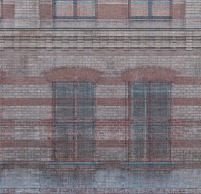} 
\caption{NCTRPCA}
\end{subfigure}\hfill
\begin{subfigure}[b]{.24\linewidth}
\includegraphics[width = \linewidth]{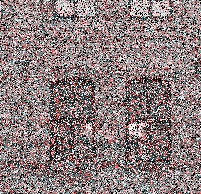} 
\caption{RNNDL}
\end{subfigure}\hfill
\begin{subfigure}[b]{.24\linewidth}
\includegraphics[width = \linewidth]{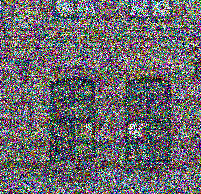} 
\caption{RPCA}
\end{subfigure}\hfill
\begin{subfigure}[b]{.24\linewidth}
\includegraphics[width = \linewidth]{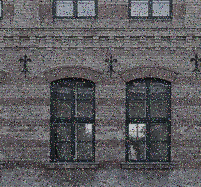} 
\caption{LRRe}
\end{subfigure}\hfill
\begin{subfigure}[b]{.24\linewidth}
\includegraphics[width = \linewidth]{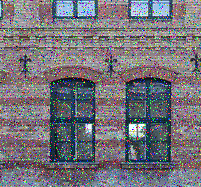} 
\caption{LRRi}
\end{subfigure}\hfill
\begin{subfigure}[b]{.24\linewidth}
\includegraphics[width = \linewidth]{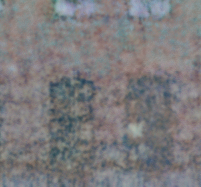} 
\caption{DIP1}
\end{subfigure}\hfill
\begin{subfigure}[b]{.24\linewidth}
\includegraphics[width = \linewidth]{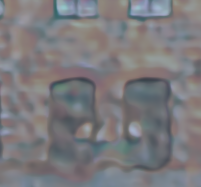} 
\caption{DIP2}
\end{subfigure}\hfill
\begin{subfigure}[b]{.24\linewidth}
\includegraphics[width = \linewidth]{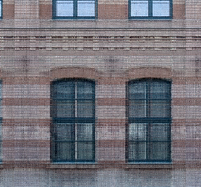}
\caption{RKCA}
\end{subfigure}
\caption{Comparative performance on the Facade benchmark with 60\% noise.}
\label{fig:visual_facade_60}
\end{figure}

\clearpage
\subsection{Background subtraction}

In this Section we present the DET curves obtained on the \textit{Highway} and \textit{Hall} experiments, and the backgrounds obtained with each algorithm on the \textit{Highway} benchmark.

\begin{figure}[h!]
\centering
\includegraphics[width=\columnwidth]{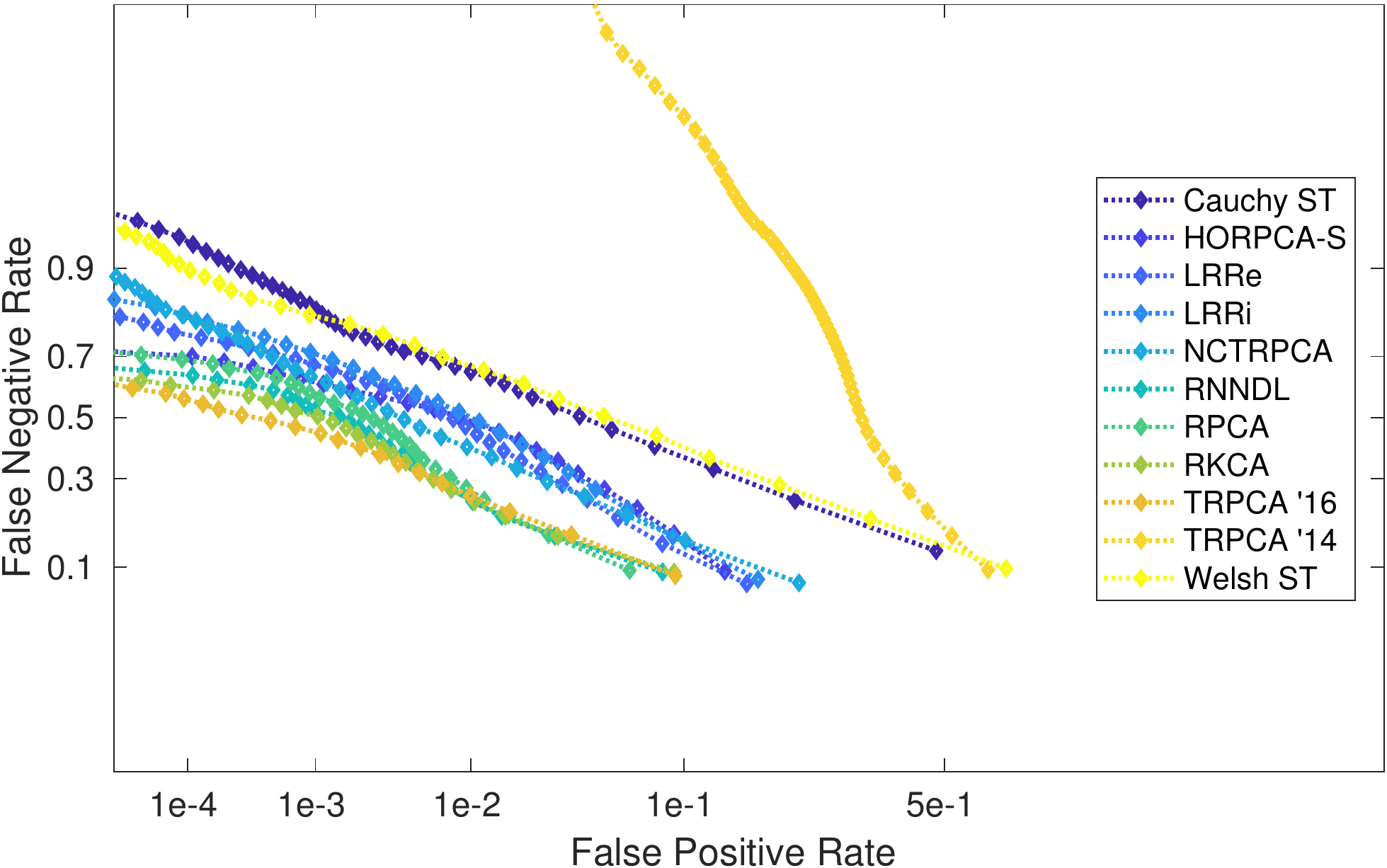}
\caption{DET curves on the \textit{Highway} dataset.}
\label{fig:det_highway}
\end{figure}

\begin{figure}[h!]
    \centering
    \includegraphics[width=\columnwidth]{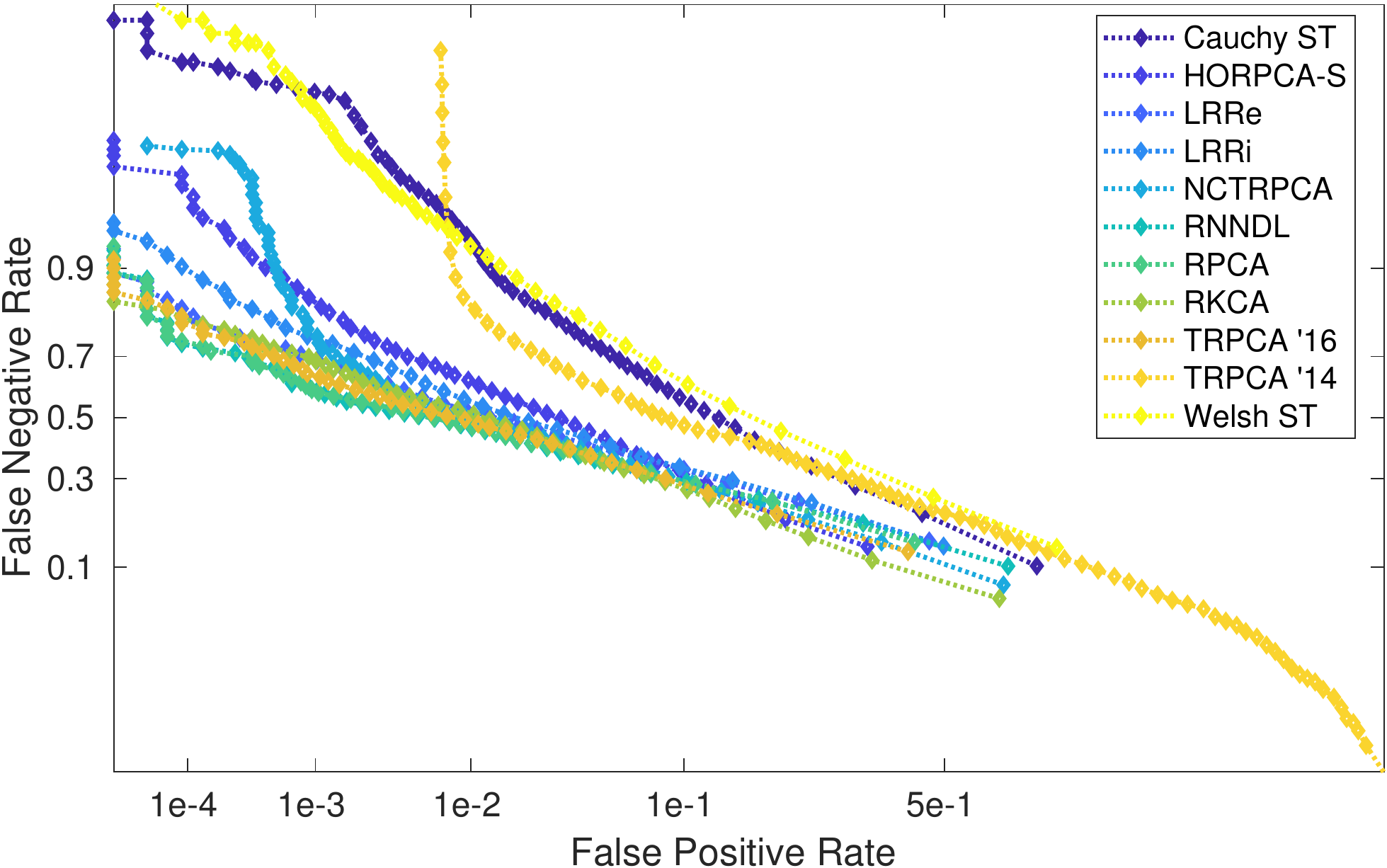}
    \caption{DET curves on the \textit{Airport Hall} dataset.}
    \label{fig:det_hall}
\end{figure}

%%%%%%%%%%%%%
\begin{figure}[h!]
\begin{subfigure}[b]{.25\linewidth}
\includegraphics[width = \linewidth]{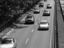} 
\caption{Original}
\end{subfigure}
\begin{subfigure}[b]{.25\linewidth}
\includegraphics[width = \linewidth]{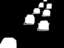}
\caption{GT}
\end{subfigure}
\begin{subfigure}[b]{.25\linewidth}
\includegraphics[width = \linewidth]{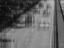} 
\caption{Cauchy ST}
\end{subfigure}\hfill
\begin{subfigure}[b]{.25\linewidth}
\includegraphics[width = \linewidth]{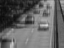} 
\caption{Welsh ST}
\end{subfigure}\hfill
\begin{subfigure}[b]{.25\linewidth}
\includegraphics[width = \linewidth]{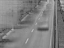}
\caption{TRPCA '14}
\end{subfigure}\hfill
\begin{subfigure}[b]{.25\linewidth}
\includegraphics[width = \linewidth]{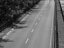}
\caption{TRPCA '16}
\end{subfigure}\hfill
\begin{subfigure}[b]{.25\linewidth}
\includegraphics[width = \linewidth]{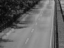} 
\caption{HORPCA-S}
\end{subfigure}\hfill
\begin{subfigure}[b]{.25\linewidth}
\includegraphics[width = \linewidth]{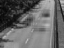} 
\caption{NCTRPCA}
\end{subfigure}\hfill
\begin{subfigure}[b]{.25\linewidth}
\includegraphics[width = \linewidth]{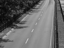} 
\caption{RNNDL}
\end{subfigure}\hfill
\begin{subfigure}[b]{.25\linewidth}
\includegraphics[width = \linewidth]{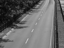} 
\caption{RPCA}
\end{subfigure}\hfill
\begin{subfigure}[b]{.25\linewidth}
\includegraphics[width = \linewidth]{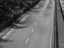} 
\caption{LRRe}
\end{subfigure}\hfill
\begin{subfigure}[b]{.25\linewidth}
\includegraphics[width = \linewidth]{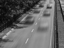} 
\caption{LRRi}
\end{subfigure}\hfill
\begin{subfigure}[b]{.25\linewidth}
\includegraphics[width = \linewidth]{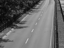}
\caption{RKCA}
\end{subfigure}
\caption{Background subtraction results on \textit{Highway}.}
\label{fig:visual_highway}
\end{figure}
%%%%%%%%%%%%%

\clearpage
\subsection{Comparison with SeDiL}

We present an experiment to assess the performance of SeDiL on the Yale-B benchmark. We chose this experiment because - as presented in the original paper \cite{hawe_separable_2013} - SeDiL is designed for denoising grayscale images.

\subsubsection{Design}

The data and the noise are the same as in the Yale-B experimental results presented in the paper. We describe the tuning procedure employed.

We tuned in SeDiL: $\kappa$ and $\lambda$ for training the dictionary, and the parameter used in FISTA for denoising that we shall denote as $\lambda_F$. In the original paper, the authors choose $\kappa = \lambda = \frac{0.1}{4w^2}$ with $w$ the dimension of the square image patches, and $\lambda_F = \frac{\sigma_{noise}}{100}$. We tuned the parameters via grid-search, choosing:
\begin{itemize}
    \item $\kappa = \frac{\kappa_0}{4w^2}$, $\kappa_0 \in \mathrm{linspace}(0.05, 0.5, 5)$
    \item $\lambda = \frac{\lambda_0}{4w^2}$, $\lambda_0 \in \mathrm{linspace}(0.05, 0.5, 5)$
    \item $\lambda_F \in 5*\mathrm{logspace}(-4, 0, 15)$
\end{itemize}
We kept $\rho = 100$ and patch sizes of $8 \times 8$ and extracted 40000 random patches for training from the 64 images of the first subject. We then followed the procedure described in the paper for denoising and extracted $8 \times 8$ patches in a sliding window with step size 1, denoised the patches with FISTA, and reconstructed the denoised image by averaging the denoised patches.

\subsubsection{Results}

We present the results at the three noise levels: $10\%$, $30\%$, $60\%$. We report the mean PSNR and FSIM on the 64 images, the PSNR and FSIM on the first image, and show the reconstructions obtained.

\begin{table*}[t]
\centering
\begin{tabular}{|c|c|c|c|c|}\hline
\textbf{Noise level} & \textbf{Best PSNR face 1} & \textbf{Best mean PSNR} & \textbf{Best  FSIM face 1} & \textbf{Best mean FSIM}\\ \hline
10\% & 23.937344 & 23.863287 & 0.840798 & 0.862061\\ \hline
30\% & 20.988780 & 19.249498 & 0.785824 & 0.807403\\ \hline
60\% & 17.480783 & 16.095463 & 0.703943 & 0.778116\\ \hline
\end{tabular}
\caption{Best PSNR and FSIM obtained with SeDiL on the first face and averaged on the 64 faces at the 3 noise levels.}
\end{table*}

% [N = 1 | nn = 0.100000] Best PSNR = 23.937344 for ll = 0.500000 kk = 0.500000 lf = 5.000000
% [N = 1 | nn = 0.100000] Best FSIM = 0.840798 for ll = 0.500000 kk = 0.500000 lf = 5.000000
% [N = 1 | nn = 0.300000] Best PSNR = 20.988780 for ll = 0.500000 kk = 0.500000 lf = 5.000000
% [N = 1 | nn = 0.300000] Best FSIM = 0.785824 for ll = 0.500000 kk = 0.500000 lf = 5.000000
% [N = 1 | nn = 0.600000] Best PSNR = 17.480783 for ll = 0.500000 kk = 0.500000 lf = 5.000000
% [N = 1 | nn = 0.600000] Best FSIM = 0.703943 for ll = 0.500000 kk = 0.500000 lf = 5.000000
% [N = 64 | nn = 0.100000] Best PSNR = 23.863287 for ll = 0.500000 kk = 0.500000 lf = 5.000000
% [N = 64 | nn = 0.100000] Best FSIM = 0.862061 for ll = 0.500000 kk = 0.500000 lf = 5.000000
% [N = 64 | nn = 0.300000] Best PSNR = 19.249498 for ll = 0.500000 kk = 0.500000 lf = 5.000000
% [N = 64 | nn = 0.300000] Best FSIM = 0.807403 for ll = 0.500000 kk = 0.500000 lf = 5.000000
% [N = 64 | nn = 0.600000] Best PSNR = 16.095463 for ll = 0.500000 kk = 0.500000 lf = 5.000000
% [N = 64 | nn = 0.600000] Best FSIM = 0.778116 for ll = 0.500000 kk = 0.500000 lf = 5.000000

% \subsubsection{10\% noise}

\begin{figure}
\begin{subfigure}[b]{.24\linewidth}
\includegraphics[width = \linewidth]{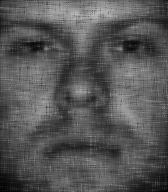} 
\caption{First face, 10\%}
\end{subfigure}\hfill
\begin{subfigure}[b]{.24\linewidth}
\includegraphics[width = \linewidth]{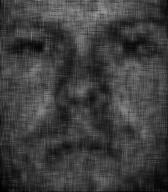} 
\caption{First face, 30\%}
\end{subfigure}\hfill
\begin{subfigure}[b]{.24\linewidth}
\includegraphics[width = \linewidth]{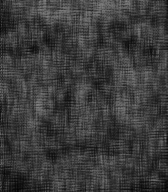} 
\caption{First face, 60\%}
\end{subfigure}\hfill
\begin{subfigure}[b]{.24\linewidth}
\includegraphics[width = \linewidth]{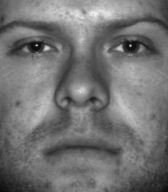}
\caption{Original face 1}
\end{subfigure}
\begin{subfigure}[b]{.24\linewidth}
\includegraphics[width = \linewidth]{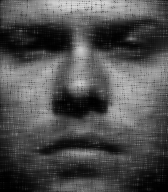} 
\caption{Third face, 10\%}
\end{subfigure}\hfill
\begin{subfigure}[b]{.24\linewidth}
\includegraphics[width = \linewidth]{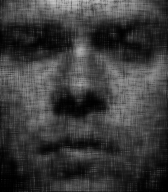} 
\caption{Third face, 30\%}
\end{subfigure}\hfill
\begin{subfigure}[b]{.24\linewidth}
\includegraphics[width = \linewidth]{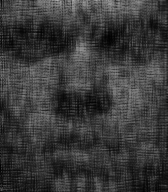} 
\caption{Third face, 60\%}
\end{subfigure}\hfill
\begin{subfigure}[b]{.24\linewidth}
\includegraphics[width = \linewidth]{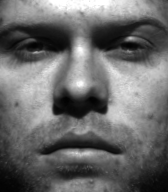}
\caption{Original face 3}
\end{subfigure}
\caption{Best reconstructions obtained with SeDiL on the first and third illumination according to the best overall PSNR score.}
\label{fig:visual_sedil_psnr}
\end{figure}

\begin{figure}
\begin{subfigure}[b]{.24\linewidth}
\includegraphics[width = \linewidth]{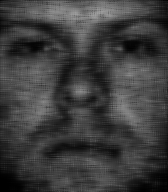} 
\caption{First face, 10\%}
\end{subfigure}\hfill
\begin{subfigure}[b]{.24\linewidth}
\includegraphics[width = \linewidth]{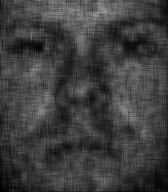} 
\caption{First face, 30\%}
\end{subfigure}\hfill
\begin{subfigure}[b]{.24\linewidth}
\includegraphics[width = \linewidth]{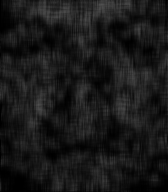} 
\caption{First face, 60\%}
\end{subfigure}\hfill
\begin{subfigure}[b]{.24\linewidth}
\includegraphics[width = \linewidth]{appendices/yale_subject_1_illumination_1}
\caption{Original face 1}
\end{subfigure}
\begin{subfigure}[b]{.24\linewidth}
\includegraphics[width = \linewidth]{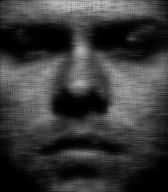} 
\caption{Third face, 10\%}
\end{subfigure}\hfill
\begin{subfigure}[b]{.24\linewidth}
\includegraphics[width = \linewidth]{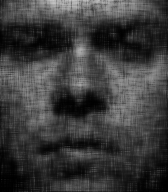} 
\caption{Third face, 30\%}
\end{subfigure}\hfill
\begin{subfigure}[b]{.24\linewidth}
\includegraphics[width = \linewidth]{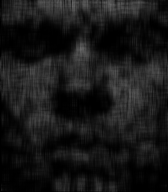} 
\caption{Third face, 60\%}
\end{subfigure}\hfill
\begin{subfigure}[b]{.24\linewidth}
\includegraphics[width = \linewidth]{appendices/yale_subject_1_illumination_3}
\caption{Original face 3}
\end{subfigure}
\caption{Best reconstructions obtained with SeDiL on the first and third illumination according to the best overall FSIM score.}
\label{fig:visual_sedil_fsim}
\end{figure}

% \clearpage
\subsubsection{Comments}

As expected, the method is not robust to gross corruption. In all cases, the best results were obtained for $\kappa_0 = \lambda_0 = 0.5$ and $\lambda_F = 5$, indicative of the lack of robustness.

\section{Time influence of the model parameters}
\label{appendix:time_influence}
To investigate the impact of the dimensions of the data and of the model parameters we devise an experiment on synthetic data (of the same kind used to validate our approach in Section 7.1 of the paper).

Assuming a dataset of $N$ random low-rank matrices $\mathbf{X}_i$ of dimension $m \times n$, and the RKCA model with degree 2 regularisation, we vary independently:

\begin{enumerate}
    \item The first dimension $m$ of the data (number of rows per observation, or height of the images) in [50, 100, 150, 200, 250, 500, 750, 1000] for $N = 100$ (Figure \ref{fig:impact_m})
    \item The second dimension $n$ (number of columns, or width of the images) in [50, 100, 150, 200, 250, 500, 750, 1000] for $N = 100$ (Figure \ref{fig:impact_n})
    \item The number of observations $N$ in [10, 25, 50, 100, 150, 200, 250, 500, 750, 1000] for $m = n = 100$ (Figure \ref{fig:impact_N})
    \item The upper bound we place on the rank of each reconstruction $r$ in [5, 10, 15, 20, 25, 30, 50, 75, 100, 150, 200, 500, 750, 1000] for $N = 100$ and $m = n = 1000$ (Figure \ref{fig:impact_r})
\end{enumerate}
We set $r = 50$ for the experiments on the data dimensions, and we let $\lambda = \frac{1}{\sqrt{N \max(n, m)}}$.

We set the convergence threshold to $1e-5$. All experiments were conducted in isolation on a \textit{Standard E8s v3} instance in Microsoft Azure Cloud.

\begin{figure}[h!]
    \centering
    \includegraphics[width=.475\linewidth]{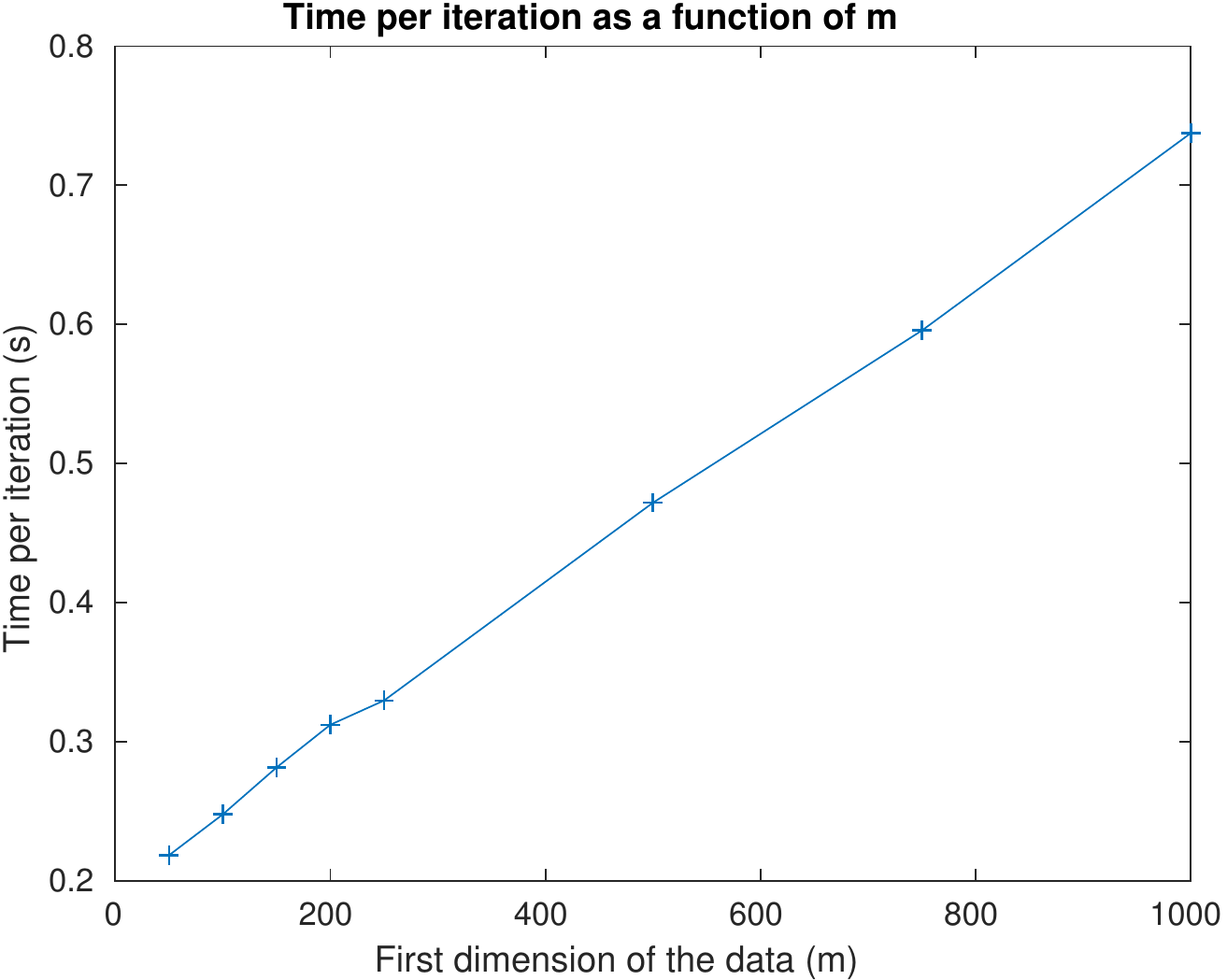}
    \includegraphics[width=.475\linewidth]{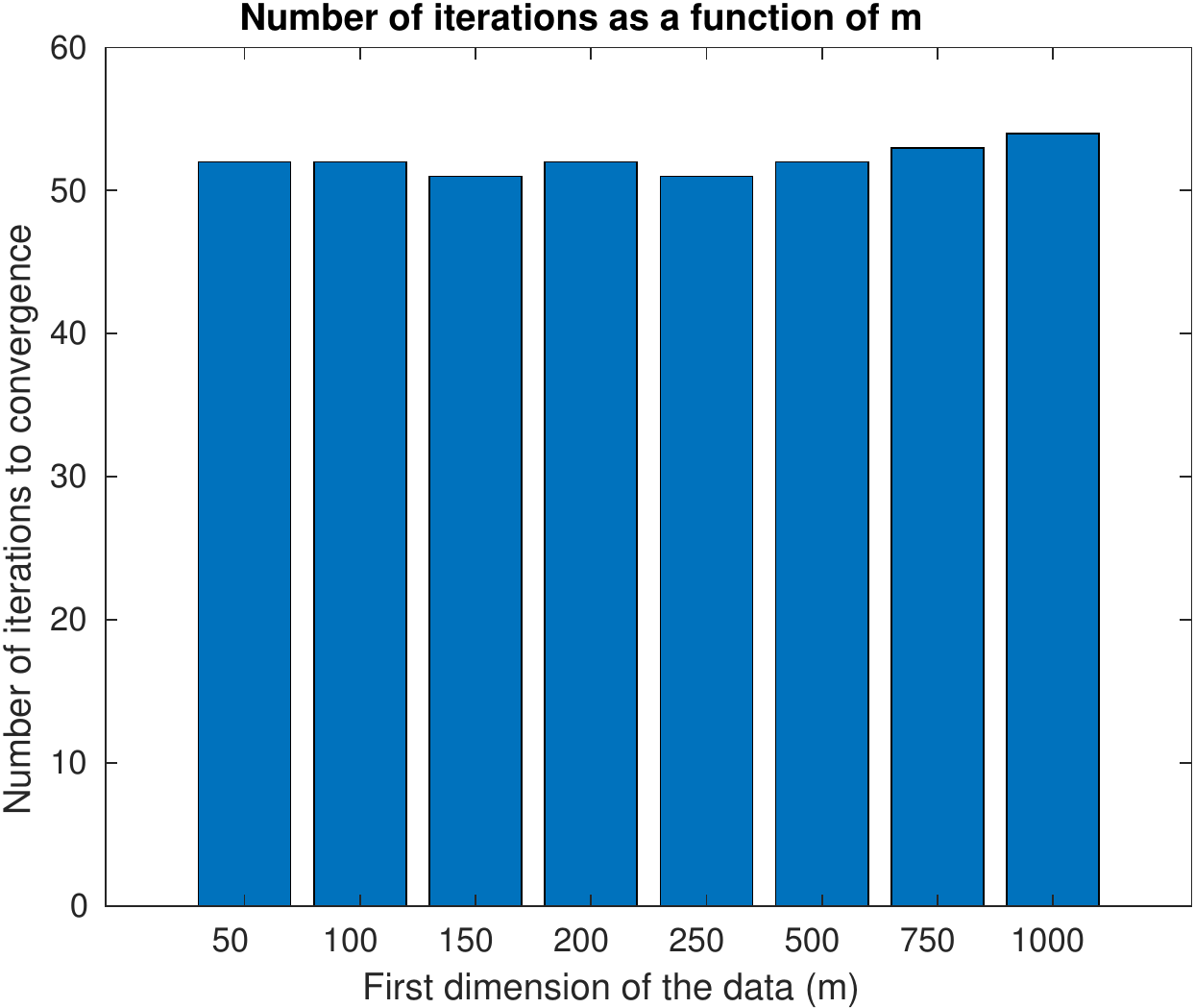}
    \caption{Experimental verification of the linear impact of the first data dimension $m$ on the run-time per iteration of RKCA with degree 2 regularisation. Left: time per iteration in seconds, right: number of iterations to convergence at the $1e-5$ threshold.}
    \label{fig:impact_m}
\end{figure}

\begin{figure}[h!]
    \centering
    \includegraphics[width=.475\linewidth]{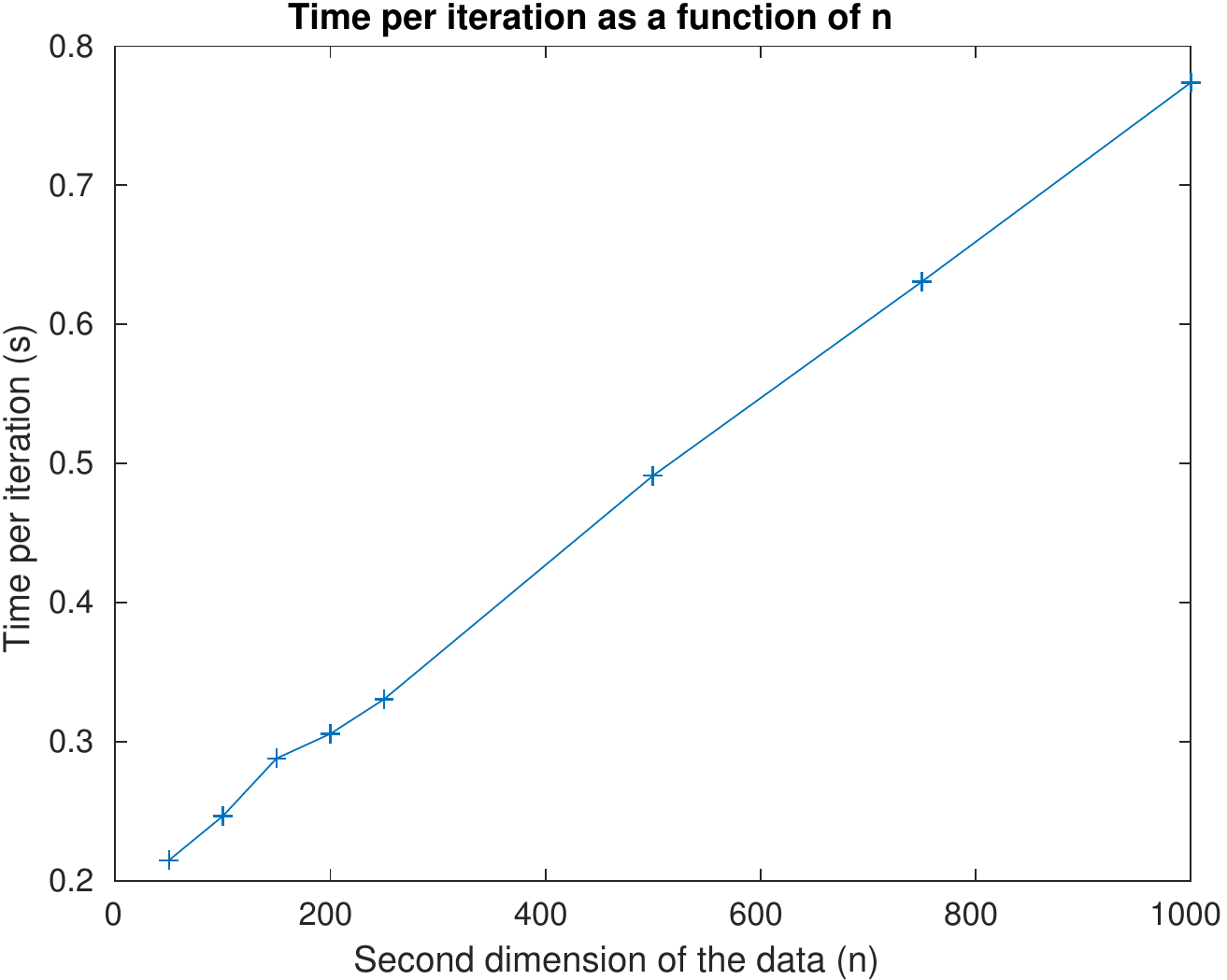}
    \includegraphics[width=.475\linewidth]{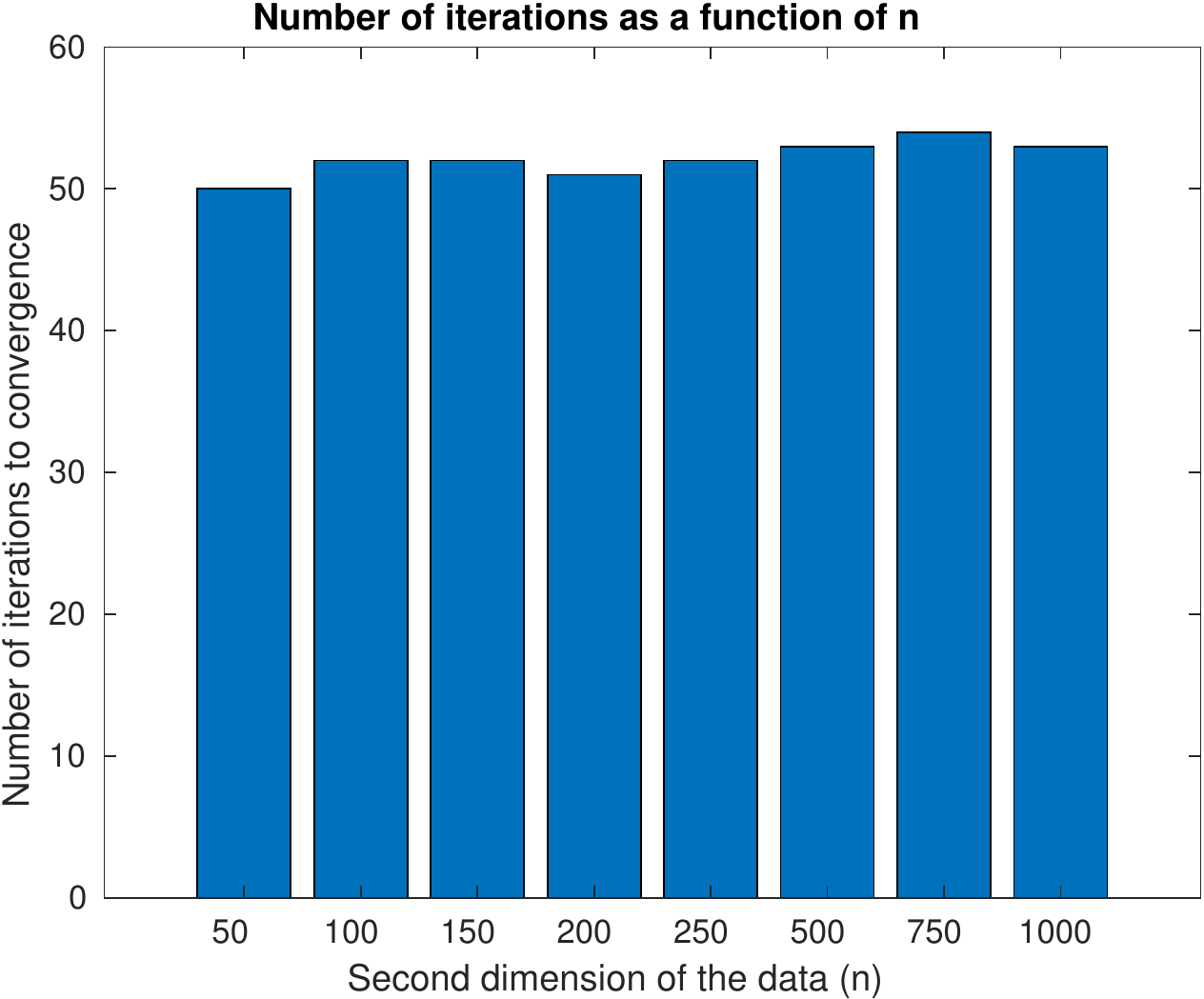}
    \caption{Experimental verification of the linear impact of the second data dimension $n$ on the run-time per iteration of RKCA with degree 2 regularisation. Left: time per iteration in seconds, right: number of iterations to convergence at the $1e-5$ threshold.}
    \label{fig:impact_n}
\end{figure}

\begin{figure}[h!]
    \centering
    \includegraphics[width=.475\linewidth]{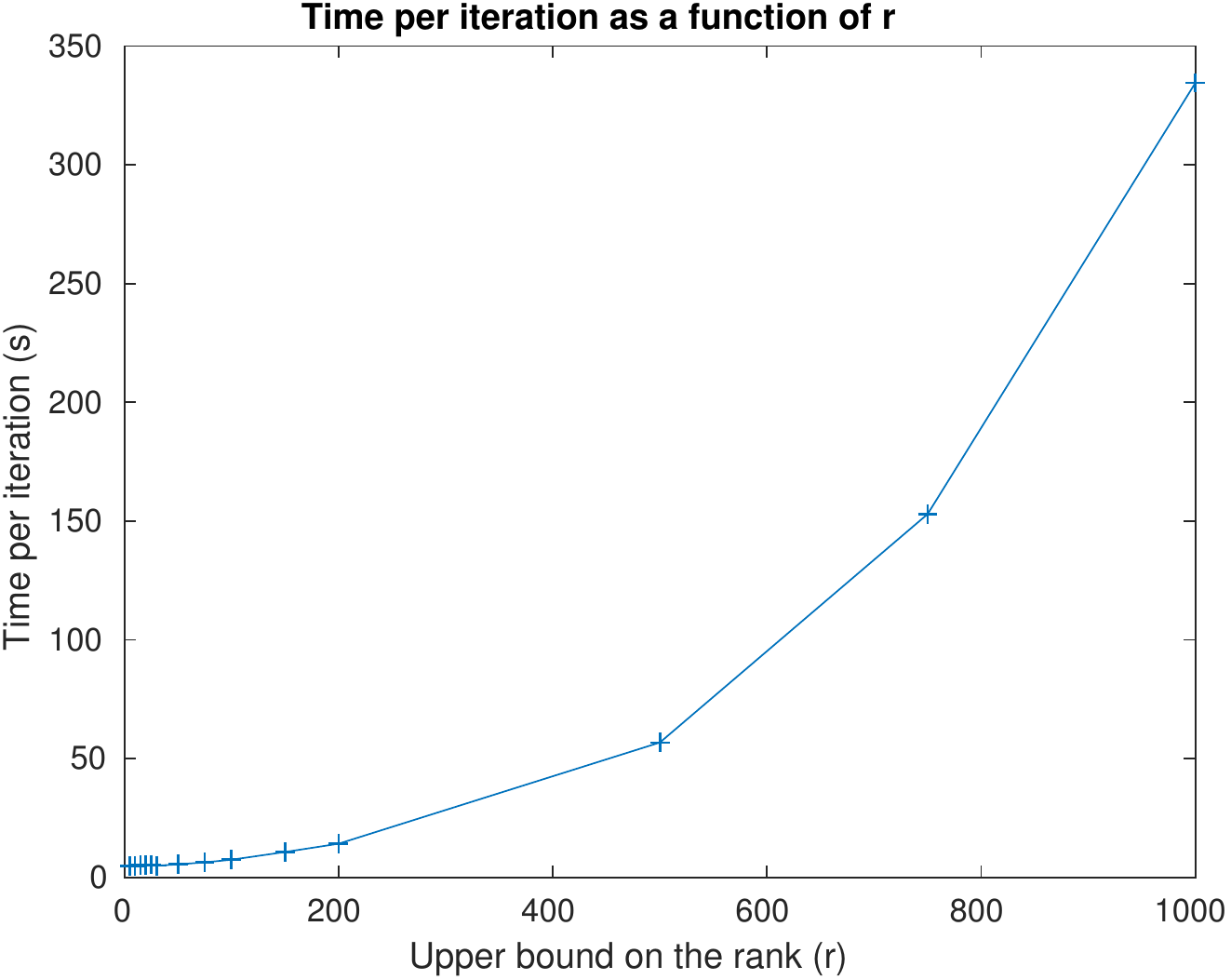}
    \includegraphics[width=.475\linewidth]{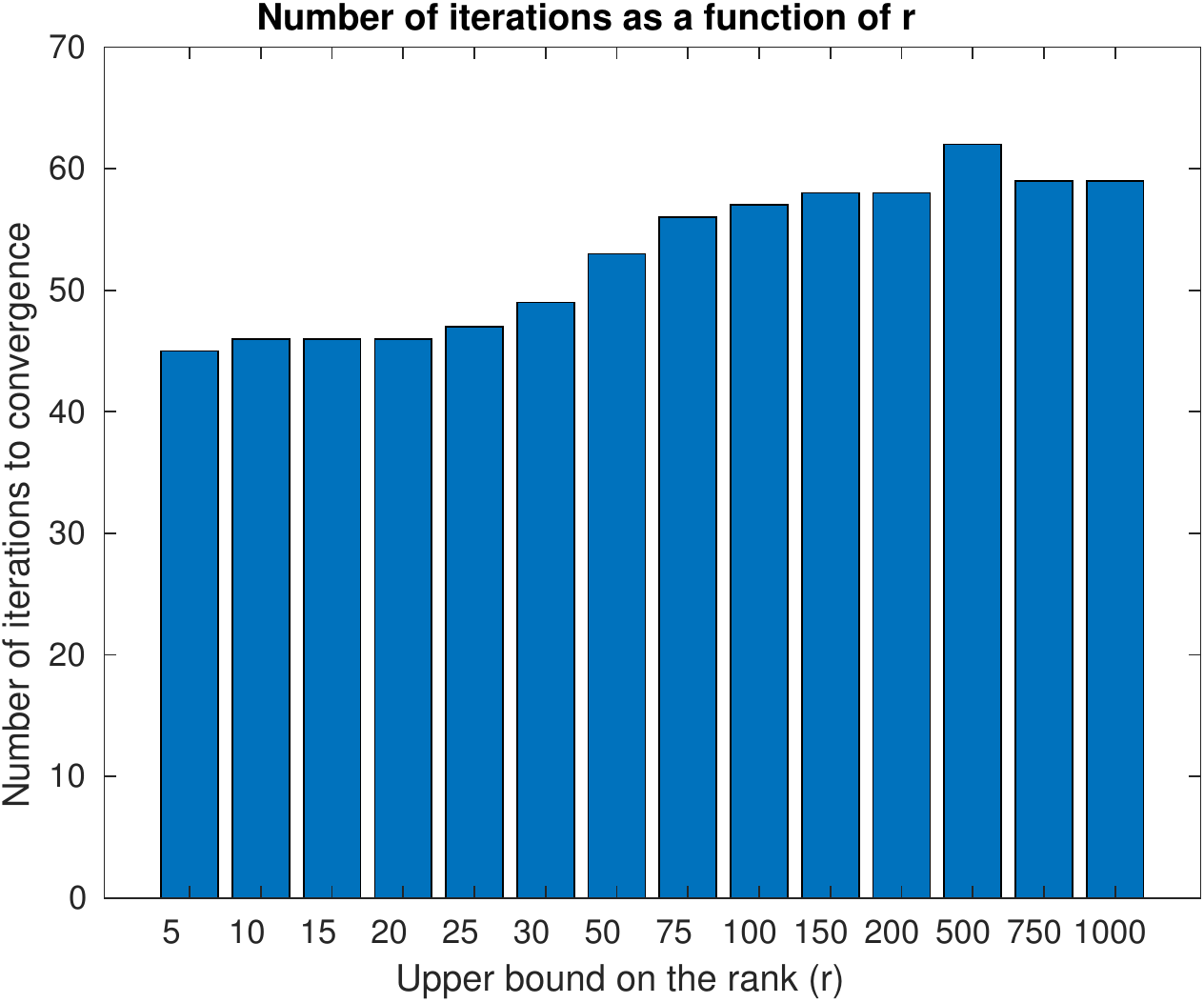}
    \caption{Experimental verification of the quadratic impact of the rank upper bound $r$ on the run-time per iteration of RKCA with degree 2 regularisation. Left: time per iteration in seconds, right: number of iterations to convergence at the $1e-5$ threshold.}
    \label{fig:impact_r}
\end{figure}

\begin{figure}[h!]
    \centering
    \includegraphics[width=.475\linewidth]{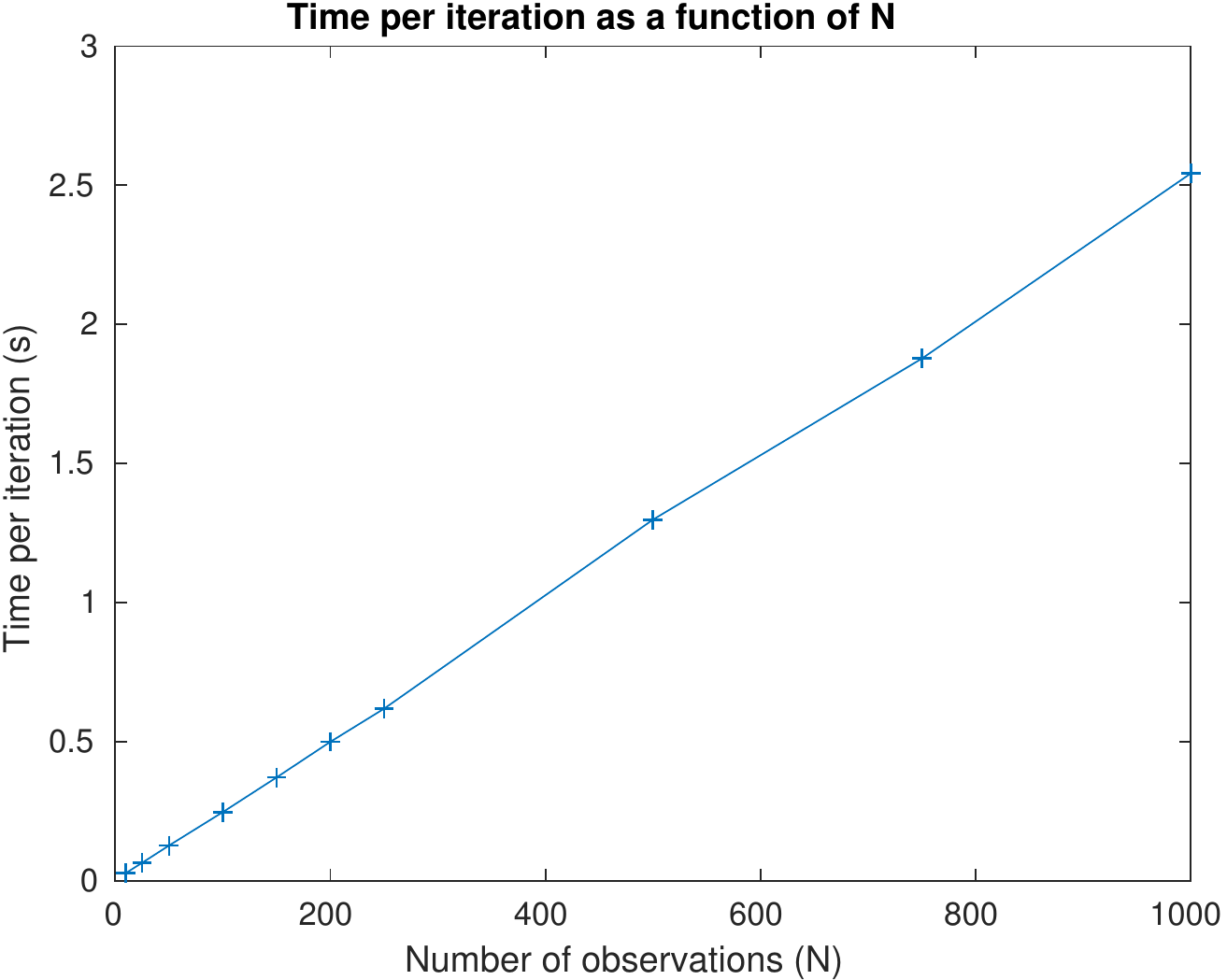}
    \includegraphics[width=.475\linewidth]{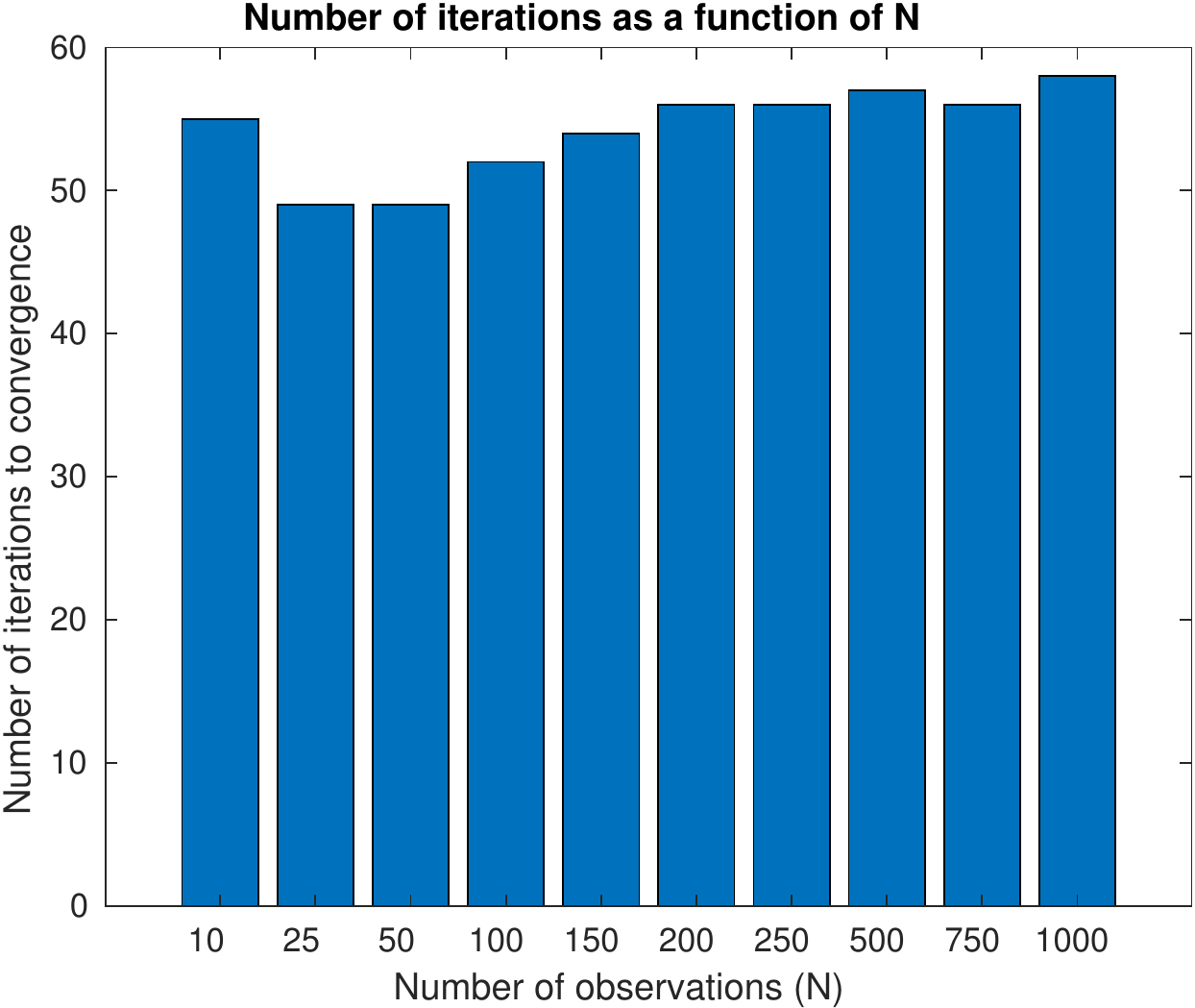}
    \caption{Experimental verification of the linear impact of the number of tensor observations $N$ on the run-time per iteration of RKCA with degree 2 regularisation. Left: time per iteration in seconds, right: number of iterations to convergence at the $1e-5$ threshold.}
    \label{fig:impact_N}
\end{figure}

We verify experimentally the derivations made for the time complexity. We see that the time is linearly dependent on each dimension of the observations as well as on the number of observations. The dependency on the upper bound on the rank is quadratic, as claimed in Section 6 of the paper. All runs reached convergence at the $1e-5$ threshold.

\section{Runtime comparison}
\label{appendix:runtime}

We do not provide actual run-time values in the paper as we do not intend to benchmark the different implementations of the methods tested. Some of the implementations provided by the authors of the algorithms we compared make use of optimised compiled C, C++, or Fortran extensions, or fast SVD implementations, to speed up specific bottlenecks. This makes the comparison unreliable as most other codes are not written with performance in mind.

% For indicative purposes, we provide comparative plots of the runtime as a function of the number of observations $N$ and of the number of pixels per observations $nm$ with the competing models tested in Section 7 of the paper (excluding \textit{Deep Image Prior}).

For indicative purposes, we compare the total run-time of the different algorithms, including Low-rank Representation with Inexact ALM \cite{g._liu_robust_2013} added for this revision.

Assuming a dataset of $N$ random low-rank matrices $\mathbf{X}_i$ of dimension $n \times n$, and the RKCA model with degree 2 regularisation, we vary independently:
\begin{enumerate}
    \item The number of tensor observations $N$ in [10, 25, 50, 100, 150, 200, 250, 500, 750, 1000] for $n = 100$ (Figure \ref{fig:comp_N})
    \item The number of elements per observation $n^2$ with $n$ in [15, 25, 50, 100, 150, 200, 250, 500, 750, 1000] for $N = 100$ (Figure \ref{fig:comp_NM})
\end{enumerate}
We choose square observations of size $m^2$. For RKCA, we test both $r = n$ (\textit{RKCA}) and $r = 15$ fixed (\textit{RKCA Fixed rank}).

\begin{figure}[h!]
    \centering
    \includegraphics[width=.75\linewidth]{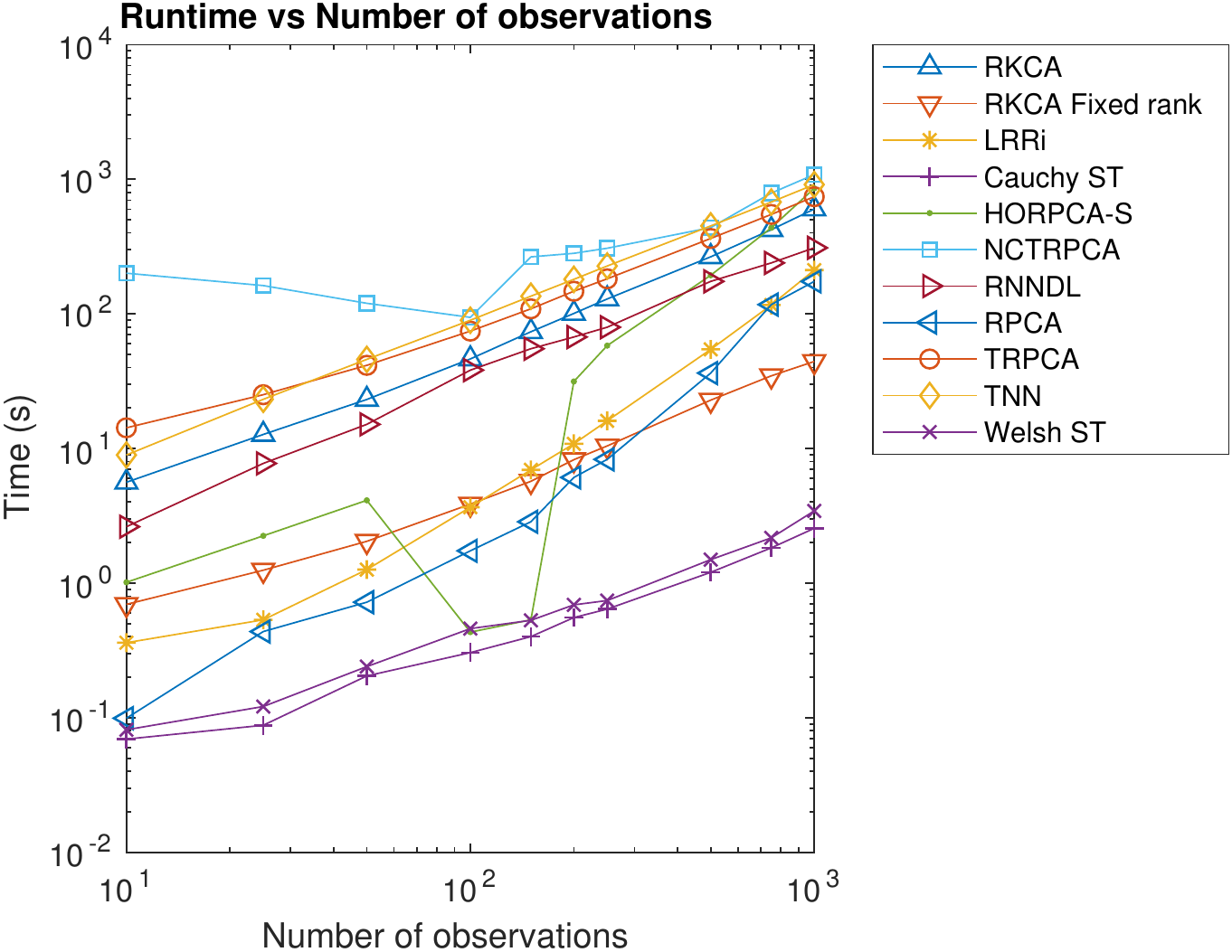}
    \caption{Runtime as a function of the number of observations.}
    \label{fig:comp_N}
\end{figure}

\begin{figure}[h!]
    \centering
    \includegraphics[width=.75\linewidth]{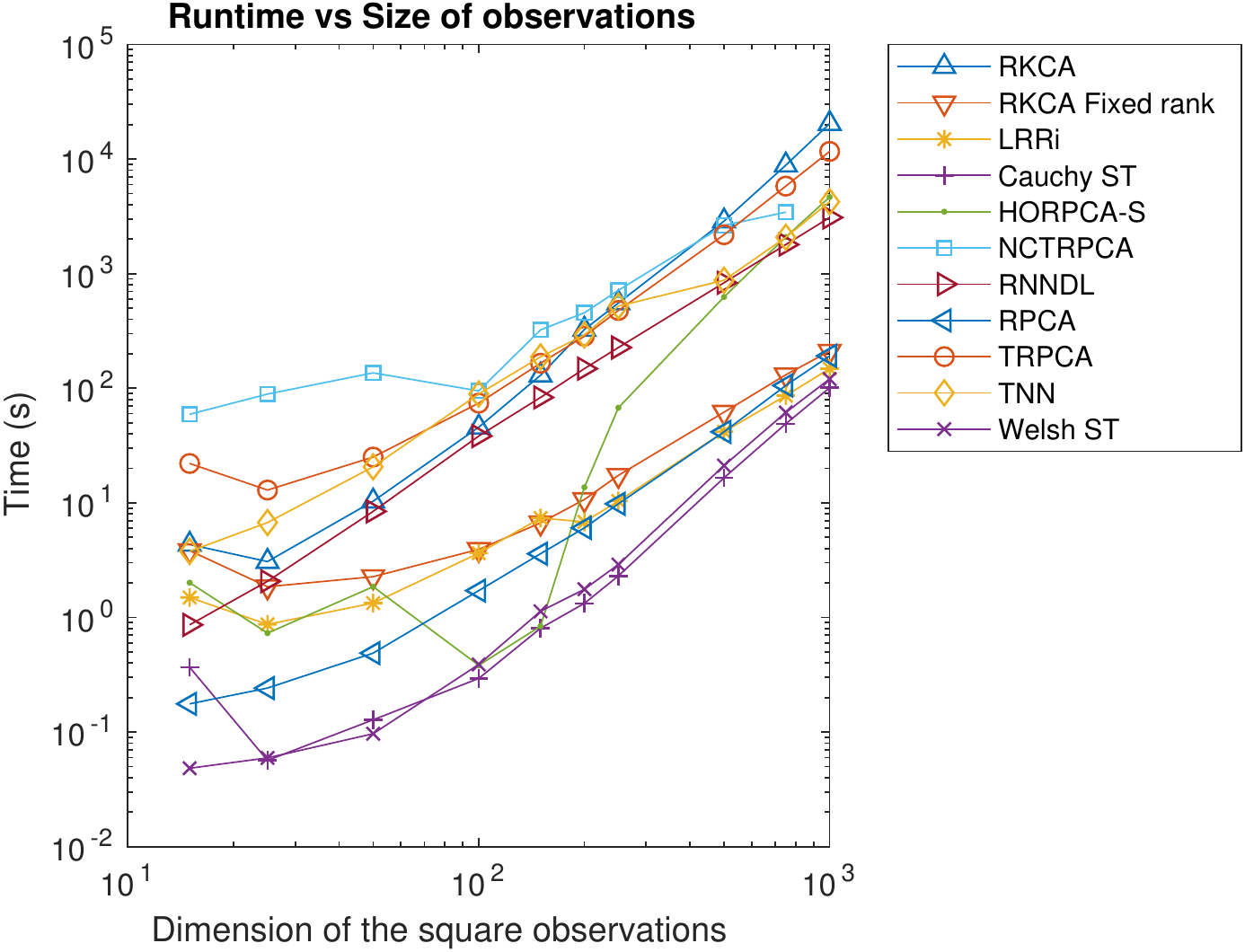}
    \caption{Runtime as a function of the size of the observations.}
    \label{fig:comp_NM}
\end{figure}

We can see the importance of a-priory knowledge on the rank of the latent low-rank tensor: the informed model (RKCA Fixed rank) has the third fastest run-time for the three largest number of observations, and closely matches competing implementations in the large-observations test. It is to be noted that the fastest methods, Cauchy ST and Welsh ST use fast linearized iterative schemes and rely on MATLAB's optimisation toolbox.

% \bibliographystyle{IEEEtran}
% % argument is your BibTeX string definitions and bibliography database(s)
% \bibliography{IEEEabrv,bibliography}

% you can choose not to have a title for an appendix
% if you want by leaving the argument blank

% that's all folks
\end{document}